\documentclass[nonacm]{acmart}

\geometry{
twoside=false,total={6.5in, 8.5in}
}

\setcopyright{acmlicensed}
\copyrightyear{2024}
\acmYear{2024}
\acmDOI{XXXXXXX.XXXXXXX}

\acmConference[KDD '24]{30th ACM
SIGKDD Conference on Knowledge Discovery and Data Mining}{August 25--29,
  2024}{Barcelona, Spain}
\acmISBN{978-1-4503-XXXX-X/18/06}

\usepackage{microtype}
\usepackage{graphicx}
\usepackage{subcaption}
\usepackage{booktabs} 
\usepackage{multirow}
\usepackage{makecell}

\usepackage{amsmath,bm}

\usepackage{hyperref}
\usepackage{nicefrac}

\usepackage{amsmath}
\usepackage{mathtools}
\usepackage{amsthm}

\usepackage[capitalize,noabbrev]{cleveref}

\theoremstyle{plain}
\newtheorem{theorem}{Theorem}[section]

\newtheorem{lemma}[theorem]{Lemma}

\theoremstyle{definition}
\newtheorem{definition}[theorem]{Definition}
\newtheorem{assumption}[theorem]{Assumption}

\crefname{challenge}{Challenge}{Challenges}
\theoremstyle{remark}
\newtheorem{remark}[theorem]{Remark}

\usepackage[textsize=tiny]{todonotes}
\usepackage[inline]{enumitem}

\DeclareMathOperator*{\argmin}{arg\,min}

\usepackage{siunitx}

\newcommand{\vcd}{\textnormal{VC}}

\newcommand{\var}{\textnormal{Var}}

\crefname{assumption}{Assumption}{Assumptions}

\usepackage{placeins}

\usepackage{thmtools}
\usepackage{thm-restate}

\sloppy
\allowdisplaybreaks

\begin{document}
\title{Evaluating the Effectiveness of Index-Based Treatment Allocation}

\author{Niclas Boehmer}
\authornote{These authors contributed equally.}
\affiliation{%
  \institution{Harvard University}
  \country{USA}}

\author{Yash Nair}
\authornotemark[1]
\affiliation{%
  \institution{Stanford University}
  \country{USA}
}

\author{Sanket Shah}
\authornotemark[1]
\affiliation{%
  \institution{Harvard University}
  \country{USA}
}

\author{Lucas Janson}
\affiliation{%
  \institution{Harvard University}
  \country{USA}
}

\author{Aparna Taneja}
\affiliation{%
  \institution{Google Research India}
  \country{India}
}

\author{Milind Tambe}
\affiliation{%
  \institution{Harvard University}
  \institution{Google Research}
  \country{USA}
}

\renewcommand{\shortauthors}{Boehmer, Nair, and Shah et al.}

\begin{abstract}
When resources are scarce, an allocation policy is needed to decide who receives a resource. 
This problem occurs, for instance, when allocating scarce medical resources and is often solved using modern ML methods. 
This paper introduces methods to evaluate index-based allocation policies---that allocate a fixed number of resources to those who need them the most---by using data from a randomized control trial. Such policies create dependencies between agents, which render the assumptions behind standard statistical tests invalid and limit the effectiveness of estimators. Addressing these challenges, we translate and extend recent ideas from the statistics literature to present an efficient estimator and methods for computing asymptotically correct confidence intervals. This enables us to effectively draw valid statistical conclusions, a critical gap in previous work. Our extensive experiments validate our methodology in practical settings, while also showcasing its statistical power. We conclude by proposing and empirically verifying extensions of our methodology that enable us to reevaluate a past randomized control trial to evaluate different ML allocation policies in the context of a mHealth program, drawing previously invisible conclusions.
\end{abstract}

\begin{CCSXML}

<ccs2012>
   <concept>
       <concept_id>10002950.10003648.10003662</concept_id>
       <concept_desc>Mathematics of computing~Probabilistic inference problems</concept_desc>
       <concept_significance>500</concept_significance>
       </concept>
   <concept>
       <concept_id>10002950.10003648.10003662.10003666</concept_id>
       <concept_desc>Mathematics of computing~Hypothesis testing and confidence interval computation</concept_desc>
       <concept_significance>500</concept_significance>
       </concept>
   <concept>
       <concept_id>10010405.10010444</concept_id>
       <concept_desc>Applied computing~Life and medical sciences</concept_desc>
       <concept_significance>300</concept_significance>
       </concept>
 </ccs2012>
\end{CCSXML}

\ccsdesc[500]{Mathematics of computing~Probabilistic inference problems}
\ccsdesc[500]{Mathematics of computing~Hypothesis testing and confidence interval computation}
\ccsdesc[300]{Applied computing~Life and medical sciences}

\keywords{causal inference, scarce resource allocation, policy evaluation, randomized control trials, public health, social good}

\maketitle

\section{Introduction}\label{introduction}
In treatment allocation, we have a limited number of intervention resources. The challenge that arises is to devise an allocation policy that decides to whom we allocate them to maximize social welfare. 
Treatment allocation 
finds applications in various scenarios, for instance, when 
\begin{enumerate*}[label=(\roman*)]
  \item  allocating scarce medical resources such as medication \cite{ayer2019prioritizing,deo2013improving} or screening tools \cite{lasry2011model,deo2015planning,bastani2021efficient,lee2019optimal} 
  \item  scheduling maintenance or inspection visits \cite{yeter2020risk,gerum2019data,luque2019risk}, or 
  \item  allocating spots in support programs \cite{mac2019efficacy,mate2022field,verma2023restless}.
\end{enumerate*}
Accordingly, treatment allocation is a common problem in statistics and economics \cite{bhattacharya2012inferring,kitagawa2018should}. 
More recently, ML methods have started to be increasingly used for treatment allocation and thus the problem has gained popularity in the ML community \cite{DBLP:conf/kdd/KillianBST21,killian2019learning,kunzel2019metalearners,bastani2021efficient,DBLP:conf/kdd/FoucheKB19,DBLP:conf/nips/MateKXPT20,DBLP:journals/jmlr/ZhaoLNSS19}. 
In resource-limited settings, allocations are commonly made based on individualized measures of risk \cite{kent2016risk,mac2019efficacy,perdomo2023difficult} or treatment effects \cite{wager2018estimation,kunzel2019metalearners,DBLP:conf/ijcai/DanassisVKTT23,verma2023expanding}, often predicted using modern ML techniques.
Both of these and many other strategies can be captured by so-called
index-based allocation policies. 
Given a fixed number of resources, these policies first compute an index (e.g., risk) for each individual, and subsequently allocate the resources to the individuals with the lowest index.
We present and evaluate methods for causal inference for the effectiveness of index-based allocation policies using randomized control trials (RCTs), the gold standard for analyzing treatment~effects~\cite{hariton2018randomised}. 

While our work is generally applicable and relevant to a wide range of application domains, it is motivated in particular by a deployed index-based allocation policy in a mobile health program organized by the Indian NGO ARMMAN 
\cite{mate2022field,verma2023restless,verma2023expanding,DBLP:conf/kdd/Tambe22}.
ARMMAN's mMitra program provides critical preventive care information to enrolled pregnant women and mothers of infants through automated voice messages. 
To promote engagement, each week, a limited number of beneficiaries can be called by health workers to provide them with additional information and guidance.

\citet{mate2022field} and \citet{verma2023restless} conducted RCTs to evaluate the effectiveness of index-based allocation policies to allocate live service calls in mMitra based on different ML paradigms. 
Evaluating these trials turns out to be a significant research challenge. This is because whether or not an individual gets selected for treatment by a policy depends on the other beneficiaries in the population.
The resulting dependence between beneficiaries renders central assumptions behind standard statistical tests invalid and leads to the low statistical power of estimators (see \Cref{challenge}).
\citet{mate2022field} and \citet{verma2023restless} note that their methodology (to which we refer as the base estimator; see \Cref{sec:PA}) comes without rigorous empirical evidence or theoretical guarantees on the validity of computed confidence intervals and drawn statistical conclusions.
Addressing this gap, \emph{we are the first to provide the necessary tools to effectively draw reliable statistical conclusions about the quality of index-based allocation policies by describing a new estimator together with customized statistical inference techniques}.

In more detail, the contributions of the paper are as follows. 
In \textbf{\Cref{sec:prev}}, we describe the methodology used by \citet{mate2022field} and \citet{verma2023restless}.
Moreover, we describe recent work by \citet{imai2023statistical} from the statistics literature, on top of which many of our ideas and results are built.
In \textbf{\Cref{app:estimation}}, translating ideas from \citet{imai2023statistical}, we present the \emph{subgroup estimator}, which computes the average treatment effect by comparing those who are selected by the policy in the policy arm of the RCT to those the policy would have selected in the control arm of the RCT.
In \textbf{\Cref{hy:Sub}}, using results from \citet{imai2023statistical}, we conclude the asymptotic normality of the subgroup estimator and describe new methods for computing asymptotically valid confidence intervals for evaluating and comparing policies. We also argue why standard tests still produce good results for the subgroup estimator.
In \textbf{\Cref{hy:Base}}, we establish the asymptotic normality of the base estimator that allows us to compute asymptotically valid confidence intervals using a new proof strategy via empirical process theory \cite{vaart2023empirical}. 

In our experimental \textbf{\Cref{app:experiment}}, we use synthetic and real-world data to build various simulators that simulate an individual's behavior as a Markov Decision Process. 
We successfully verify that our asymptotic theoretical guarantees regarding the validity of confidence intervals for our estimators empirically extend to a variety of practical cases.
Moreover, we demonstrate that the subgroup estimator has typically a significantly higher statistical power than the base estimator, as we find that computed confidence intervals are usually more than halved.
In fact, the difference between the two can be even more pronounced, for instance when the budget is very small the difference gets as large as a factor of $8$.
This finding highlights that our methodology allows for a more flexible  study design: 
As the base estimator has a very low statistical power in case only a few treatments are allocated, \citet{verma2023restless} distributed many resources in their field trial, a strategy that is both expensive (and sometimes infeasible) and proves very challenging in the evaluation stage as it reduces the observed average treatment effect. 
These are problems that largely disappear when using the new subgroup estimator.

Lastly, in \textbf{\Cref{sec:rs}}, we turn to the field trial conducted by \citet{verma2023restless}. For this, we need to extend our methodology, e.g., accounting for the sequential allocation of resources and covariate correction, beyond the case covered by our theoretical guarantees from \Cref{sec:methods}. We empirically verify the validity of computed confidence intervals and reevaluate the field trial conducted by \citet{verma2023restless}. 
We confirm previous conclusions obtained using methods whose reliability was unclear to the authors \cite{verma2023restless}. In addition, we also identify previously hidden insights by making use of the increased flexibility and statistical power of the subgroup estimator.

\section{Preliminaries}\label{prelim:SS}

Let $A=\{0,1\}$ be the set of actions where $1$ is the active action (treatment\footnote{We use the terms \emph{treatment} and \emph{intervention} interchangeably.} given) and $0$ is the passive action (no treatment given).
An agent $i\in [n]$ is characterized by covariates $\mathbf{x}_i\in \mathcal{X}$  and a reward function $R_i:A\to \mathbb{R}$ that returns the reward generated by the agent given the action assigned to it.\footnote{This is equivalent to the Nayman-Rubin potential outcomes model \cite{imbens2015causal}.} 
Agents are drawn i.i.d. from a probability distribution $P$ defined over the space of covariates and reward functions $\mathcal{X}\times (A\to \mathbb{R})$.
We write $(\mathbf{x}_{i,n},R_{i,n})\sim P$ to denote a set $[n]$ of agents being sampled i.i.d. from the probability distribution $P$ and $\mathbf{X}_n:=(\mathbf{x}_{i,n})_{i\in [n]}$ to denote the covariates of these $n$ agents.
If not stated otherwise, expectation and probabilities in this paper are over groups of $n$ agents, i.e., $(\mathbf{x}_{i,n},R_{i,n})\sim P$.

An allocation policy $\pi$ gets as input the covariates $\mathbf{X}_n\in \mathcal{X}^n$ of $n$ agents and a treatment fraction $\alpha$ and returns $\lceil\alpha n\rceil$ agents to which the active action is applied.\footnote{As $n$ is typically fixed, we could alternatively also specify the \emph{number} of agents receiving a treatment. Both formulations are equivalent, but the fraction formulation will prove advantageous in the presentation of our theoretical analysis in \Cref{sec:methods}.}
We denote as $J^{\pi(\mathbf{X}_n,\alpha)}_i$ the indicator variable that denotes whether agent $i\in [n]$  gets assigned a treatment as per policy $\pi$, i.e., $J^{\pi(\mathbf{X}_n,\alpha)}_i=1$ if $i\in \pi\left(\mathbf{X}_n,\alpha\right)$ and $0$ otherwise.
An index-based allocation policy $\pi^\Upsilon$ is defined by a function $\Upsilon:\mathcal{X}\to \mathbb{R}$ mapping covariates to an index. 
Given  $\mathbf{X}_n\in \mathcal{X}^n$ and a treatment fraction $\alpha \in [0,1]$, $\pi^\Upsilon$ returns the $\lceil\alpha n\rceil$ agents with the lowest index $\Upsilon(\mathbf{x}_i)$. 
Moreover, given $\mathbf{X}_n\in \mathcal{X}^n$ and a threshold $\lambda\in \mathbb{R}$, let $\upsilon^\Upsilon(\mathbf{X}_n,\lambda)$ return the set of agents $i\in [n]$ with an index value $\Upsilon(\mathbf{x}_i)$ smaller or equal to $\lambda$ (note that this policy does not satisfy the definition of an allocation policy, as the number of agents that receive an active action is not fixed). To highlight this difference, we refer to the policy that acts on everyone in $\upsilon^\Upsilon$ as a \emph{threshold policy}.

\paragraph{Statistics Notation}
We now introduce terminology necessary to formalize our methodology. 
An estimand is the quantity we want to measure and an estimator is a value ``approximating'' the estimand, computed from the available observed data using some procedure.
Estimands' names will always involve a $\tau$, while estimators' names will always involve a $\theta$. 
A sequence of random variables $(A_n)_{n>0}$ with cumulative distributions $\left(G_n(a)\right)_{n>0}$ \emph{converges in distribution} to a random variable $A$ with cumulative distribution $G$ if $\lim_{n\to \infty} G_n(a)=G(a)$ for all $a\in \mathbb{R}$ at which $G$ is continuous, in which case we write $A_n \overset{d}{\rightarrow} A$ (note that in the context of this paper, $n$ will typcially be the number of samples we observe).
A sequence $(A_n)_{n>0}$ \emph{converges in probability} to $A$ ($A_n \overset{p}{\rightarrow} A$) if $\lim_{n\to \infty} \mathbb{P}(|A_n-A|\geq \epsilon)=0$ for all $\epsilon>0$. 
An estimator $\theta_n$ of an estimand $\tau_n$ is (weakly) consistent if 
$\theta_n-\tau_n \overset{p}{\rightarrow} 0$. We denote as $\mathcal{N}(\mu,\sigma^2)$ the normal distribution with mean $\mu$ and variance $\sigma^2$. 
Let $q_\alpha$ be the $\alpha$-quantile of the cumulative distribution function $F_{\Upsilon}(\lambda)=\mathbb{P}_{(\mathbf{x},R)\sim P}[\Upsilon(\mathbf{x})\leq \lambda]$  of indices, i.e., the smallest number so that an expected $\alpha$-fraction of agents have an index below $q_{\alpha}$.

\section{Challenges and Previous  Work} \label{sec:prev}
We describe previous approaches to evaluating index-based allocation policies (\Cref{sec:PA}) and why they fall short to address the problem (\Cref{challenge}). In \Cref{sec:imaili},  we describe the work of \citet{imai2023statistical}, which we will refer to throughout the rest of the paper.

\subsection{Previous Approaches and RCT Design} \label{sec:PA}
Due to resource scarcity, our basic setup which has also been used in previous work \cite{mate2022field,verma2023restless,pmlr-v202-mate23a} assumes a modified RCT design, where treatment is allocated according to the evaluated allocation policy: We have access to the results of a randomized control trial with a policy arm (p) containing $n$ agents $(\mathbf{x}^p_i,R^p_i)_{i\in [n]}$ sampled i.i.d. from $P$ on which we run our policy $\pi$. As the outcome, we observe $(\mathbf{x}^p_i,R^p_i(J^p_i))_{i\in [n]}$, where $J^p_i:=J^{\pi(\mathbf{X}_n^p,\alpha)}_i$. 
Moreover, we have access to a control arm (c) of $n$ agents $(\mathbf{x}^c_i,R^c_i)_{i\in [n]}$ sampled i.i.d. from $P$ for which we observe $(\mathbf{x}^c_i,R^c_i(0))_{i\in [n]}$. 
Note that for both the control and policy arm we naturally can only observe the agent's reward according to the action applied to them (e.g., $R(0)$ for all agents in the control arm) while the counterfactual remains unobserved.\footnote{We will occasionally also feature \emph{standard} RCTs where there is a treatment arm where everyone gets treated, in contrast to the policy arm in our setting.}

Previous work \cite{mate2022field,verma2023restless,pmlr-v202-mate23a} has evaluated these RCTs by estimating the average benefit that an agent derives from being a member of the policy arm instead of the control arm (independent of whether agents have been selected by the allocation policy or not). Accordingly, they estimate policies' effectiveness as the difference between the expected reward generated by an (arbitrary) agent from the policy arm compared to the expected reward generated by an (arbitrary) agent from the control arm:
\begin{align*}
    \tau^{\mathrm{base}}_{n,\alpha}(\pi) &=\frac{1}{n}\left(\mathbb{E} \sum_{i\in [n]} R_i(J_i^{\pi\left(\mathbf{X}_n,\alpha\right)}) - \mathbb{E} \sum_{i\in [n]} R_i(0)\right)
    =\mathbb{E}  R_{1}(J_{1}^{\pi\left(\mathbf{X}_n,\alpha\right)}) - \mathbb{E} R_1(0)
\end{align*}
To estimate $\tau^{\mathrm{base}}_{n,\alpha}(\pi)$, they compute the difference in the observed generated reward of all agents in the policy arm compared to all agents in the control arm: 
\begin{equation}\label{eq:base}
    \tilde{\theta}^{\mathrm{base}}_{n,\alpha}(\pi)=\frac{1}{n}\left(\sum_{i\in [n]} R^p_i(J^p_i)-\sum_{i\in [n]} R^c_i(0)\right)
\end{equation}
 For the sake of consistency with the next section, we rescale $\tilde{\theta}^{\mathrm{base}}_{n,\alpha}$ and let $\theta^{\mathrm{base}}_{n,\alpha}(\pi):=\frac{n}{\lceil\alpha n\rceil}\tilde{\theta}^{\mathrm{base}}_{n,\alpha}(\pi)$ be the \emph{base estimator}.

\subsection{Shortcomings and Challenges} \label{challenge}
The methodology used in previous work has two main shortcomings: First, the base estimator $\theta^{\mathrm{base}}$ suffers from low statistical power, i.e., the estimator is quite ``noisy'' leading to large confidence intervals and problems with distinguishing policies. Second, as acknowledged by \citet{mate2022field} and \citet{verma2023restless} there are no theoretical guarantees or empirical evidence that computed confidence intervals and drawn statistical conclusions are valid.  

To understand why these problems occur, let us consider the class of threshold policies introduced in \Cref{prelim:SS}, which make independent decisions for every individual.
For these threshold policies standard methods for statistical inference, which rely on the central limit theorem (CLT), can be used. The CLT says that the sample mean of independent observations drawn from some (arbitrary) distribution (as generated, e.g., by a threshold policy in an RCT) converges to a normal distribution. Estimates of this normal distribution's mean $\mu$ and variance $\sigma$ can then be used for estimating the variance of the estimator and for instance to construct valid confidence intervals.  However, for resource allocation policies, the samples that we observe in the policy arm are no longer independent because an agent's treatment and thereby its observed reward depends on the index values of other agents. This renders the standard central limit theorem inapplicable. 
Consequently, statistical tests such as Welch's z-test, which rely on the CLT, are no longer guaranteed to produce accurate statistical conclusions.
Thus, the challenge arises of how to compute valid confidence intervals and p-values for policy evaluation.

Another consequence of the dependence between agents is that if we apply an allocation policy to a group of $n$ agents, we only observe a \textit{single} independent group sample; slightly changing the composition of the group could change the treatment allocation and thereby also the observed rewards (in contrast, for threshold policies, we would derive $n$ fully independent samples).
In light of the resulting lack of independent samples, we face the challenge of constructing estimators that do not suffer from low statistical power needed to draw statistically significant conclusions.

\subsection{Work by \citet{imai2023statistical}} \label{sec:imaili}
We discuss recent work by \citet{imai2023statistical}. The work is positioned differently and does not make any explicit connections to allocation policies and treatment allocation, but upon closer inspection turns out to be closely related.

There is a growing body of work on estimating conditional heterogeneous treatment effects (CATEs) of individuals based on their covariates \cite{wager2018estimation,kunzel2019metalearners,kennedy2023towards} with wide applications ranging from making decisions on patients in precision medicine to making predictions how a treatment performs in a population with a different covariate distribution than observed ones.  
While most statistics works in this direction have focused on designing policies to decide which CATE value should be sufficient to receive treatment \cite{athey2021policy,kitagawa2018should,zhao2012estimating,sun2021empirical,luedtke2016optimal}, few also consider inference and estimation \cite{sun2021treatment,yadlowsky2021evaluating,imai2023statistical}. 
However, from this rich body of works, only the recent work of \citet{imai2023statistical} is upon closer inspection closely related to our problem, as they in contrast to a majority of other works consider average treatment effects in groups of agents (and not only for individuals).

Specifically, \citet{imai2023statistical} analyze how to estimate the average treatment effect in groups of agents with similar CATEs. 
They assume access to a standard RCT where everyone in the treatment arm receives treatment.
Translated to our setting, their methodology applies to estimating the average effect a treatment has on agents with an index value below $q_{\alpha}$, i.e., those agents who belong to the expected $\alpha$-fraction of agents with the lowest index: $$\tau^{\mathrm{q}}_{\alpha}(\Upsilon):= \mathbb{E}_{(\mathbf{x},R)\sim P}
    [R(1)-R(0)\mid \Upsilon(\mathbf{x})\leq q_{\alpha}]$$
To measure this estimand, they 
take the difference between the summed reward of the $\alpha$-fraction of agents in the treatment arm with the lowest indices and the summed reward of the $\alpha$-fraction of agents in the control arm with the lowest indices. 
Using our notation, their estimator, which we call the \emph{subgroup estimator}, is equivalent to the following: 
\begin{align}
    \theta^{\mathrm{SG}}_{n,\alpha}(\pi^\Upsilon)=\frac{1}{\lceil\alpha n\rceil}\left(\sum_{i\in \pi^{\Upsilon}(\mathbf{X}^p_n,\alpha)}\!\!\!\!\! R^p_i(1)- \!\!\!\!\!\sum_{i\in \pi^{\Upsilon}(\mathbf{X}^c_n,\alpha)} \!\!\!\!\! R^c_i(0)\right)
    \label{eq:simestimator1}
\end{align}
They show in Lemma S2 appearing in Appendix S2 of \citet{imai2023statistical} that $\theta^{\mathrm{SG}}_{n,\alpha}(\pi^\Upsilon)$ converges in expectation at a $\sqrt{n}$-rate to $\tau^{\mathrm{q}}_{\alpha}(\Upsilon)$:

\begin{lemma}[informal corollary of  Lemma S2 in \cite{imai2023statistical}]\label{imai1} 
Under very mild assumptions, 
     $\lim_{n\to \infty}\sqrt{n}\left( \tau^{\mathrm{q}}_{\alpha}(\Upsilon)- \mathbb{E}[\theta^{\mathrm{SG}}_{n,\alpha}(\pi^{\Upsilon})] \right) 
    = 0$.
\end{lemma}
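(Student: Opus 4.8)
The plan is to reduce the claim to a bias computation for a trimmed mean of uniform order statistics, and then to exploit an \emph{exact} cancellation of the first-order quantile fluctuation that upgrades the generic $O(1/\sqrt n)$ bias into the required $o(1/\sqrt n)$. Throughout write $m=\lceil\alpha n\rceil$, let $\delta(\lambda):=\mathbb{E}[R(1)-R(0)\mid \Upsilon(\mathbf{x})=\lambda]$ be the conditional treatment effect at index level $\lambda$, and let $\Upsilon_{(1)}\le\cdots\le\Upsilon_{(n)}$ denote the order statistics of the $n$ indices. \emph{First}, I would remove the rewards by conditioning on the indices: the event ``agent $i$ is among the $m$ lowest-index agents'' is a deterministic function of the index vector, and since the agents are i.i.d.\ we have $\mathbb{E}[R_i(1)\mid \text{all indices}]=\mathbb{E}[R(1)\mid\Upsilon(\mathbf{x})=\Upsilon(\mathbf{x}_i)]$, and likewise for $R_i(0)$. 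Applying the tower property to the policy and control arms separately (they are i.i.d., so their index order statistics share a law) collapses the expected estimator to
\[ \mathbb{E}\!\left[\theta^{\mathrm{SG}}_{n,\alpha}(\pi^\Upsilon)\right]=\mathbb{E}\!\left[\frac{1}{m}\sum_{j=1}^{m}\delta\big(\Upsilon_{(j)}\big)\right], \]
turning the bias into a purely analytic statement about order statistics of the index distribution.

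Next I would apply the probability integral transform $U_i=F_\Upsilon(\Upsilon(\mathbf{x}_i))\sim\unif[0,1]$ and set $\psi:=\delta\circ F_\Upsilon^{-1}$, so that $\tau^{\mathrm{q}}_{\alpha}(\Upsilon)=\tfrac1\alpha\int_0^\alpha\psi$ and the target bias is $\mathbb{E}[T_n]-\tfrac1\alpha\int_0^\alpha\psi$ with $T_n:=\tfrac1m\sum_{j\le m}\psi(U_{(j)})$. I would then introduce the oracle statistic $S_n:=\tfrac{1}{n\alpha}\sum_{i}\psi(U_i)\mathbf{1}[U_i\le\alpha]$, which cuts at the \emph{population} quantile $\alpha$ and is exactly unbiased for the target. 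Hence the bias equals $\mathbb{E}[T_n-S_n]$, which splits into a normalization piece arising from $\tfrac1m-\tfrac1{n\alpha}=O(1/n^2)$ (multiplying a sum of size $O(n)$, hence $O(1/n)$ in expectation) and a boundary piece
\[ B_n:=\frac1m\sum_i\psi(U_i)\big(\mathbf{1}[U_i\le U_{(m)}]-\mathbf{1}[U_i\le\alpha]\big) \]
comparing the empirical cutoff $U_{(m)}$ with $\alpha$.

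The genuine work is the boundary term. Writing $\psi(U_i)=\psi(\alpha)+(\psi(U_i)-\psi(\alpha))$ inside $B_n$, the first part telescopes because $\sum_i\mathbf{1}[U_i\le U_{(m)}]=m$ exactly, giving $\tfrac{\psi(\alpha)}{m}\big(m-n\hat F_n(\alpha)\big)$; since $\mathbb{E}[\hat F_n(\alpha)]=\alpha$ \emph{exactly}, its expectation is $\tfrac{\psi(\alpha)}{m}(m-n\alpha)=O(1/n)$. This cancellation of the $O_p(1/\sqrt n)$ empirical-quantile fluctuation is precisely what yields $o(1/\sqrt n)$ rather than $O(1/\sqrt n)$. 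The remainder is supported on the band between $\alpha$ and $U_{(m)}$, where $|\psi(U_i)-\psi(\alpha)|\le L|U_{(m)}-\alpha|$; bounding it by $\tfrac{L|U_{(m)}-\alpha|}{m}\,\big|m-n\hat F_n(\alpha)\big|$ and applying Cauchy--Schwarz with $\mathbb{E}[(U_{(m)}-\alpha)^2]=O(1/n)$ and $\mathbb{E}[(m-n\hat F_n(\alpha))^2]=O(n)$ gives expectation $O(1/n)$. Multiplying every piece by $\sqrt n$ leaves $O(1/\sqrt n)\to0$, which is the claim.

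The main obstacle is this last step: passing from distributional ($O_p$) control of the band width to a bound on its \emph{expectation}. The Cauchy--Schwarz route above does this cleanly, requiring only that $\psi$ be Lipschitz near $\alpha$ (equivalently, $\delta$ Lipschitz and $F_\Upsilon$ admitting a density bounded away from $0$ near $q_{\alpha}$), plus a mild moment bound on the rewards to justify the conditional expectations in the first step. These are exactly the ``very mild assumptions'' of the statement. A minor point to track throughout is the $\lceil\cdot\rceil$ rounding, which makes $m/n$ differ from $\alpha$ by $O(1/n)$ and must be carried through both the normalization and boundary terms, but is harmless.
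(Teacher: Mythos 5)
The paper does not actually prove this lemma: it is imported wholesale from \citet{imai2023statistical} (Lemma S2 of their Appendix S2, restated here as \Cref{th:quant-vs-ranks}), under their Assumptions 4--5 (\Cref{third-moment,pos-var,regularity}). So there is no in-paper argument to compare against line by line; what you have written is a self-contained proof where the paper offers only a citation. Your reduction is sound: the selection event is a function of the indices alone, so the tower property collapses $\mathbb{E}[\theta^{\mathrm{SG}}_{n,\alpha}]$ to $\mathbb{E}[\frac1m\sum_{j\le m}\delta(\Upsilon_{(j)})]$ (using that the two arms' index order statistics share a law), and the subsequent comparison of the rank-based cutoff $U_{(m)}$ with the population quantile $\alpha$ is exactly the ``$I_{F_{i,n}}$ versus $I_{E_{i,n}}$'' content of \Cref{th:quant-vs-ranks}. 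The decisive observation --- that the $\psi(\alpha)$-part of the boundary term has expectation $\frac{\psi(\alpha)}{m}(m-n\alpha)=O(1/n)$ because $\hat F_n(\alpha)$ is \emph{exactly} unbiased, so the $O_p(n^{-1/2})$ quantile fluctuation contributes nothing to the bias --- is precisely the mechanism that upgrades the rate, and your Cauchy--Schwarz bound on the remainder ($\mathbb{E}[(U_{(m)}-\alpha)^2]=O(1/n)$ against $\mathbb{E}[(m-n\hat F_n(\alpha))^2]=O(n)$) correctly yields an $O(1/n)$ bias and hence the claim.

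Two caveats. First, a small but real gap: the bound $|\psi(U_i)-\psi(\alpha)|\le L\,|U_{(m)}-\alpha|$ is only valid on the event that the band between $\alpha$ and $U_{(m)}$ stays inside the neighborhood where $\psi$ is Lipschitz. You need to split off the complementary event $\{|U_{(m)}-\alpha|>\epsilon_0\}$, which has exponentially small probability by binomial concentration, and control the remainder there by a crude bound requiring a second moment of $\psi(U)$; this is routine but must be said, since otherwise the Lipschitz constant is applied where it need not exist. Second, your hypotheses are genuinely stronger than the paper's: \Cref{regularity} (Imai--Li's Assumption 5) asks only for left-continuity, bounded variation away from the endpoints, and continuity at $\alpha$ of $p\mapsto\mathbb{E}[R(1)-R(0)\mid\Upsilon(\mathbf{x})=F^{-1}(p)]$, whereas you assume local Lipschitz continuity (plus, implicitly, continuity of $F_\Upsilon$ for the probability integral transform). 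In fact your own argument survives under mere continuity at $\alpha$ if you replace $L|U_{(m)}-\alpha|$ by a modulus of continuity and invoke dominated convergence, which would bring your assumptions essentially in line with the paper's; as written, though, you are proving the lemma under a slightly narrower (still ``very mild'') regime.
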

Moreover, they also show how to reason about the asymptotic variance of their estimator using the following result: 
\begin{theorem}[informal corollary of  Theorem 2 in \cite{imai2023statistical}]\label{imai2} 
    Under very mild assumptions,
    $$
    \sqrt{n}\left(\theta^{\mathrm{SG}}_{n,\alpha}(\pi^\Upsilon)-\tau^{\mathrm{q}}_{\alpha}(\Upsilon)\right)
   \overset{d}{\rightarrow} \mathcal{N}(0, \sigma^2_{\text{asym}})$$ for some $\sigma^2_{\text{asym}}\geq 0$. We can consistently estimate $\sigma^2_{\text{asym}}$ as $\hat{\sigma}^2_{\text{asym}}$ from the results of a standard RCT. 
\end{theorem}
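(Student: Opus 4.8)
The plan is to show that the subgroup estimator, after centering and $\sqrt n$-scaling, is asymptotically an average of i.i.d.\ influence functions, to which the classical CLT applies. First I would use \Cref{imai1} to trade the estimand for the estimator's own mean: writing $\sqrt n(\theta^{\mathrm{SG}}_{n,\alpha}-\tau^{\mathrm q}_\alpha)=\sqrt n(\theta^{\mathrm{SG}}_{n,\alpha}-\mathbb{E}[\theta^{\mathrm{SG}}_{n,\alpha}])+\sqrt n(\mathbb{E}[\theta^{\mathrm{SG}}_{n,\alpha}]-\tau^{\mathrm q}_\alpha)$, the second term vanishes by \Cref{imai1}, so it suffices to prove a CLT for the centered estimator. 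Because the policy and control arms are drawn independently, I would establish the CLT arm-by-arm and then add the two limiting variances.

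The core difficulty is that $\theta^{\mathrm{SG}}_{n,\alpha}$ is not a sum of independent terms: the set $\pi^\Upsilon(\mathbf{X}^p_n,\alpha)$ picks the $\lceil\alpha n\rceil$ agents of lowest index, which is equivalent to thresholding at the empirical $\alpha$-quantile $\hat q_\alpha$, a random threshold shared by all summands. I would treat each arm's contribution as a smooth functional $T$ of the empirical distribution of the pairs $(\Upsilon(\mathbf x_i),R_i(a))$, namely the conditional-mean functional $Q\mapsto \mathbb E_Q[R(a)\mid \Upsilon\le q_\alpha(Q)]$, and apply the functional delta method. Its Hadamard derivative has two pieces: a direct piece $\tfrac1\alpha \mathbf 1[\Upsilon\le q_\alpha](R(a)-\mu)$ and a quantile-correction piece from the fluctuation of $\hat q_\alpha$ about $q_\alpha$, which by the Bahadur representation contributes a term proportional to the boundary conditional mean $b=\mathbb E[R(a)\mid \Upsilon=q_\alpha]$ times $(\alpha-\mathbf 1[\Upsilon\le q_\alpha])$. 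Summing these gives an explicit per-arm influence function $\psi$, the CLT yields $\sqrt n(\hat\mu-\mu)\overset{d}{\to}\mathcal N(0,\var(\psi))$, and by independence of the arms $\sigma^2_{\text{asym}}=\var(\psi^p)+\var(\psi^c)$.

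The main obstacle is justifying the linearization at the \emph{random} threshold $\hat q_\alpha$, i.e.\ showing that replacing $\hat q_\alpha$ by $q_\alpha$ costs only $o_p(n^{-1/2})$. This is where empirical process theory enters: I would verify that the class $\{(\mathbf x,R)\mapsto R(a)\mathbf 1[\Upsilon(\mathbf x)\le\lambda]:\lambda\in\mathbb R\}$ is Donsker (the indicator part is monotone in $\lambda$, hence VC, and multiplying by an envelope $R$ with a finite second moment preserves this), which gives stochastic equicontinuity of the associated empirical process and lets me substitute $\hat q_\alpha=q_\alpha+O_p(n^{-1/2})$ into the linear expansion. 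The ``very mild assumptions'' I would invoke are: $F_\Upsilon$ has a density that is positive and continuous at $q_\alpha$ (so the quantile and its Bahadur expansion are well defined), $\mathbb E[R(a)^2]<\infty$, and continuity of $\lambda\mapsto\mathbb E[R(a)\mid\Upsilon=\lambda]$ at $q_\alpha$.

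Finally, for consistent variance estimation I would plug empirical analogs into $\var(\psi^p)+\var(\psi^c)$: the selection indicators, $\hat q_\alpha$, and the within-group means $\hat\mu$ are immediate, while the only nonstandard ingredient is the boundary conditional mean $b$, which I would estimate by local averaging of $R(a)$ over agents whose index lies in a shrinking window around $\hat q_\alpha$ (or via the specific estimator of \citet{imai2023statistical}); consistency then follows from Slutsky's theorem together with standard nonparametric-regression arguments under the smoothness assumption above. As a shortcut, since all of these steps are carried out in \citet{imai2023statistical}, one may instead match our subgroup estimator and index $\Upsilon$ to their prioritization-score setting, check their regularity conditions, and invoke their Theorem~2 directly.
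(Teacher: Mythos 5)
Your proposal is correct and follows essentially the same route as the paper: the main text simply invokes Theorem~2 of \citet{imai2023statistical} for this statement, while the appendix's self-contained proof (\Cref{theorem4} and \Cref{context2}, with variance estimation in \Cref{app:Base2}) uses exactly your decomposition --- an empirical-process term at the fixed quantile, a stochastic-equicontinuity term killed via a VC/Donsker argument for $\{(w,z)\mapsto zI[w\le t]\}$, and a delta-method/Bahadur correction for the fluctuation of the empirical quantile, combined by the multivariate CLT and summed over the two independent arms. Your plan for estimating the boundary conditional mean $\mathbb{E}[R(a)\mid \Upsilon=q_\alpha]$ by local averaging over order statistics near $\hat q_\alpha$ is also precisely the paper's nearest-neighbor construction (its extension of \citet{CHENG198463}), under the same mild assumptions (positive derivative of $F_\Upsilon$ at $q_\alpha$, finite second moments, continuity of the conditional mean).
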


\section{Methodology} \label{sec:methods}
We present the subgroup estimator for our setting (\Cref{app:estimation}) and describe how we can compute asymptotically valid confidence intervals for it (\Cref{hy:Sub}).
Lastly, in \Cref{hy:Base}, we use an alternative proof to derive analogous results for the base estimator~$\theta^{\mathrm{base}}$.

\subsection{Subgroup Estimator}\label{app:estimation}
We describe how and why a variant of the estimator used by \citet{imai2023statistical} to evaluate CATEs, which we call the subgroup estimator (see \Cref{eq:simestimator1}), can be used in our setting. 

We propose a new estimand that quantifies the effectiveness of a policy by measuring the average effect of a treatment as prescribed by the policy. This estimand will turn out to be equivalent---up to rescaling---to the base estimand $\tau^{\mathrm{base}}$. Our estimand makes it clear how our task connects to \Cref{eq:simestimator1} and explicitly quantifies the effect of \emph{one} treatment, as compared to the base estimand $\tau^{\mathrm{base}}$ that quantifies the effect of being an agent in the policy group (that might or might not receive treatment). 
More concretely, for an allocation policy $\pi$, a treatment fraction $\alpha$, and a group size $n\in \mathbb{N}$, we define $\tau^{\mathrm{new}}_{n,\alpha}(\pi)$ to be the expected additional reward generated by an intervention allocated according to policy~$\pi$:
\begin{align}
    & \tau^{\mathrm{new}}_{n,\alpha}(\pi):= \frac{1}{\lceil\alpha n\rceil} \mathbb{E} \!\!
    \sum_{i\in \pi\left(\mathbf{X}_n,\alpha\right)} (R_i(1)-R_i(0)) \label{eq:est}
\end{align}
$\tau^{\mathrm{new}}_{n,\alpha}(\pi)$ is---up to rescaling---equivalent to the estimand $\tau^{\mathrm{base}}_{n,\alpha}(\pi)$ used in previous work: \begin{align*}
 & \tau^{\mathrm{base}}_{n,\alpha}(\pi)  =\frac{1}{n}\left(\mathbb{E}[ \sum_{i\in [n]} R_i(J_i^{\pi\left(\mathbf{X}_n,\alpha\right)}) - \sum_{i\in [n]} R_i(0)]\right) = \frac{1}{n}\; \mathbb{E}\; [ \sum_{\mathclap{i\in \pi\left(\mathbf{X}_n,\alpha\right)}} (R_i(1)-R_i(0)) +  \sum_{\mathclap{i\notin \pi\left(\mathbf{X}_n,\alpha\right)}} (R_i(0)-R_i(0)) ]   = \frac{\lceil\alpha n\rceil}{n} \tau^{\mathrm{new}}_{n,\alpha}(\pi)
\end{align*}
The reason for this equivalence is that---in expectation---
agents on which we did not act in the policy arm cancel out with agents in the control arm. 
Nevertheless, in the base estimator $\theta^{\mathrm{base}}$ (which simply drops the expectation from $\tau^{\mathrm{base}}$), these agents will introduce noise, as they will influence the observed summed reward of the two arms, i.e., the two sums in $\theta^{\mathrm{base}}$ (cf. \Cref{eq:base}), differently. 
This motivates us to ``remove'' them for the estimation.
The \emph{subgroup estimator}  allows us to do  this. 
We separately estimate the expected reward of agents selected by the policy when treated and when not treated. 
For the first part, we can use the agents selected by our policy in the policy arm (for which we observe $R_i(1)$) and for the second part the agents that \textit{would have been} selected by our policy in the control arm (for which we observe $R_i(0)$). This results in the subgroup estimator $\theta^{\mathrm{SG}}$ from \Cref{eq:simestimator1}: 
\begin{align}
    \theta^{\mathrm{SG}}_{n,\alpha}(\pi)=\frac{1}{\lceil\alpha n\rceil}\left(\sum_{i\in \pi(\mathbf{X}^p_n,\alpha)}\!\!\!\!\! R^p_i(1)- \!\!\!\!\!\sum_{i\in \pi(\mathbf{X}^c_n,\alpha)} \!\!\!\!\! R^c_i(0)\right)
    \label{eq:simestimator}
\end{align}
In fact, it is easy to see that the expected value of the subgroup estimator $\theta^{\mathrm{SG}}$ is equal to our estimand $\tau^{\mathrm{new}}$: 
\begin{small}
    \begin{align}
  \mathbb{E}[\theta^{\mathrm{SG}}_{n,\alpha}(\pi)] =\frac{1}{\lceil\alpha n\rceil}\left(\mathbb{E}\!\!\!\! \sum_{i\in \pi(\mathbf{X}_n,\alpha)} R_i(1)- \mathbb{E}\!\!\!\!\! \sum_{i\in \pi(\mathbf{X}_n,\alpha)} \!\!\!\!\! R_i(0)\right)= \tau^{\mathrm{new}}_{n,\alpha}(\pi) 
   \label{eq:epectation}
\end{align}
\end{small}

 \paragraph{Intuitive Differences between Base and Subgroup Estimator}\footnote{Note that the work of \citet{imai2023statistical} does not discuss any intuition behind the subgroup estimator  $\theta^{\mathrm{SG}}$. Moreover, the base estimator  $\theta^{\mathrm{base}}$ naturally does not appear in their work, as it cannot be used for the task studied by them.}
The base estimator $\theta^{\mathrm{base}}$ treats the RCT arms as indecomposable units and estimates the effect of treatments (allocated according to policy $\pi$) on \emph{the complete policy arm} through a  comparison with the complete control arm. 
In contrast, the idea of the subgroup estimator $\theta^{\mathrm{SG}}$ is to estimate the effect of treatments \emph{on the treated agents} by approximating their unobserved counterfactual behavior (when they did not receive treatment) using the control arm.
For this, we view each agent as an individual sample and compare the agents that received treatment in the policy arm to the agents that would have been assigned treatment by the policy in the control arm.
Thus, in contrast to the base estimator $\theta^{\mathrm{base}}$, the subgroup estimator $\theta^{\mathrm{SG}}$ only takes into account the agents that are ``relevant'' to our policy.
Specifically, $\theta^{\mathrm{SG}}$ ignores the difference $\sum_{i\notin \pi(\mathbf{X}^p_n,\alpha)} R^p_i(0)- \sum_{i\notin \pi(\mathbf{X}^c_n,\alpha)} R^c_i(0)$ that intuitively does not provide us with any insights regarding the policy and only adds noise to the estimator.

\paragraph{Base, Subgroup, and Hybrid Estimator}
The subgroup estimator has a significantly lower variance than the base estimator in our experiments from \Cref{app:experiment}.
However, as we will explain in \Cref{app:cc} this is not a formal guarantee, as there are corner cases where the situation is reversed. 
If one wants to be extra careful to avoid these situations, we present in \Cref{app:hybrid} a hybrid estimator that combines the two, thereby blending their strengths. 
In our experiments from \Cref{app:experiment}, the hybrid estimator performs always extremely similarly to the subgroup estimator.

\paragraph{RCT Design}
Recall that the work of \citet{imai2023statistical} assumed a standard RCT design where everyone in the treatment arm receives a treatment.
However, if resources are scarce this might not be feasible. 
This is why previous work \cite{mate2022field,verma2023restless}  has used customized RCTs where only an $\alpha$ fraction of the agents in the policy arm---as determined by the policy---get treated.
Notably, the base estimator $\theta^{\mathrm{base}}$ can only be applied after such a customized RCT has been conducted.  
The subgroup estimator $\theta^{\mathrm{SG}}$ offers a much more flexible approach: 
We can use it in both settings and in fact for any RCT where all agents that would have been selected by the policy in the treatment group received treatment.
This allows us for instance to run a standard RCT where everyone in the treatment arm gets treated and only specify afterward the index policy whose effectiveness we want to evaluate. We can even use one standard RCT to get an idea of the effectiveness of different index-based allocation policies or different treatment fractions $\alpha$.  

\paragraph{Policy comparison} While the estimand and estimator presented in this section quantify the effectiveness of a single policy, they can also be used to compare two policies against each other. 
A naive approach is to use our machinery presented in \Cref{hy:Sub} to compute $(100-\beta)\%$-confidence intervals for both policies. In case they do not overlap, we can conclude that one policy outperforms the other with probability $(100-2\beta)\%$  by union bound. However, there is also a better approach described in \Cref{hy:Sub}. 

\paragraph{Relation to \citet{pmlr-v202-mate23a}}
To the best of our knowledge, the work of \citet{pmlr-v202-mate23a} is the only other paper explicitly dealing with casual inference for index-based allocation policies. They provide techniques for reducing the variance of estimators; however, their methods do not allow for the computation of confidence intervals that are necessary for hypothesis testing.
In \Cref{sec:comp}, we present a detailed discussion of how the subgroup estimator $\theta^{\mathrm{SG}}$ relates to the estimator of \citet{pmlr-v202-mate23a}, essentially arguing that both lead to a similar variance reduction in our setting, while in contrast to their work, our estimator admits a much simpler formulation and comes with (valid) confidence intervals.

\subsection{Inference for Subgroup Estimator}\label{hy:Sub}
We describe how we can do asymptotically correct inference for the subgroup estimator $\theta^{\mathrm{SG}}$. This section and the next mostly describe ideas, with details and full proofs appearing in \Cref{app:Sub,app:Base}. 

The main ingredient for doing inference with the subgroup estimator $\theta^{\mathrm{SG}}$ is to establish that it is asymptotically normal with respect to our estimand $\tau^{\mathrm{new}}$, i.e., the difference between the estimator and estimand is distributed according to a normal distribution. Estimating the variance of this normal distribution then allows us, for instance, to reason about the probability that the error of the estimator is above a certain threshold (this cannot be achieved by merely knowing that the estimator in expectation converges to the estimand, see \Cref{eq:epectation}). 
To establish this result, the general proof idea is to first show that the subgroup estimator $\theta^{\mathrm{SG}}_{n,\alpha}(\pi^\Upsilon)$ is asymptotically normal with respect to $\tau^{\mathrm{q}}_{\alpha}(\Upsilon)$, i.e., the average intervention effect of treatments when prescribed to agents with index smaller equal $q_{\alpha}$. Subsequently, one can show that $\tau^{\mathrm{q}}_{\alpha}(\Upsilon)$ converges ``fast''  to our estimand $\tau^{\mathrm{new}}_{n,\alpha}(\pi^\Upsilon)$ to conclude the result.

To implement this strategy, we use the results from \citet{imai2023statistical} as discussed in \Cref{sec:imaili}.
It is sufficient to combine \Cref{imai1,imai2} with the simple observation  from \Cref{eq:epectation} that $\mathbb{E}[\theta^{\mathrm{SG}}_{n,\alpha}(\pi)]=\tau^{\mathrm{new}}_{n,\alpha}(\pi)$:
Essentially, \Cref{imai1} implies that we can ``replace'' $\tau^{\mathrm{q}}_{\alpha}(\pi)$ by $\mathbb{E}[\theta^{\mathrm{SG}}_{n,\alpha}(\pi)]=\tau^{\mathrm{new}}_{n,\alpha}(\pi)$ in \Cref{imai2}. 
Formally, using Section 2.2 of \citet{imai2023statistical}, we can conclude that: 
\begin{theorem}\label{theorem} 
    Under very mild assumptions,
    $$
    \sqrt{n}\left(\theta^{\mathrm{SG}}_{n,\alpha}(\pi)-\tau^{\mathrm{new}}_{n,\alpha}(\pi)\right)
   \overset{d}{\rightarrow} \mathcal{N}(0, \sigma^2_{\text{SG}})$$ for some $\sigma^2_{\text{SG}}\geq 0$. $\sigma^2_{\text{SG}}$ can be consistently estimated as
   \begin{align*}
 \hat{\sigma}^2_{\text{SG}}=& \frac{1}{\alpha^2(n-1)}\!\!\sum_{i\in \pi(\mathbf{X}^p_n,\alpha)}  \left(R_i^p(1)- \!\!\!\!\!\!\sum_{i\in \pi(\mathbf{X}^p_n,\alpha)} \!\!\frac{R_i^p(1)}{n}\right)^2\\
 & +\frac{1}{\alpha^2(n-1)}\!\!\sum_{i\in \pi(\mathbf{X}^c_n,\alpha)}  \left(R_i^c(0)- \!\!\!\!\!\!\sum_{i\in \pi(\mathbf{X}^c_n,\alpha)} \!\!\frac{R_i^c(0)}{n}\right)^2 \\
 & - \frac{(1-\alpha)n}{\alpha(2n-1)\lceil\alpha n\rceil^2} \left(\sum_{i\in \pi(\mathbf{X}^p_n,\alpha)} R_i^p(1)- \sum_{i\in \pi(\mathbf{X}^c_n,\alpha)} R_i^c(0)\right)^2
 \end{align*} 

\end{theorem}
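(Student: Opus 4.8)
The plan is to prove the statement by re-centering the Imai limit theorem (\Cref{imai2}) onto our estimand $\tau^{\mathrm{new}}$ and then reading the variance estimator off Imai's corresponding result; throughout, $\pi=\pi^\Upsilon$ is the index-based policy to which the subgroup estimator applies. The first step is the telescoping decomposition
\begin{align*}
\sqrt{n}\bigl(\theta^{\mathrm{SG}}_{n,\alpha}(\pi^\Upsilon) - \tau^{\mathrm{new}}_{n,\alpha}(\pi^\Upsilon)\bigr) &= \sqrt{n}\bigl(\theta^{\mathrm{SG}}_{n,\alpha}(\pi^\Upsilon) - \tau^{\mathrm{q}}_{\alpha}(\Upsilon)\bigr) \\
&\quad + \sqrt{n}\bigl(\tau^{\mathrm{q}}_{\alpha}(\Upsilon) - \tau^{\mathrm{new}}_{n,\alpha}(\pi^\Upsilon)\bigr).
\end{align*}
By \Cref{imai2} the first summand converges in distribution to $\mathcal{N}(0,\sigma^2_{\text{asym}})$. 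For the second summand I would use the observation in \Cref{eq:epectation} that $\mathbb{E}[\theta^{\mathrm{SG}}_{n,\alpha}(\pi^\Upsilon)]=\tau^{\mathrm{new}}_{n,\alpha}(\pi^\Upsilon)$ to rewrite it as $\sqrt{n}\bigl(\tau^{\mathrm{q}}_{\alpha}(\Upsilon)-\mathbb{E}[\theta^{\mathrm{SG}}_{n,\alpha}(\pi^\Upsilon)]\bigr)$, which is a \emph{deterministic} sequence tending to $0$ by \Cref{imai1}. Since a deterministic null sequence converges in probability to $0$, Slutsky's theorem permits adding the two limits, yielding $\sqrt{n}(\theta^{\mathrm{SG}}_{n,\alpha}(\pi^\Upsilon)-\tau^{\mathrm{new}}_{n,\alpha}(\pi^\Upsilon))\overset{d}{\rightarrow}\mathcal{N}(0,\sigma^2_{\text{asym}})$, so that $\sigma^2_{\text{SG}}=\sigma^2_{\text{asym}}$.

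Before invoking \Cref{imai1,imai2} I would verify that our customized RCT supplies exactly the quantities their results require, since Imai assumes a standard RCT in which every agent in the treatment arm is treated, whereas we treat only the selected $\lceil\alpha n\rceil$ agents. The key point is that $\theta^{\mathrm{SG}}$ depends on the policy arm only through the rewards $R^p_i(1)$ of the selected agents $i\in\pi^\Upsilon(\mathbf{X}^p_n,\alpha)$, which are precisely what we observe; the unselected agents never enter \Cref{eq:simestimator}. Hence the two selected subgroups $\pi^\Upsilon(\mathbf{X}^p_n,\alpha)$ and $\pi^\Upsilon(\mathbf{X}^c_n,\alpha)$ coincide with Imai's lowest-$\alpha$-fraction subgroups in each arm, and their conclusions transfer once the underlying regularity conditions (their ``very mild assumptions,'' e.g.\ on $F_\Upsilon$ near $q_\alpha$ and on moments of the rewards) are restated in our notation.

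For the variance estimator I would take the consistent estimator of $\sigma^2_{\text{asym}}$ furnished by \Cref{imai2} (Theorem~2 of \citet{imai2023statistical}) and rewrite it in our notation, substituting the observed subgroup rewards; consistency for $\sigma^2_{\text{SG}}$ then follows immediately since $\sigma^2_{\text{SG}}=\sigma^2_{\text{asym}}$. The first two displayed terms should emerge as the empirical (finite-sample-corrected) second-moment terms of the treated and control rewards over the selected subgroups, rescaled by $1/\alpha^2$, while the last term is a correction attributable to estimating the quantile threshold $q_\alpha$ from the data rather than knowing it; indeed, using $\sum_{i\in\pi^\Upsilon(\mathbf{X}^p_n,\alpha)}R^p_i(1)-\sum_{i\in\pi^\Upsilon(\mathbf{X}^c_n,\alpha)}R^c_i(0)=\lceil\alpha n\rceil\,\theta^{\mathrm{SG}}_{n,\alpha}(\pi^\Upsilon)$, that term is of order $(\theta^{\mathrm{SG}})^2$. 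The main obstacle is exactly this bookkeeping: matching Imai's variance formula term-by-term and confirming that the precise finite-population constants (the division by $n$ rather than $\lceil\alpha n\rceil$ inside the centering, and the $(n-1)$ and $(2n-1)$ factors) are what their estimator produces after the $1/\lceil\alpha n\rceil$ rescaling in \Cref{eq:simestimator}. By contrast, the asymptotic-normality step is essentially immediate given \Cref{imai1,imai2}.
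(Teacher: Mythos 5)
Your proposal is correct and follows essentially the same route as the paper: decompose $\sqrt{n}(\theta^{\mathrm{SG}}-\tau^{\mathrm{new}})$ into $\sqrt{n}(\theta^{\mathrm{SG}}-\tau^{\mathrm{q}})$ plus the deterministic remainder $\sqrt{n}(\tau^{\mathrm{q}}-\mathbb{E}[\theta^{\mathrm{SG}}])$, apply \Cref{imai2} to the former and \Cref{imai1} together with \Cref{eq:epectation} to the latter, and combine via Slutsky, with the variance estimator lifted from Section~2.2 of \citet{imai2023statistical}. The paper's argument is exactly this combination (with the required assumptions, namely \Cref{third-moment,pos-var,regularity}, spelled out in \Cref{app:Sub}), so no further comparison is needed.
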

\noindent Using Slutky's theorem, we can conclude from \Cref{theorem} that
\begin{equation}\label{normal}
    \sqrt{n}(\nicefrac{\theta^{\mathrm{SG}}_{n,\alpha}(\pi)-\tau^{\mathrm{new}}_{n,\alpha}(\pi))}{\sqrt{\hat{\sigma^2_{\text{SG}}}}}\overset{d}{\rightarrow} \mathcal{N}(0,1).
\end{equation}

\paragraph{Confidence Intervals} From \Cref{normal}, we can derive a formula for asymptotically correct $\beta$-confidence interval of $\tau^{\mathrm{new}}_{n,\alpha}(\pi)$ as \begin{equation}
    I=[\theta^{\mathrm{SG}}_{n,\alpha}(\pi)-Z_{1-\frac{\beta}{2}}\sqrt{\nicefrac{\hat{\sigma}^2_{\text{SG}}}{n}},\theta^{\mathrm{SG}}_{n,\alpha}(\pi)+Z_{1-\frac{\beta}{2}}\sqrt{\nicefrac{\hat{\sigma}^2_{\text{SG}}}{n}}]
\end{equation} where $Z_{\gamma}$ is the $\gamma$ quantile of $\mathcal{N}(0,1)$. 
Asymptotically correct here means that $\mathbb{P}(\tau^{\mathrm{new}}_{n,\alpha}(\pi)\in I) {\rightarrow} 1-\beta$.
Note that the rate-$\sqrt{n}$ convergence established in \Cref{theorem} indicates that $\mathbb{P}(\tau^{\mathrm{new}}_{n,\alpha}(\pi)\in I)$ should ``quickly'' converge to $1-\beta$ and indeed our experiments confirm that the confidence interval is approximately valid already for a limited number of samples in different realistic settings. 

\paragraph{P-Values} \Cref{theorem} and \Cref{normal} also allow us to compute asymptotically valid p-values. Let $\Phi(x)$ be the cumulative distribution of $\mathcal{N}(0,1)$, i.e.,  $\Phi(x)$ is the probability that a sample from $\mathcal{N}(0,1)$ is smaller equal to $x$.
Assume for instance that we wanted to test the null hypothesis $\tau^{\mathrm{new}}_{n,\alpha}(\pi)\leq 0$, then $p=1-\Phi\left(\nicefrac{\sqrt{n}\theta^{\mathrm{SG}}_{n,\alpha}(\pi)}{\sqrt{\hat{\sigma}^2_{\text{SG}}}}\right)$ will be an asymptotically valid p-value, i.e., $\mathbb{P}(p\leq \beta) {\rightarrow} \beta$.  

\paragraph{Welch's $z$-test} Our variance estimator $\hat{\sigma}^2_{\text{SG}}$ and the above-derived confidence intervals are similar to the results of the standard Welch's $z$-test: We would recover the result produced by Welch's $z$-test if we deleted the third term in $\hat{\sigma}^2_{\text{SG}}$, which is always negative. 
In line with this, Welch's $z$-test outputs conservative confidence intervals that are approximately valid in our experiments. 

\paragraph{Policy Comparison (cont'd).} We can use our results from this section to more effectively compare the effectiveness of two policies $\pi_1$ and $\pi_2$ with respective variance estimates $\hat{\sigma}_{1,\mathrm{SG}}^2$ and $\hat{\sigma}_{2,\mathrm{SG}}^2$ from \Cref{theorem}. Assuming that both policies were evaluated in fully independent RCT, the asymptotically correct $\beta$-confidence interval of $\tau^{\mathrm{new}}_{n,\alpha}(\pi_1)-\tau^{\mathrm{new}}_{n,\alpha}(\pi_2)$ is 
$$
    I= [\left(\theta^{\mathrm{SG}}_{n,\alpha}(\pi_1)-\theta^{\mathrm{SG}}_{n,\alpha}(\pi_2)\right)-Z_{1-\frac{\beta}{2}}\sqrt{\nicefrac{(\hat{\sigma}^2_{1,\mathrm{SG}}+\hat{\sigma}^2_{2,\mathrm{SG}})}{n}},
     \left(\theta^{\mathrm{SG}}_{n,\alpha}(\pi_1)-\theta^{\mathrm{SG}}_{n,\alpha}(\pi_2)\right)+Z_{1-\frac{\beta}{2}}\sqrt{\nicefrac{(\hat{\sigma}^2_{1,\mathrm{SG}}+\hat{\sigma}^2_{2,\mathrm{SG}})}{n}}]
$$

\begin{table*}[t]
    \centering
    \begin{subtable}[t]{0.6\linewidth}
        \centering
        \resizebox{0.8\linewidth}{!}{%
        \begin{tabular}{ccccccc}
        \toprule
        \multirow{4}{*}{Domain} & \multicolumn{6}{c}{Estimator} \\
        \cmidrule{2-7}
         & \multicolumn{3}{c}{Base} & \multicolumn{3}{c}{Subgroup} \\
         \cmidrule(lr){2-4} \cmidrule(lr){5-7}
         & $<$ CI & in CI & $>$ CI & $<$ CI & in CI & $>$ CI \\
         \midrule
         Synthetic & 0.027 & 0.952 & 0.021 & 0.036 & 0.935 & 0.029 \\
         TB & 0.024 & 0.946 & 0.030 & 0.031 & 0.947 & 0.022 \\
         mMitra & 0.018 & 0.956 & 0.026 & 0.039 & 0.938 & 0.023 \\
         \bottomrule
        \end{tabular}%
    }
    \caption{Fraction of times the estimand is in, below ($<$~CI), or above ($>$~CI) an estimator's 95\% confidence interval (over 1000 different RCTs).}
    \label{tab:validity}
    \end{subtable}
    \hfil
    \begin{subtable}[t]{0.35\linewidth}
        \centering
        \resizebox{0.8\linewidth}{!}{\begin{tabular}{ccc}
        \toprule
        \multirow{2}{*}{Domain} & \multicolumn{2}{c}{Estimator} \\
        \cmidrule{2-3}
         & Base & Subgroup \\
         \midrule
         Synthetic & 0.426 & 0.178  \\
         TB & 0.778 & 0.293 \\
         mMitra & 0.668 & 0.221  \\
         \bottomrule
        \end{tabular}}
    \caption{Half-width of confidence intervals (averaged over 1000 RCTs).}
    \label{tab:power}
    \end{subtable}
    \caption{Empirical comparison of the confidence intervals produced by different estimators. \textnormal{Both the base and subgroup estimator produce approximately valid confidence intervals; however, the subgroup estimator's confidence intervals are consistently smaller.}}\vspace{-0.5cm}
\end{table*}

\subsection{Inference for Base Estimator}\label{hy:Base}
The proof of \citet{imai2023statistical} cannot be applied to prove the asymptotic normality of the base estimator $\theta^{\mathrm{base}}$. 
Thus, we come up 
with an alternative, more generally applicable proof via empirical process theory \cite{vaart2023empirical} that allows us to prove the asymptotic normality of the base and subgroup estimator (as well as the hybrid estimator featured in \Cref{app:estimation}).
As described in \Cref{hy:Sub}, we can use the asymptotic normality of the base estimator $\theta^{\mathrm{base}}$ to construct asymptotically correct confidence intervals and p-values for it.
In short, we prove the following for the base estimator: 
\begin{theorem}[informal]\label{ourtheorem} 
     Under very mild assumptions 
    $
    \sqrt{n}\left(\theta^{\mathrm{base}}_{n,\alpha}(\pi)-\\\tau^{\mathrm{new}}_{n,\alpha}(\pi)\right)
   \overset{d}{\rightarrow} \mathcal{N}(0, \sigma^2_{\text{base}})$ for some $\sigma^2_{\text{base}}\geq 0$. We can compute a consistent estimate $\hat{\sigma}^2_{\text{base}}$ of~$\sigma^2_{\text{base}}$. 
\end{theorem}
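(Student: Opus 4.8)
The plan is to prove asymptotic normality through an explicit i.i.d.\ linearization of $\sqrt{n}(\theta^{\mathrm{base}}_{n,\alpha}(\pi^\Upsilon)-\tau^{\mathrm{new}}_{n,\alpha}(\pi^\Upsilon))$, isolating all the inter-agent dependence into a single empirical-process term evaluated at the random selection threshold. The dependence arises because agent $i$ is treated in the policy arm exactly when $\Upsilon(\mathbf{x}^p_i)\le\hat q_\alpha$, where $\hat q_\alpha$ is the empirical $\alpha$-quantile of the observed indices; since $\hat q_\alpha$ depends on the whole covariate vector $\mathbf{X}^p_n$, the summands $R^p_i(J^p_i)$ are not independent. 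First I would rewrite the policy-arm sum using the counterfactual identity $R^p_i(J^p_i)=R^p_i(0)+(R^p_i(1)-R^p_i(0))\mathbf{1}\{\Upsilon(\mathbf{x}^p_i)\le\hat q_\alpha\}$, so that, with $g_i:=R^p_i(1)-R^p_i(0)$, the estimator splits into (i) a difference of two independent i.i.d.\ averages of $R(0)$ over the two arms and (ii) the data-dependent partial sum $\frac{1}{\alpha n}\sum_i g_i\mathbf{1}\{\Upsilon(\mathbf{x}^p_i)\le\hat q_\alpha\}$.

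The core step is to control term (ii) via empirical process theory. I would index a process by the threshold, $\mathbb{G}^g_n(\lambda):=\sqrt{n}\bigl(\tfrac1n\sum_i g_i\mathbf{1}\{\Upsilon(\mathbf{x}^p_i)\le\lambda\}-h(\lambda)\bigr)$ with $h(\lambda):=\mathbb{E}[g\,\mathbf{1}\{\Upsilon(\mathbf{x})\le\lambda\}]$. The class $\{(\mathbf{x},R)\mapsto g\,\mathbf{1}\{\Upsilon(\mathbf{x})\le\lambda\}\}_{\lambda\in\mathbb{R}}$ is a fixed envelope times indicators of half-lines, hence has finite $\vcd$ dimension and is Donsker under a mild moment bound on $g$; asymptotic equicontinuity then gives $\mathbb{G}^g_n(\hat q_\alpha)=\mathbb{G}^g_n(q_\alpha)+o_p(1)$ since $\hat q_\alpha\overset{p}{\rightarrow}q_\alpha$, replacing the random threshold by the population quantile inside the \emph{centered} process. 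For the deterministic part $\sqrt{n}(h(\hat q_\alpha)-h(q_\alpha))$ I would Taylor-expand, using $h'(q_\alpha)=\mathbb{E}[g\mid\Upsilon(\mathbf{x})=q_\alpha]f_\Upsilon(q_\alpha)$, and substitute the Bahadur representation $\sqrt{n}(\hat q_\alpha-q_\alpha)=-f_\Upsilon(q_\alpha)^{-1}\tfrac1{\sqrt n}\sum_i(\mathbf{1}\{\Upsilon(\mathbf{x}^p_i)\le q_\alpha\}-\alpha)+o_p(1)$. Crucially the density $f_\Upsilon(q_\alpha)$ cancels, leaving a clean i.i.d.\ correction with coefficient $\mathbb{E}[g\mid\Upsilon(\mathbf{x})=q_\alpha]$.

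Collecting terms, $\sqrt{n}(\theta^{\mathrm{base}}_{n,\alpha}-\tau^{\mathrm{new}}_{n,\alpha})$ becomes $\tfrac1\alpha\tfrac1{\sqrt n}\sum_i\psi^p_i-\tfrac1\alpha\tfrac1{\sqrt n}\sum_i(R^c_i(0)-\mathbb{E}R(0))+o_p(1)$, where the policy-arm influence function is $\psi^p_i=R^p_i(J^{q_\alpha}_i)-\mathbb{E}[R(J^{q_\alpha})]-\mathbb{E}[g\mid\Upsilon(\mathbf{x})=q_\alpha](\mathbf{1}\{\Upsilon(\mathbf{x}^p_i)\le q_\alpha\}-\alpha)$ and $J^{q_\alpha}_i$ denotes treatment under the threshold policy at $q_\alpha$ (the algebra $R^p_i(0)+g_i\mathbf{1}=R^p_i(J^{q_\alpha}_i)$ makes this combination observable). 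Here I would invoke \Cref{imai1} to replace $\tau^{\mathrm{new}}_{n,\alpha}$ by its $\sqrt{n}$-equivalent limit $\tau^{\mathrm{q}}_{\alpha}=h(q_\alpha)/\alpha$ at no asymptotic cost. Both averages are sums of i.i.d.\ terms and the two arms are independent, so the ordinary CLT yields joint asymptotic normality and the claim with $\sigma^2_{\text{base}}=\alpha^{-2}(\var(\psi^p)+\var(R(0)))$.

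The hardest part will be the consistent estimator $\hat\sigma^2_{\text{base}}$, not the limit law. The process and control-arm variances are easily estimated by sample analogues at the plug-in threshold $\hat q_\alpha$, but $\psi^p_i$ contains the boundary CATE $\mathbb{E}[R(1)-R(0)\mid\Upsilon(\mathbf{x})=q_\alpha]$, which is not directly observable since no agent reveals both potential outcomes. I would estimate its two halves by local averaging of treated policy-arm rewards and of control-arm rewards over a shrinking index window around $\hat q_\alpha$ (a kernel or binning estimator), and establish consistency of the plug-in via a uniform law of large numbers over the Donsker class together with consistency of the local estimators. The delicate issues are bandwidth choice and the slower-than-$\sqrt{n}$ rate of the boundary estimate, which is nonetheless harmless because it enters only the variance.
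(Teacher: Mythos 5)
Your proposal is correct and follows essentially the same route as the paper's proof: an empirical-process linearization around the population quantile $q_\alpha$ (Donsker/asymptotic equicontinuity for the envelope-times-half-line-indicator class to swap the random threshold into the centered process, delta method plus the Bahadur representation with cancellation of the density, then a joint CLT over the resulting i.i.d.\ influence functions), with the rank-based estimand replaced by the quantile-based one via Imai--Li's Lemma S2, and the boundary conditional means $\mathbb{E}[R(a)\mid\Upsilon(\mathbf{x})=q_\alpha]$ estimated by local averaging of order statistics near rank $\lceil\alpha n\rceil$. The only cosmetic difference is that you carry $R(1)-R(0)$ through a single indexed process where the paper applies its bivariate lemma separately to $R(1)I_{F}$ and $R(0)(1-I_F)$ and combines via the multivariate CLT; the resulting influence function and asymptotic variance are identical.
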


\section{Experiments}\label{app:experiment}
We empirically analyze the base and subgroup estimator using the provably asymptotically correct variance estimation techniques described in \Cref{sec:methods}.
We are interested in 
\begin{enumerate*}[label=(\roman*)]
  \item checking whether the asymptotically valid confidence intervals remain valid in realistic settings, and
  \item comparing the statistical power of the estimators. 
\end{enumerate*}

\paragraph{Setup}\label{setup}
As commonly done in previous work \cite{lee2019optimal,ayer2019prioritizing,mate2022field,verma2023restless,verma2023expanding,DBLP:journals/corr/abs-2308-09726}, we assume that the behavior of each agent is modeled by a Markov Decision Process. 
We focus on adherence settings, where there are two states (`good'$=1$ or `bad'$=0$) and two actions (`intervene'$=1$ or `do not intervene'$=0$), and we obtain a reward of $1$ for every timestep a beneficiary is in the good state. 
Accordingly, the policy's goal is to use interventions to keep agents in the good state. 
By default, our RCT arms consist of $n=5000$ agents and we can intervene on $20\%$ of them ($\alpha=0.2$). 
Agents transition between states according to a transition matrix $T$, where an entry $T^a_{s, s'}$
specifies the probability of transitioning from state $s\in \{0,1\}$ to $s'\in \{0,1\}$ when taking action $a\in \{0,1\}$.
We allocate the interventions in the first time step using the respective transition probabilities to move to the next state. Subsequently, we let agents transition between states using $T^0$ and collect rewards for another $9$ time steps, i.e., the reward generated by an agent is the number of timesteps in which the agent is in the good state.
We consider three domains, differing in how transition matrices are built or learned (see \Cref{app:setup}): 
\begin{description}[topsep=0pt,leftmargin=2em]
\setlength\itemsep{0em}
    \item[Synthetic] Transition probabilities are chosen uniformly at random subject to the constraint that the probability of going to a good state when you act minus when you don't act lies in a certain range, i.e., $T^1_{s,1} - T^0_{s, 1} \in [0, 0.2]$ for each state $s \in \{0,1\}$.
    \item[Medication Adherence (TB)] This domain uses real-world Tuberculosis medication adherence data from~\citet{killian2019learning}. For each agent, the data is used to fit their transition probabilities under the passive action. We then simulate the treatment effect, i.e., $T^1_{s,1} - T^0_{s, 1}$, in each state $s\in \{0,1\}$ by sampling uniformly at random from $[0, 0.2]$.
    \item[Mobile Health (mMitra)] We use real-world data from a field trial conducted by~\citet{mate2022field} to evaluate the effectiveness of service calls to improve engagements in a mobile health information program. Agent's transition probabilities both under the active and passive action are learned from the data.  
\end{description}

To choose which agents to intervene on, we calculate the classic Whittle index~\cite{weber1990index} quantifying an agent's action effect from their transition matrix. We want to estimate the effectiveness of this ``Whittle Index'' policy, which is the standard method to solve the popular restless multi-armed bandits problem, focusing on computing $95\%$ confidence intervals.
To evaluate our estimators, we compute our estimand $\tau^{\mathrm{new}}$ via Monte Carlo simulation.

\begin{figure*}[t]
    \centering
    \includegraphics[width=\linewidth]{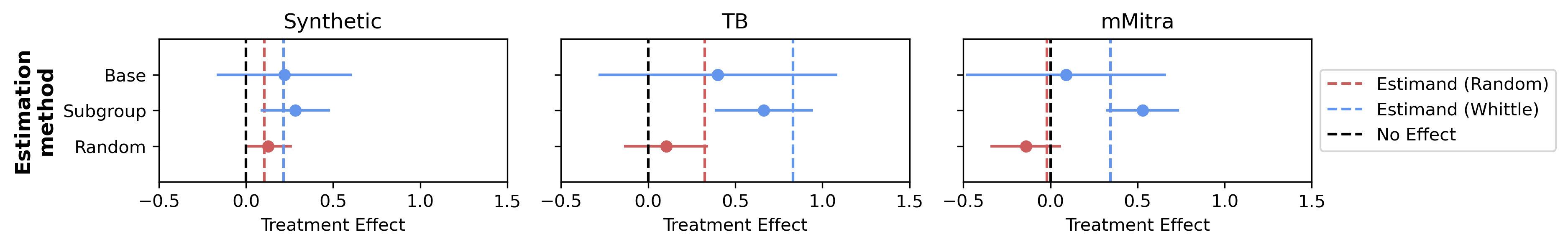}
    \vspace{-1.5em}
    \caption{A representative example of the size of confidence intervals. \textnormal{We compare different estimators for the effectiveness of the Whittle policy (blue) and the random policy (orange). The $x$-axis shows the average effect of a treatment. Vertical lines show the estimand and a zero treatment effect.
    For each estimator, we show their point estimate as a dot and their confidence interval as a line.}}
    \label{fig:cis_1step}
\end{figure*}

\paragraph{Validity (\cref{tab:validity})} We check whether the confidence intervals produced by our estimators are valid, i.e., whether the computed $95\%$ confidence intervals (which differ between runs of an RCT for one simulator) truly contain the estimand (which is constant for each simulator) $95\%$ of the times.  \cref{tab:validity} confirms that both the base and subgroup estimators produce approximately valid confidence intervals, with the error (i.e., $|95\%-\text{in CI}|$) being less than $1.5\%$ in all three domains. 

\paragraph{Power (\Cref{tab:power})} Given that both the base and subgroup estimators are valid, we can compare their power. We do this in \Cref{tab:power} by comparing the (half-)width of their confidence interval. We find that our subgroup estimator always produces tighter confidence intervals, with their width being usually \emph{around a third} of the base estimator's confidence interval across all three domains.

\paragraph{A Representative Example (\Cref{fig:cis_1step})}
It is hard to appreciate the difference between estimators in the abstract. To make the difference more concrete, we picked one representative RCT and show in~\cref{fig:cis_1step} the confidence intervals computed by our estimators for this RCT. As an example of how to read these figures, note that the fact that the confidence interval of the base estimator crosses the black vertical zero line in all three domains implies that we cannot conclude that interventions had a statistically significant positive effect using the base estimator. \Cref{fig:cis_1step} also includes in orange the random allocation policy that assigns treatments uniformly at random to $20\%$ of the agents (its confidence intervals can be correctly computed using a standard Welch's $z$-test).
We find that the subgroup estimator allows us to draw otherwise impossible statistical conclusions. 
In particular, for all three domains, based on the results of the base estimator, we cannot conclude that there is a statistically significant difference between the random and Whittle policy (their confidence intervals overlap). 
In contrast, for the subgroup estimator confidence intervals for the TB and mMitra simulator are disjoint from the random ones. 
Using the approach described at the end of \Cref{hy:Sub}, with high (i.e., $97.5\%+$) probability the expected effect of treatments allocated according to the Whittle policy is $0.2$ (resp.\ $0.4$) higher than of treatments allocated by the random policy in TB (resp.\ mMitra). 

\paragraph{Changing Hyperparameters} In \Cref{app:cHs} (\Cref{hype:1,hype:2,hype:3}), we analyze the influence of different hyperparameters. 
We vary the treatment fraction, the number of agents, the number of observed timesteps, the intervention effect (for TB and synthetic), and the confidence level. 
In all the considered variations, computed confidence intervals remain approximately valid: The error for both estimators is always less than $3\%$ and typically around $1\%$.
Regarding the power of our estimators, the subgroup estimator outperforms the base one in all considered settings, yet the extent varies: The difference is particularly large (up to a factor of $8$) if treatment resources are extremely scarce, there are only a few agents, agents are observed over a long period, or the confidence level is high (see \Cref{fig:budget}). 
An illustrative observation of the discrepancy between the two estimators is that the base estimator can require group sizes up to ten times larger than the subgroup estimator to achieve confidence intervals of a similar size (see \Cref{power:n}). 

\section{Real-World Study} \label{sec:rs}
We reevaluate the field study by \citet{verma2023restless} and start by extending our methodology in different directions to deal with the increased complexity of the real-world field trial.  

\subsection{Extended Methodology}\label{sec:extMe}
We describe various extensions of our estimators to deal with the field trial by \citet{verma2023restless}. 
Our extensions are no longer covered by the variance estimation techniques and theoretical guarantees from \Cref{sec:methods}. 
Thus, in this section, we use the standard Welch's $z$-test to compute confidence intervals. 
As argued at the end of \Cref{hy:Sub}, Welch's $z$-test produces approximately valid confidence intervals in our basic setting, and our empirical results from this section conducted using the same setup as in \Cref{app:experiment} indicate that it continues to do so for our extensions.
Details for the methods and experiments presented in this section appear in~\Cref{app:extensions-meth}.

\subsubsection{Covariates} \label{sec:cov}
To correct for imbalances between agents' covariates in the RCT arms \cite{senn2008statistical,kahan2014risks}, \citet{mate2022field} and \citet{verma2023restless} used linear regression.
The idea is to learn a linear function of covariates and a treatment indicator variable to capture the agent's reward. To correct the subgroup estimator for covariates, we do the following:
For some agent $i$ from the RCT we let $J_i$ be the action that the agent received and $x_{i,1}, \dots, x_{i,m}\in \mathbb{R}$ be the agent's numerical covariates.
We can write the regression as 
$R_i(J_i)=k+\beta J_i+ \sum_{t=1}^m \gamma_{t}x_{i,t}+\epsilon_i$, where  the coefficient $\beta$ presents the average treatment effect $\tau^{\mathrm{new}}$. 
We fit the regression over the $\alpha$-fraction of agents from the policy and control arm with the lowest indices, i.e., 
$\pi(\mathbf{X}^p_n,\alpha)\cup \pi(\mathbf{X}^c_n,\alpha)$.
Note that previous work has used the agent's arm membership as the indicator variable, i.e., they replaced $J_i$ on the right side with the agent's group membership and fitted the regression over all agents.
In our experiments (see \Cref{tab:validity_ext,tab:power_ext} in \Cref{app:extensions}), correcting for covariances can have both positive and negative effects on the size of confidence intervals depending on the correlation between covariates and the reward. 
For the subgroup estimator confidence intervals remain approximately valid, whereas the confidence intervals produced by the base estimator exhibit a $10\%$ error for one of our~simulators. 

\subsubsection{Timestep Truncation}\label{tt}
A common scenario in treatment allocation is to observe agents' behavior for $T$ timesteps after treatments are allocated (and use their combined behavior as the total reward).
Choosing this $T$ is an impactful design decision of the trial.
If we use a small $T$ but intervention effects last for more than $T$ steps, we underestimate the additional reward generated by an intervention leading to a conservative estimate. 
Conversely, if we pick large values of $T$, then the variance in agents' behavior will increase, leading to a larger variance in our estimators:
Decreasing $T$ shrinks confidence intervals while simultaneously shifting them down. 
We find in our experiments that the former effect can be significantly stronger in some cases, leading to higher lower bounds of confidence intervals (see \Cref{tab:power_ext} in \Cref{app:extensions}). 

\subsubsection{Sequential Allocation}
In mMitra interventions are allocated over multiple timesteps with the constraint that each agent only receives a resource in one timestep.
Our subgroup estimator admits a natural extension to this setting: We take the difference between the summed reward of the agents from the policy arm that received treatment and the summed reward of the agents from the control arm that would have been allocated treatment by the policy in one of the steps.  
We find in our experiments that the validity and size of confidence intervals remain unaffected by the number of timesteps over which resources are distributed (see \Cref{fig:multi} in \Cref{app:extensions}).

\begin{figure*}
	\centering
	\includegraphics[width=0.24\linewidth]{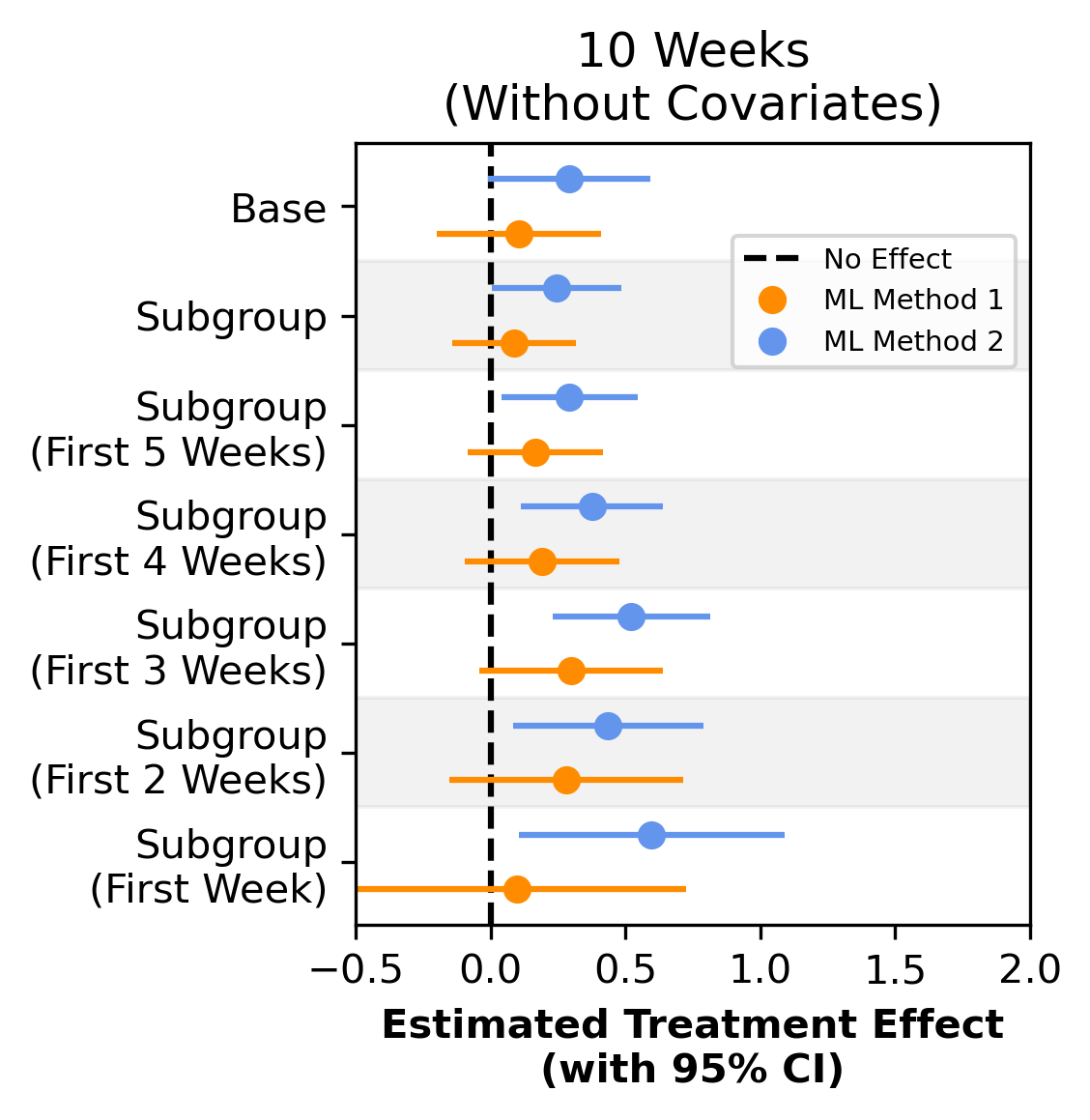} \hfill
	\includegraphics[width=0.24\linewidth]{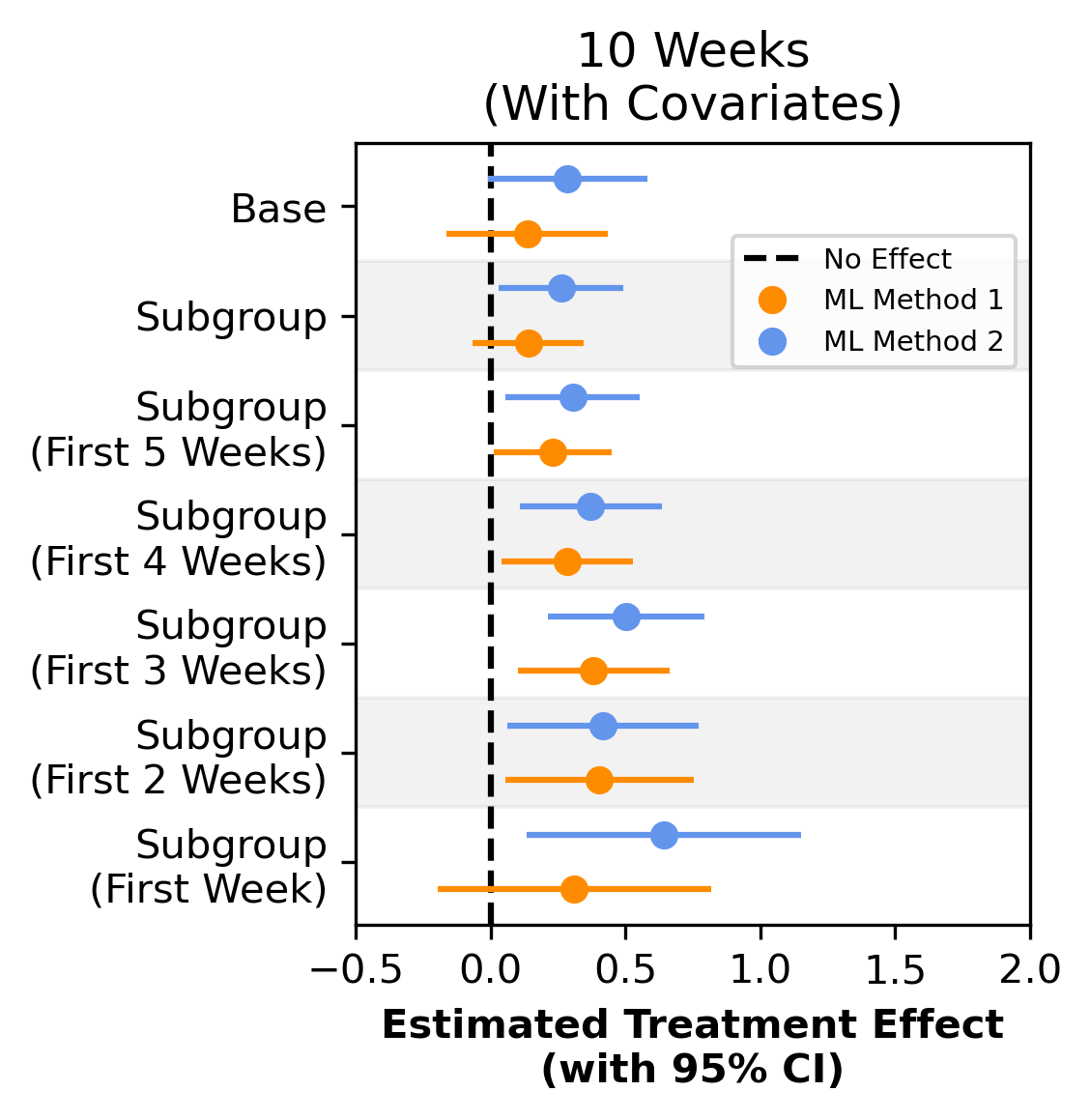}\hfill
	\includegraphics[width=0.24\linewidth]{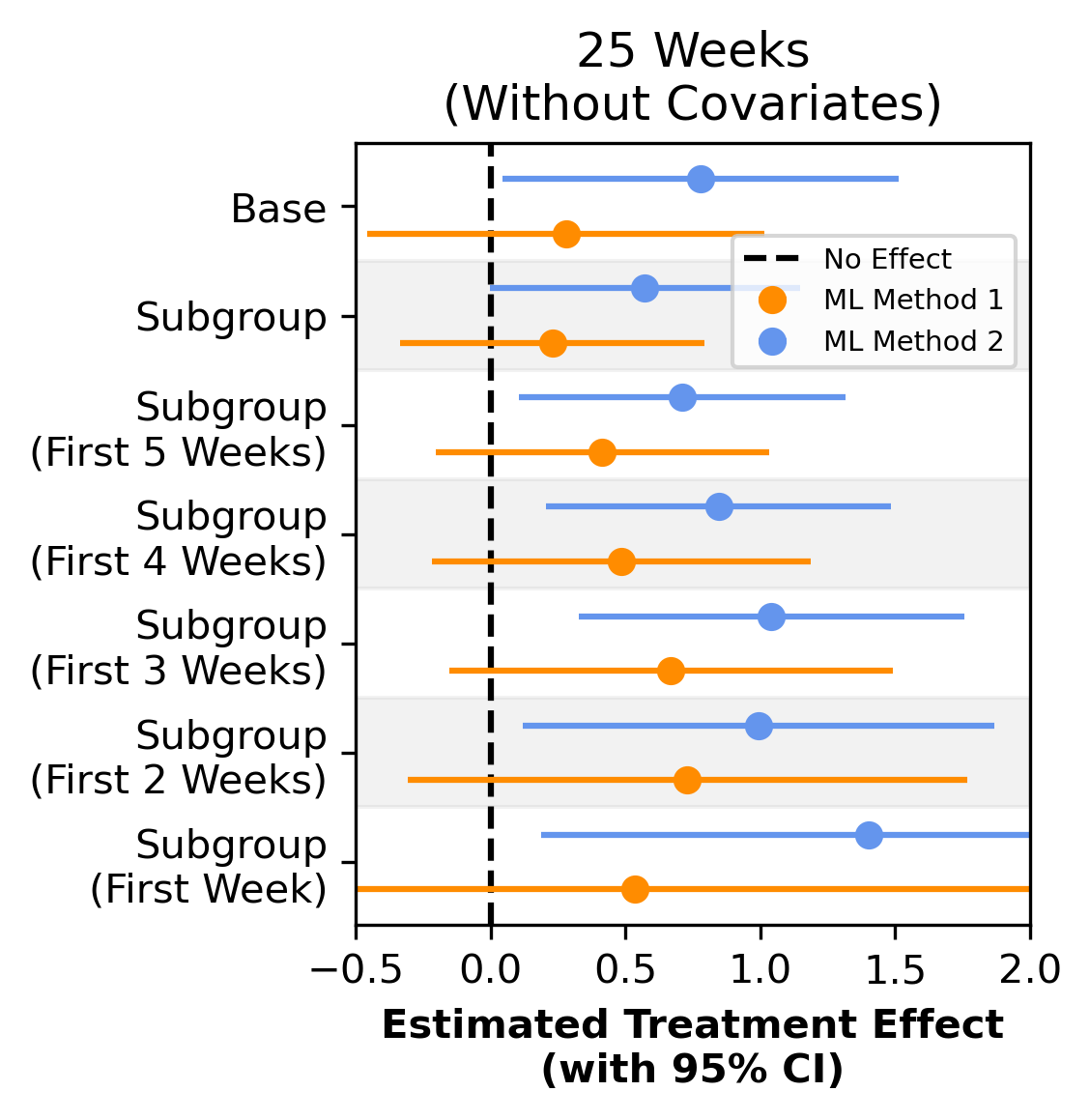}\hfill
	\includegraphics[width=0.24\linewidth]{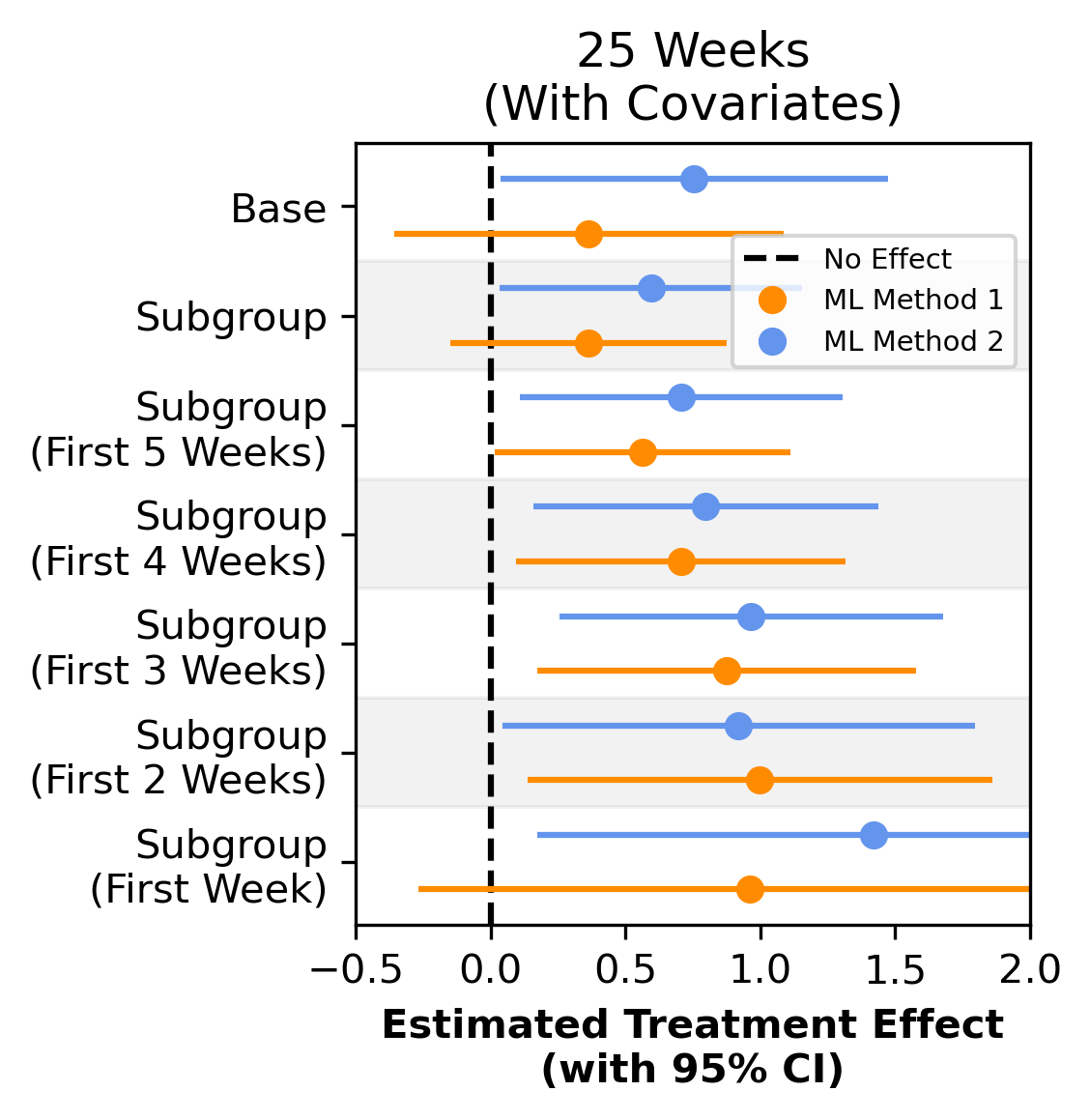}
	\caption{Evaluation of RCT from~\citet{verma2023restless}. \textnormal{We show estimators' point estimates as a dot and $95\%$-confidence intervals as a line for different evaluation horizons with and without correcting for covariates. ``Subgroup (First $x$ weeks)'' refers to our subgroup estimator applied to all agents that (would) have been allocated a treatment up until week~$x$.}}
	\label{fig:rw}
\end{figure*}

\subsection{Results}
We conclude by re-evaluating a real-world RCT conducted by~\citet{verma2023restless}.
The goal of their study was to evaluate the effectiveness of different sequential index-based allocation policies to allocate live service calls to boost participation in ARMMAN's mMitra program (see \Cref{introduction}).
They follow a restless multi-armed bandits approach and use the classic Whittle index \cite{weber1990index}.
Each RCT arm contains $3000$ agents, and $1800$ of them are allocated service calls over $6$ weeks ($300$ calls per week).
The reward generated by an agent is the agent's engagement in the program, i.e., the number of weeks in which they listen to a substantial part of the week's automated voice message. 
\citet{verma2023restless} chose to end the evaluation of their field trial after $10$ weeks, i.e., the reward captures the agents' behavior for $10$ weeks (including the $6$ weeks where treatments are assigned); however, their data also covers the following weeks. Two index-based allocation policies are studied:
``ML Method 1'' is the baseline and  ``ML Method 2'' is their improved approach for index computation.\footnote{ML Method 1 uses past data to learn a model for each beneficiary. Subsequently, in a separate step, the Whittle index is computed for each beneficiary, and based on this resources are allocated. In contrast, ML Method 2 follows a so-called decision-focused learning approach and combines these two steps into one.}
\citet{verma2023restless} used the base estimator with covariate correction (\Cref{sec:cov}). 

\paragraph{Basic Results}
We first focus on the $10$ weeks case as used in the original study (i.e., the two leftmost plots for the case with and without covariate correction).
The first two rows in each subfigure of \Cref{fig:rw} show the results for the base and subgroup estimator.  
We find that using the subgroup instead of the base estimator and correcting for covariates leads to smaller confidence intervals. 
However, none of the four methods is able to establish a positive average effect for interventions as allocated by ML method 1 (the lower bound of the confidence interval is always smaller than $0$), while for ML method 2 the lower bound for all four methods is around $0$. 

\paragraph{Fine-Grained Analysis}
As $60\%$ of agents received a call throughout the trial, establishing a positive average intervention effect (on this large subpopulation) can be quite challenging, since the effect of cleverly assigned treatments decreases in the number of allocated treatments.
This raises the question of whether service calls significantly positively affect at least \emph{some} of the $1800$ agents receiving them, which turns out to be the case.  
To answer this question, we make use of the flexibility of the subgroup estimator. For some $x\in [1,5]$, we estimate the average effect of a service call on agents called in one of the first $x$ weeks by comparing their reward to the reward of agents that would have been called in the control arm in one of the first $x$~weeks. 

Turning to the results (rows three to seven in each subfigure of \Cref{fig:rw}), 
we find that for ML method 2 this view allows us to conclude statistically significant (large) positive intervention effects for agents called in one of the first $x$ for each $x\in [1,5]$, irrespective of whether we correct for covariates or not. 
Note that for $x=1$ and no covariate correction, we recover the single-round allocation setting and the standard subgroup estimator discussed in \Cref{sec:methods}. 
Thus, our conclusion that interventions---as prescribed by ML method 2---have a statistically significant effect on agents called in the first week is theoretically backed by our proofs from \Cref{sec:methods}. 
Moreover, when correcting for covariates, we can even establish that the service calls allocated in the first weeks by the ML method 1 have a statistically significant effect. 
Looking into the fine-grained structure of intervention effects was impossible under the base estimator, as it treats the policy arm as one indecomposable unit.

\paragraph{RCT Budget}
The reason why \citet{verma2023restless} allocated treatment to so many agents in their RCT is because the base estimator has an enormous variance and suffers from extremely low statistical power when the treatment fraction is low (see \Cref{fig:budget} in \Cref{app:cHs}). 
Thus, trying to establish a positive average intervention effect on the $1800$ agents is in some sense the best one can do with the base estimator. However, the subgroup estimator shows a much better performance when the budget is small. As a result, it allows us to run RCTs with much lower costs for which average intervention effects can even be established more easily.

\paragraph{$25$ Weeks of Evaluation Horizon} Moving from observing beneficiaries for a total of $10$ weeks to a total of $25$ weeks has signifcant consequences. As featured in \Cref{tt}, this increases the value of the estimator while concurrently leading to (much) larger confidence intervals. However, despite this increase in the size of the confidence intervals, it turns out that this leads to an increase in the lower bounds of confidence intervals here. As a result, for a confidence level of $95\%$, using $25$ instead of $10$ weeks allows us to establish up to $50\%$ larger effect sizes, e.g., for the first three weeks for ML method 2 without covariate correction.
A relevant side conclusion of this analysis is that intervention effects in mMitra seem to be long-lasting.

\bibliographystyle{ACM-Reference-Format}

\newpage

\newpage
\appendix

\onecolumn

\section*{Appendix}
\tableofcontents

\section{Additional Material for Section \ref{app:estimation}}

\subsection{Corner Case: Base Estimator outperforms Subgroup Estimator}\label{app:cc}
Intuitively, the base estimator performs advantageously in cases where agents that are at the boundary of getting treated generate a much higher reward than other agents. The subgroup estimator might include some of these ``noisy'' agents in the control arm while not selecting them in the policy arm, leading to noisy estimates. The base estimator is better equipped to handle such scenarios, as it takes all agents into account so that such effects can cancel out.

To make this intuition more concrete, we state a result that we will later prove in \Cref{app:Base,app:Hyb}: 
\begin{theorem}[informal]\label{context} 
    Under mild assumptions, it holds that $
    \sqrt{n}\left(\theta^{\mathrm{SG}}_{n,\alpha}(\pi)-\tau^{\mathrm{new}}_{n,\alpha}(\pi)\right)
   \overset{d}{\rightarrow} \mathcal{N}(0, \sigma^2_{\text{SG}})$ and 
   $\sqrt{n}\left(\theta^{\mathrm{base}}_{n,\alpha}(\pi)-\tau^{\mathrm{new}}_{n,\alpha}(\pi)\right)
   \overset{d}{\rightarrow} \mathcal{N}(0, \sigma^2_{\text{base}})$ where 
   \begin{align*}
 \sigma^2_{\text{SG}}  & =\frac{1}{\alpha^2}\bigg(\alpha(1-\alpha)(\rho_1^2+  \rho_0^2)  - 2(1-\alpha)(\rho_1 \mu_1  +\rho_0 \mu_0)+ \sigma^2_1+\sigma^2_0\bigg).\\
        \sigma^2_{\text{base}} &
    = \frac{1}{\alpha^2}\left(\alpha(1-\alpha)(\rho_1-\rho_0)^2  
    +(2\alpha\check{\mu}_0- 2(1-\alpha)\mu_1 )(\rho_1-\rho_0)+\sigma^2_1 + \check{\sigma^2_0} -2\mu_1\check{\mu}_0+ \mathrm{Var}(R(0))\right)
   \end{align*}   
    with $\mu_{i}=\mathbb{E}[R(i))I[\Upsilon(\mathbf{x})\leq q_{\alpha}]$, $\check{\mu_{i}}=\mathbb{E}[R(i))I[\Upsilon(\mathbf{x})> q_{\alpha}]$, $\rho_{i}=\mathbb{E}[R(i)|\Upsilon(\mathbf{x})= q_{\alpha}]$, $\sigma^2_i=\mathrm{Var}[R(i)I[\Upsilon(\mathbf{x})\leq q_\alpha]]$ and $\check{\sigma^2_i}=\mathrm{Var}[R(i)I[\Upsilon(\mathbf{x})> q_\alpha]]$ for $i\in \{0,1\}$ where $\mathbb{E}$ is taken over $(\mathbf{x},R)\sim P$.
\end{theorem}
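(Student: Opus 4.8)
The plan is to obtain both variance formulas from an asymptotic linearization (influence-function representation) of each estimator, after which everything reduces to a short covariance computation. The convergence in distribution itself is already available from \Cref{theorem,ourtheorem}; what remains is to identify the limiting variances explicitly, i.e.\ to compute the variance of the influence function of each estimator and then combine the two independent RCT arms.

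First I would handle a single arm by writing the top-$k$ selection as a threshold selection at the empirical $\alpha$-quantile $\hat q$ of the indices and expanding around the population quantile $q_\alpha$. With $B = I[\Upsilon(\mathbf{x}) \leq q_\alpha]$, the treated-reward sum splits as the i.i.d.\ sum $\sum_i R_i(1) B_i$ plus a boundary correction $\sum_i R_i(1)\left(I[\Upsilon(\mathbf{x}_i) \leq \hat q] - B_i\right)$. Using the Bahadur/Hájek linearization of the empirical quantile (equivalently, the functional delta method underlying the empirical-process argument of \Cref{hy:Base}), this correction equals $\rho_1\left(\alpha n - \sum_i B_i\right) + o_p(\sqrt n)$, where $\rho_1 = \mathbb{E}[R(1)\mid \Upsilon(\mathbf{x}) = q_\alpha]$ captures the reward mass crossing the boundary. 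This yields the per-agent influence functions
\[
\psi_1(\mathbf{x}, R) = R(1)B - \mu_1 + \rho_1(\alpha - B), \qquad \psi_0(\mathbf{x}, R) = R(0)B - \mu_0 + \rho_0(\alpha - B).
\]
Applying the same expansion to the \emph{above}-threshold sum appearing in the base estimator flips the sign of the correction, so the base policy arm has influence function $\psi_P = R(1)B + R(0)(1-B) - \mu_1 - \check\mu_0 + (\rho_1 - \rho_0)(\alpha - B)$, while the base control arm sums over all agents and contributes the trivial influence function $R(0) - \mathbb{E}[R(0)]$.

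Next I would assemble the variances. Because the two arms are independent and $n/\lceil\alpha n\rceil \to 1/\alpha$, we get $\sigma^2_{\text{SG}} = \alpha^{-2}(\mathrm{Var}\,\psi_1 + \mathrm{Var}\,\psi_0)$ and $\sigma^2_{\text{base}} = \alpha^{-2}(\mathrm{Var}\,\psi_P + \mathrm{Var}(R(0)))$. Each variance is then a routine expansion using $B^2 = B$, $(1-B)B = 0$, and the definitions of $\mu_i,\check\mu_i,\sigma^2_i,\check\sigma^2_i$. For the subgroup estimator, $\mathrm{Var}\,\psi_1 = \mathrm{Var}((R(1)-\rho_1)B) = \sigma^2_1 - 2(1-\alpha)\rho_1\mu_1 + \alpha(1-\alpha)\rho_1^2$, and summing the two arms recovers the stated $\sigma^2_{\text{SG}}$. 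For the base estimator I would rewrite $\psi_P = R(0) + (R(1)-R(0)-(\rho_1-\rho_0))B + \text{const}$ and use $\mathrm{Cov}(R(1)B, R(0)(1-B)) = -\mu_1\check\mu_0$ together with the identities $\mathrm{Cov}(R(1)B, B) = (1-\alpha)\mu_1$ and $\mathrm{Cov}(R(0)(1-B), B) = -\alpha\check\mu_0$, which collect exactly into the displayed $\sigma^2_{\text{base}}$.

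The hard part will be the rigorous justification of the boundary correction, i.e.\ controlling $\sum_i R_i(1)\left(I[\Upsilon \leq \hat q] - B_i\right)$ uniformly as $\hat q \to q_\alpha$ and showing it is genuinely linear with slope $\rho_1$. This is precisely where the ``mild assumptions'' enter: one needs the index law to admit a positive, continuous density at $q_\alpha$ (so the quantile functional is Hadamard-differentiable), the conditional means $\mathbb{E}[R(i)\mid \Upsilon = \lambda]$ to be continuous at $\lambda = q_\alpha$ (so $\rho_i$ is well defined and governs the slope), and finite second moments of rewards (so the influence functions are square-integrable and the CLT applies). Once this linearization is established — whether via a Bahadur representation or the Donsker/empirical-process route of \Cref{hy:Base} — both formulas follow mechanically from the covariance algebra above.
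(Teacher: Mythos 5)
Your proposal is correct and follows essentially the same route as the paper: the linearization you write down (i.i.d.\ sum at the fixed quantile $q_\alpha$ plus a boundary correction with slope $\rho_i$ coming from the Bahadur representation of the empirical quantile) is exactly the paper's decomposition in \Cref{proofsec:1}, where the remainder is killed by the Donsker/asymptotic-equicontinuity argument of \Cref{zero-conv} and the slope is identified via the delta method applied to $\varphi(t)=\mathbb{E}[ZI[W\le t]]$; your covariance algebra for $\psi_1,\psi_0,\psi_P$ reproduces $\sigma^2_{\text{SG}}$ and $\sigma^2_{\text{base}}$ correctly. The only ingredient you gloss over is the switch between the rank-based centering $\tau^{\mathrm{new}}_{n,\alpha}$ and the quantile-based one, which the paper handles with \Cref{th:quant-vs-ranks}.
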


We present one specific example in the following, whose crucial ingredient is that we perturb the reward of an agent with covariates $\mathbf{x}$ with $f(\Upsilon(\mathbf{x})-\alpha,0,0.05)$, where $f(a,\mu,\sigma)$ is the pdf of $\mathcal{N}(\mu,\sigma)$ evaluated at $a$. This means that the reward of agents whose index is close to the $\alpha$-quantile $q_{\alpha}$ will get a large ``boost'' in their reward. 
To understand why the estimators react differently to this, we turn to the variance expressions from \Cref{context}. 
The above-described ``boost'' will increase terms $\rho_1$ and $\rho_0$ drastically at the same rate, while only marginally affecting all other terms. 
For the base estimator, the increase in  $\rho_1$ and $\rho_0$ will approximately cancel out each other (as only the difference $\rho_1-\rho_0$ between the terms appear).  In contrast, in the variance of the subgroup estimator, the sum $\rho_1+\rho_0$ of the two terms appears, implying that no such effect takes place and the variance substantially increases. 

Formally, in our example, we use $n=500$ and $\alpha=0.5$. Each agent $i$ has a single covariate $x_i\sim \mathcal{N}(0,1)$ and the index function is the identity function, i.e., the index of agent $i$ is $x_i$. To generate the reward of an agent, we sample some noise $y_i\sim \mathcal{N}(0,1)$ for every agent. Moreover, for each agent we add an index-dependent ``boost'' $z_i=f(x_i-\alpha,0,0.05)$, where $f(a,\mu,\sigma)$ is the pdf of $\mathcal{N}(\mu,\sigma)$ evaluated at $a$. 
We set $R_i(0)=x_i+y_i+z_i$ and $R_i(1)=R_i(0)+1$, i.e., we have a constant intervention effect of $1$. 
\Cref{ex1} shows the distribution of the value of the estimators in $100000$ simulated RCTs (note that we also include the hybrid estimator which we present in the next section).
We see here that the variance of the subgroup estimator is higher than that of the base one leading to around $20\%$ larger confidence intervals. 

\begin{figure*}[t!]
    \centering
    \begin{subfigure}[t]{0.5\textwidth}
        \centering
        \includegraphics[width=\textwidth]{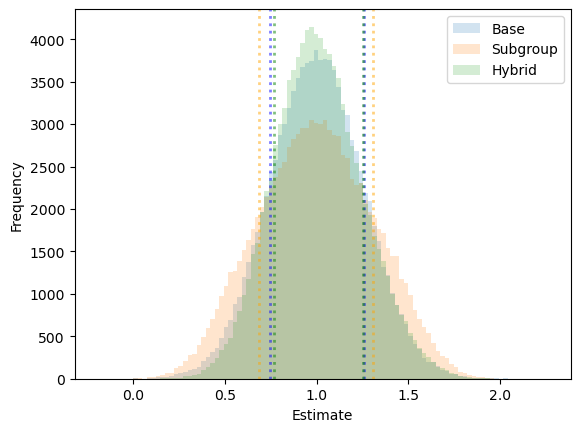}
        \caption{Example from \Cref{app:cc}.\label{ex1}}
    \end{subfigure}%
    ~ 
    \begin{subfigure}[t]{0.5\textwidth}
        \centering
        \includegraphics[width=\textwidth]{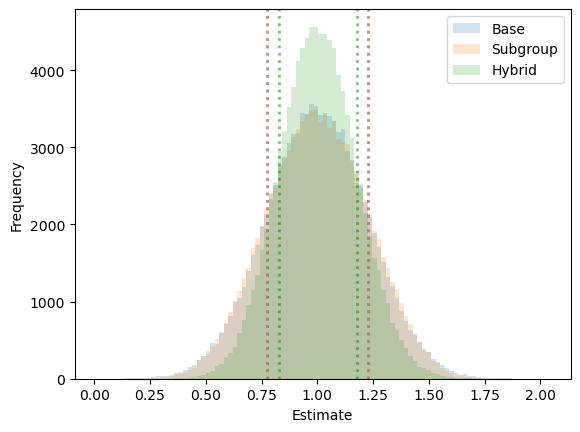}
        \caption{Example from \Cref{app:hybrid}\label{ex2}}
    \end{subfigure}
    \caption{Distribution of the value of different estimators for $100000$ RCTs. The estimand is $1$ by construction. Horizontal lines indicate one standard deviation below and above the mean.}
\end{figure*}

\subsection{Hybrid Estimator}\label{app:hybrid}
Motivated by \Cref{app:cc}, we propose a hybrid estimator that linearly combines the base and subgroup estimators, thereby blending their strengths. 
Specifically, for any sequence $\hat{w}_n$ for which $\hat{w}_n \overset{p}{\rightarrow} w$ for some fixed $w \in \mathbb{R}$, we define the \emph{hybrid estimator} as  $$\theta^{\mathrm{hyb}}_{n,\alpha,\hat{w}}(\pi):=(1-\hat{w}_n)\cdot \theta^{\mathrm{SG}}_{n,\alpha}(\pi)+\hat{w}_n\cdot \theta^{\mathrm{base}}_{n,\alpha}(\pi).$$
In \Cref{hy:Base}, we discuss formulas for computing the ``optimal'' value $w^*$ of $w$  and for computing the (asymptotically valid) confidence intervals of the induced estimator. However, in our experiments from \Cref{app:experiment} we find that the optimal hybrid estimator performs always extremely similarly to the subgroup one. 
However, there are some cases where the hybrid estimator with weight $w^*$ performs better than the other two. 
Specifically, we can slightly adjust the example described in \Cref{app:cc} by setting $z_i=f(x_i-\alpha,0,0.08)$ and observe in this case that the variance of the hybrid estimator is smaller than of the other two. $95\%$ confidence intervals produced by the hybrid estimator are around $20\%$ smaller than those of computed by the other two.  

\subsection{Threshold Estimator}
Note that an alternative view on the subgroup estimator is that it compares the average behavior of agents receiving treatment to a proxy for their average behavior in case we did not act on them. 
In the context of the subgroup estimator, this proxy is obtained by examining the agents from the control arm that would have been selected by the policy. However, there are also other approaches:
Let $\lambda$ be the largest index value of an agent on which we acted in the policy arm. 
One alternative approach is to estimate the expected intervention effect of the threshold policy $\upsilon^\Upsilon(\cdot,\lambda)$ as a proxy of $\tau^{\mathrm{new}}_{n,\alpha}(\pi^\Upsilon)$ (note that, up to ties in the index values, $\upsilon^\Upsilon(\cdot,\lambda)$ would have selected the same agents in the policy arm as $\pi^{\Upsilon}$).
Recall that the expected intervention effect of $\upsilon^\Upsilon(\cdot,\lambda)$ is much easier to deal with from a statistical point of view, as the behavior of different agents is no longer linked to each other and they can be viewed as fully independent again. 
We arrive at the following estimator:
$$\theta^{\mathrm{TE}}_{n,\alpha}(\pi)=\frac{1}{\lceil\alpha n\rceil} \!\!\!\!\! \sum_{i\in \pi(\mathbf{X}^p_n,\alpha)} \!\!\!\!\!
R^p_i(1)- \frac{1}{|\upsilon^\Upsilon(\mathbf{X}^c_n,\lambda)|}
  \!\!\!\!\! \sum_{i\in \upsilon^\Upsilon(\mathbf{X}^c_n,\lambda)}  \!\!\!\!\!  R^c_i(0).$$
However, we found in our experiments that the subgroup and threshold estimators behave very similarly in practice.

\subsection{Relation to \citet{pmlr-v202-mate23a}}\label{sec:comp}

The main idea of \citet{pmlr-v202-mate23a} is to reduce the variance of the estimator $\theta^{\mathrm{base}}$  by reshuffling individuals across experimental arms after the end of the trial. The idea is that the data we observe in our RCT also gives us access to the results of hypothetical RCTs with different partitions into policy and control arms where the treated set of agents does not change.
One of the main limitations of the work of \citet{pmlr-v202-mate23a} is that their algorithm only produces a point-estimate (and no confidence intervals), probably also partly since they could not provide a closed-form expression of the estimator.
However, in the setting considered by us the latter problem can be fixed.

In the single-step setting, their algorithm reduces to the following: Let $\lambda$ be the largest index of an agent in the policy arm that gets a treatment, let $N^c$ be the agents in the control arm with an index above $\lambda$, and let $N^p$ be the agents in the policy arm which we did not treat. Note that if we had exchanged some agent from $N^c$ with some agent from $N^p$ before the start of the trial, both of them would continue to not receive any action, so we have access to the outcome of this hypothetical trial.  
Let $r=\frac{1}{|N^c\cup N^p|} \sum_{i\in N^c\cup N^p} R_i(0)$ be the average reward of agents from $N^c\cup N^p$. 
The idea of \citet{pmlr-v202-mate23a} is now to replace the reward $R_i(0)$ of agents from $N^c\cup N^p$ with $r$ in the definition of $\theta^{\mathrm{base}}$. As a result of this, non-treated agents from the control and policy arm will partly cancel out each other, resulting in: 
$$\frac{1}{n}\left(\sum_{i\in [n]\setminus N^p}\!\!\!\!\! R_i(1)+(|N^p|-|N^c|)r-\!\!\!\!\!\!\! \sum_{i\in [n]\setminus N^c} \!\!\!\!\!\!\! R_i(0)\right).$$
This estimator can be interpreted as a rescaled and perturbed version of the threshold estimator, where in case that $|N^p|\neq |N^c|$ the smaller of the two groups ($[n]\setminus N^p$ vs. $[n]\setminus N^c$) gets ``filled'' with agents whose reward we estimate as $r$ to result in equal-sized groups. 
The same analogy also holds in the sequential setting.

\section{Additional Material for Section \ref{hy:Sub}}\label{app:Sub}
It remains to describe the assumptions under which \Cref{theorem} holds.
Recall that $F_{\Upsilon}(\lambda)=\mathbb{P}_{(\mathbf{x},R)\sim P}[\Upsilon(\mathbf{x})\leq \lambda]$  is the cumulative distribution function of indices and let 
$F^{-1}(p)=\inf\{\lambda\mid F_{\Upsilon}(\lambda)\geq p\}$ be the quantile function of $F_{\Upsilon}$. 
In addition to the assumptions made in our setup from \Cref{prelim:SS}, the additional assumptions (which are Assumptions 4 and 5 in the work of \citet{imai2023statistical}) are: 

\begin{assumption}\label{third-moment}
    $\mathbb{E}_{(\mathbf{x},R)\sim P} |R_i(i)|^3< \infty$ for $i\in \{0,1\}$.
\end{assumption}

\begin{assumption}\label{pos-var}
    $\var_{(\mathbf{x},R)\sim P} R_i(i)>0$ for $i\in \{0,1\}$.
\end{assumption}

\begin{assumption}\label{regularity}
 $\mathbb{E}_{(\mathbf{x},R)\sim P} [R_i(1)-R_i(0)\mid \Upsilon(\mathbf{x})=F^{-1}(p)]$ is left-continuous (in $p$) with bounded variation on any interval $(\gamma,1-\gamma)$ with $\gamma>0$ and continuous at $\alpha$.
\end{assumption}

\section{Additional Material for Section \ref{app:experiment}}

\subsection{Details on Setup}\label{app:setup}

To compute a high-quality approximation of our estimand, we take the average over $\num{1000}$ RCTs. In each of these RCTs, we sample $n$ agents (as characterized by their transition matrix) uniformly at random. Subsequently, using their transition probabilities, we analytically compute for the  $\lceil\alpha n\rceil$ agents with the lowest index the average difference in expected reward when they are intervened on or not.

\vspace{0.5em}\noindent We describe the simulation domains in more detail below:

\paragraph{Synthetic} Transition probabilities in the absence of an intervention are chosen uniformly at random:
$$T^0_{0, 1}, T^0_{1, 1} \sim U[0, 1]\text{ and, } T^0_{s, 0} = 1 - T^0_{s, 1}\text{ for }s=\{0,1\}$$
where $1$ is the good state and $0$ is the bad state. The probabilities for when we \textit{do} intervene (active transitions $T^1$) are chosen uniformly at random, with the constraint that you are more likely to transition to the good state when you act: 
$$T^1_{0, 1}, T^1_{1, 1} \sim U[0, 1]\text{ s.t., } (T^1_{s, 1} - T^0_{s, 1}) \in [0, 0.2] \text{ for }s=\{0,1\}$$

\paragraph{Medication Adherence (TB)} We use data from~\citet{killian2019learning} to learn the passive transition probabilities for different agents. We then sample the effect of acting, i.e., $T^1_{s,1} - T^0_{s, 1}$, in each state $s\in \{0,1\}$ uniformly at random from $[0, 0.2]$ for every agent. 

\paragraph{Mobile Health (mMitra)} We use the data from the `random' arm of the field trial in~\citet{mate2022field} to generate transition probabilities from the observed data. We do this by first discretizing engagement into 2 states---an engaging beneficiary listens to the weekly automated voice message (average length 60 seconds) for more than 30 seconds---and sequencing them to create an array $(s_0, a_0, s_1, \ldots)$. Then, to get the transition matrix for beneficiary $i$, we combine the observed transitions with $P_{\text{pop}}$, a prior created by pooling all the beneficiaries' trajectories together such that for each beneficiary:
\begin{align*}
    T^a_{s,s'} = P(s' | s, a) = \frac{\alpha P_{\text{pop}}(s' | s, a) + N(s, a, s')}{\sum_{x \in \mathcal{S}} \alpha P_{\text{pop}}(x | s, a) + N(s, a, x)}
\end{align*}
where $N(s, a, s')$ is the number of times the sub-sequence $(s, a, s')$ occurs in the trajectory of that beneficiary, and $\alpha = 5$ is the strength of the prior.

\subsection{\Cref{fig:consistency}}
In \Cref{fig:consistency}, we give examples for the confidence intervals produced by the subgroup estimator for $100$ RCTs generated uniformly at random for the three different simulators. 
In blue, we see the estimand and we mark the confidence interval in red if the estimand lies outside of it (which should ideally happen $5\%$ of the time). We see that the size of confidence intervals stays roughly the same across different RCTs, while the position is slightly changing sometimes pushing the estimand outside of the interval.

\begin{figure}[t]
    \centering
    \includegraphics[width=\linewidth]{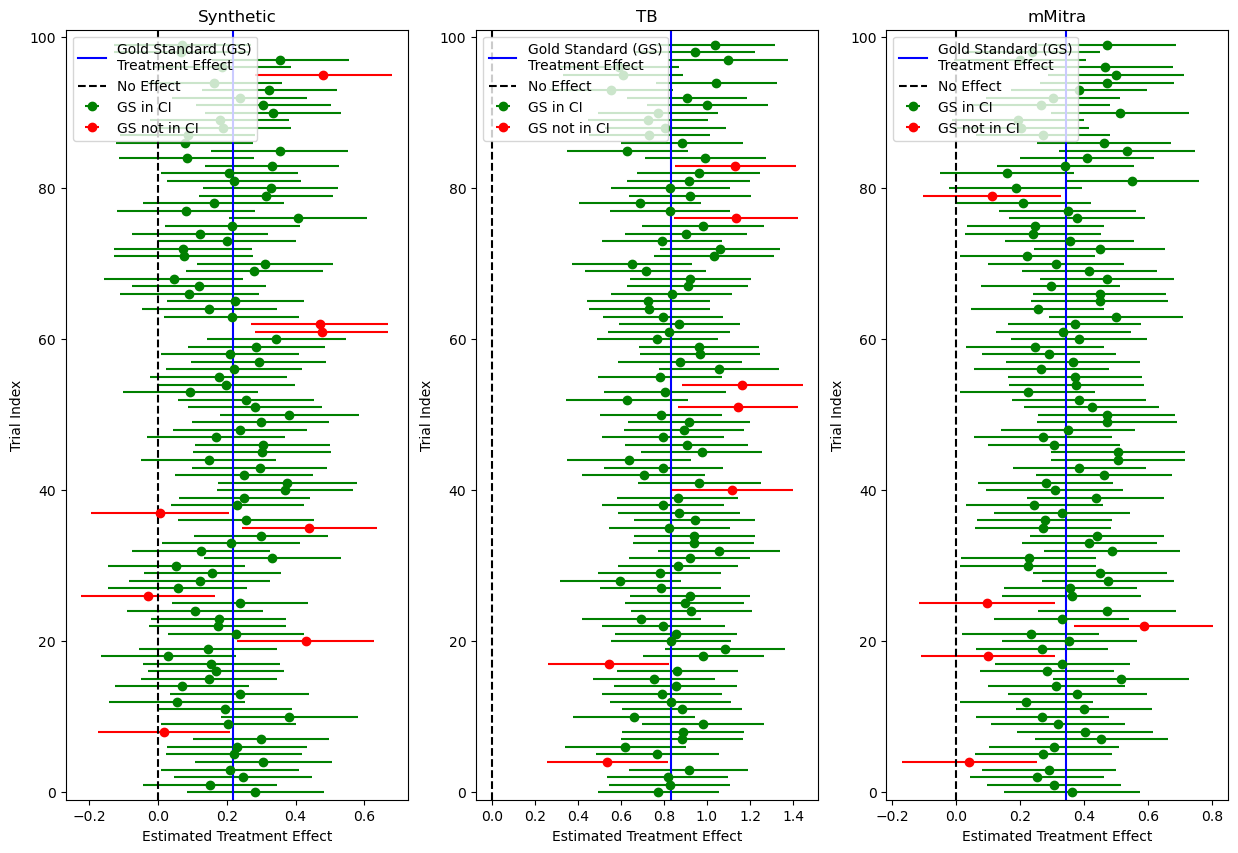}
    \vspace{-2em}
    \caption{Confidence Intervals created by the Subgroup Estimator for 100 different simulations.}
    \label{fig:consistency}
\end{figure}

\subsection{Changing Hyperparameters} \label{app:cHs}

We analyze the influence of the hyperparameters of our simulation. Our default configuration is $n=5000$ agents, $\alpha=0.2$, $10$ observed timesteps, a maximum intervention effect of $0.2$ (for synthetic and TB), and a $95\%$ confidence level. Then, in each experiment, we vary one parameter while keeping the others constant. We give a summary of the insights from these experiments below:
\begin{description}
    \item[Treatment Fraction (\Cref{fig:budget1,fig:budget})] We vary the treatment fraction $\alpha$. We observe the natural trend that the smaller the treatment fraction, the larger the confidence intervals. However, the strength of this effect is very different for the two estimators with the base estimator producing very large intervals as soon as $\alpha$ drops below $0.1$. Moreover, the confidence intervals output by the base estimator exhibit errors up to $3\%$ here (which is much higher than what we observe elsewhere). For the subgroup estimator, the error is small $\leq 1\%$ in almost all cases.
    \item[Number of Agents (\Cref{power:n1,power:n})] We vary the number of agents while keeping the treatment fraction constant. This does not seem to have any clear effect on the validity of confidence intervals. For the size of confidence intervals, we observe the natural trend that if we have more agents, the size of confidence intervals naturally shrinks. The strength of the effect is roughly similar for the two estimators. However, notably, even with $20000$ agents, the base estimator still produces intervals of considerable size.  
    \item[Number of Observed Timestep (\Cref{fig:O,fig:O2})] We vary the number of timesteps we observe after the allocation of treatment, i.e., the number of timesteps over which agents collect reward if they are in the good state. This does not have a clear impact on the validity of confidence intervals. Clearly, the longer we observe agents, the higher will be the variance in their behavior. Thus, it is unsurprising that for both estimators confidence intervals get larger when more steps are observed. Notably, this increase is particularly pronounced for the base estimator on the mMitra and TB domains.
    \item[Intervention Effect (\Cref{hype:2})] We analyze what happens if we change the intervention effect. Recall that, for both the synthetic and TB domains, we sample the transition probabilities such that the probability of the active action going to a good state exceeds that of the passive action by a maximum of $0.2$, i.e., $T^1_{s,1} - T^0_{s, 1} \in [0, 0.2]$. In this set of experiments, we vary this "upper bound" of $0.1$ to $0.5$. We find that the effect size does not seem to have a strong influence on the validity and size of confidence intervals. 
    \item[Confidence Level (\Cref{hype:3})] So far, we focused on $95\%$ confidence intervals. Here, we examine the performance of our estimators for $90\%$ and $99\%$ confidence intervals. In terms of validity, we find that the error is roughly similar independent of the confidence level. We observe that most often the confidence intervals are slightly under-covering, i.e., the estimand does not fall into the confidence interval sufficiently often.  In terms of sizes, we unsurprisingly find that we have larger intervals when increasing the confidence level. What is more surprising is that for the subgroup estimator the difference between $90\%$ and $95\%$ is roughly similar to the difference between $95\%$ and $99\%$, while for the base estimator the latter difference is larger. 
\end{description}

\begin{figure*}[t]
    \begin{subfigure}[t]{0.5\textwidth}
        \centering
        \includegraphics[width=\textwidth]{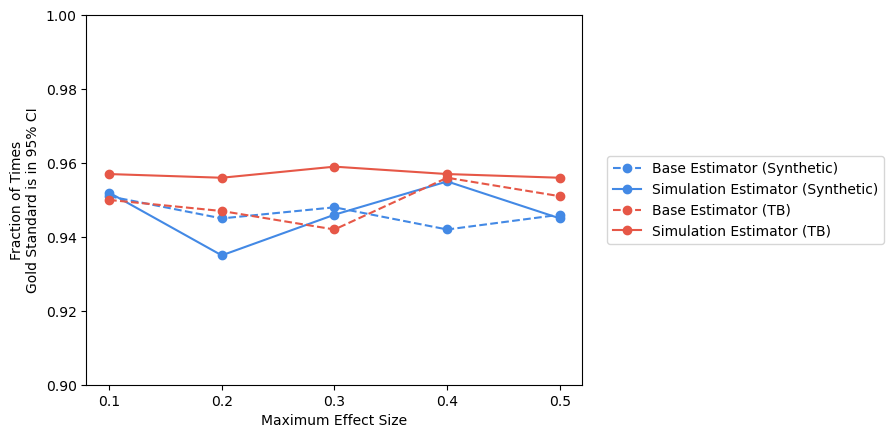}
        \caption{Validity of confidence interval when varying the intervention effect.}
    \end{subfigure}%
    ~ 
    \begin{subfigure}[t]{0.5\textwidth}
        \centering
        \includegraphics[width=\textwidth]{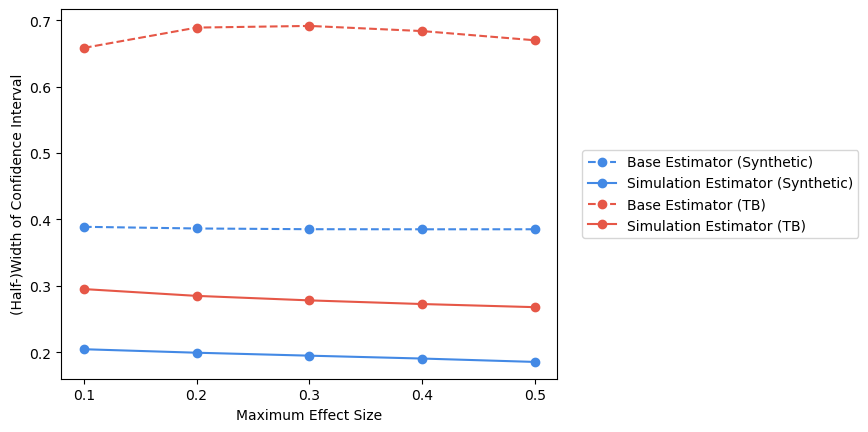}
        \caption{Power of estimators when varying the intervention effect.}
    \end{subfigure}
    \caption{Empirical comparison of the confidence intervals produced by different estimators when varying the intervention effect for the synthetic and TB domain, where we generate intervention effects randomly. In particular, for both domains, we adjust the sampling so that the maximum intervention effect, i.e., the difference between the transition probability under passive and active action, is at most the value depicted on the $x$-axis. On the left, we analyze validity by showing the fraction of times the estimand falls in an estimator's $95\%$ confidence interval (the closer to $95\%$ the better). On the right, we analyze the power of estimators by depicting the half-width of computed confidence intervals (the smaller the better).}\label{hype:2}
\end{figure*}

\begin{figure*}[t]
    \begin{subfigure}[t]{0.5\textwidth}
        \centering
        \includegraphics[width=0.9\textwidth]{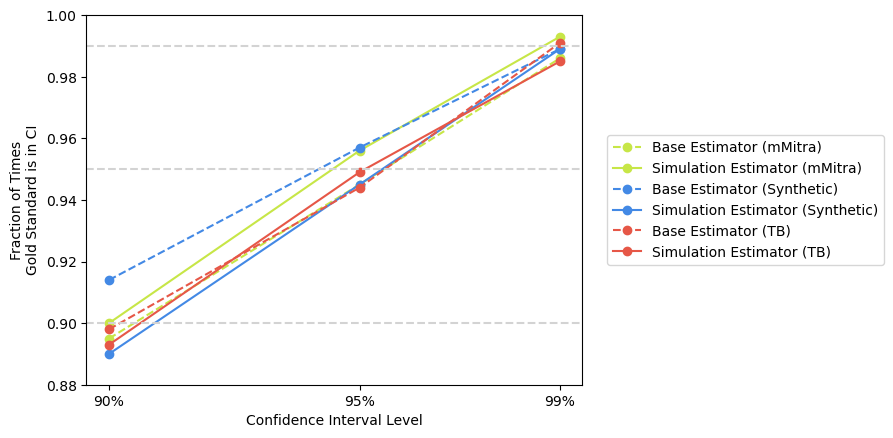}
        \caption{Validity of confidence interval for different confidence levels.}
    \end{subfigure}%
    ~ 
    \begin{subfigure}[t]{0.5\textwidth}
        \centering
        \includegraphics[width=0.9\textwidth]{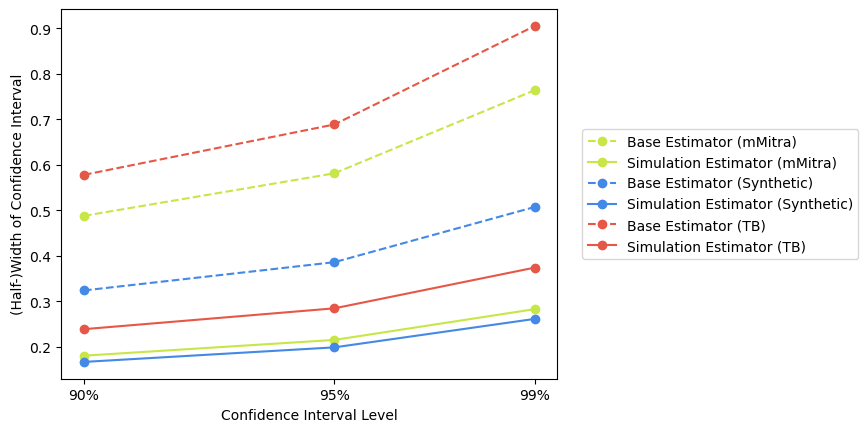}
        \caption{Power of estimators for different confidence levels.}
    \end{subfigure}
    \caption{Empirical comparison of the confidence intervals produced by different estimators for different confidence levels. On the left, we analyze validity by showing the fraction of times the estimand falls in an estimator's confidence interval (the closer the $x$ value is to the $y$ value, the better). On the right, we analyze the power of estimators by depicting the half-width of computed confidence intervals (the smaller the better).}\label{hype:3}
\end{figure*}

\begin{figure*}[t]
    \centering
    \begin{subfigure}[t]{0.48\textwidth}
        \centering
        \includegraphics[width=0.9\textwidth]{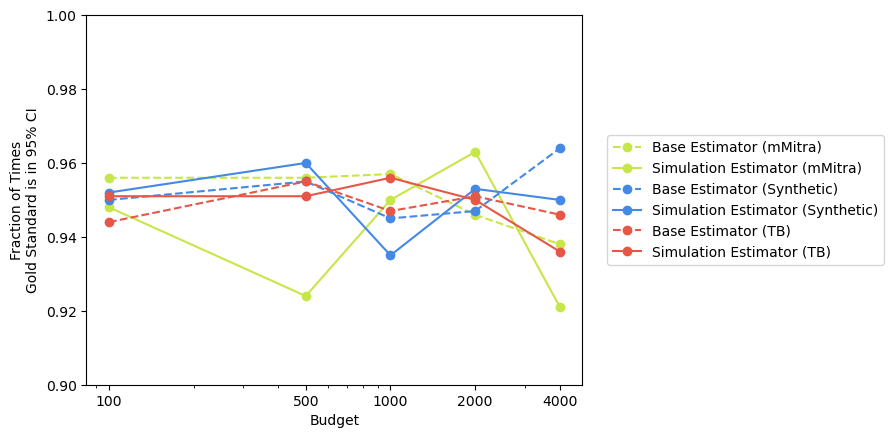}
        \caption{Validity of confidence interval when varying the budget, i.e., the number of allocated treatments.}\label{fig:budget1}
    \end{subfigure}%
    \hfill 
    \begin{subfigure}[t]{0.5\textwidth}
        \centering
        \includegraphics[width=0.9\textwidth]{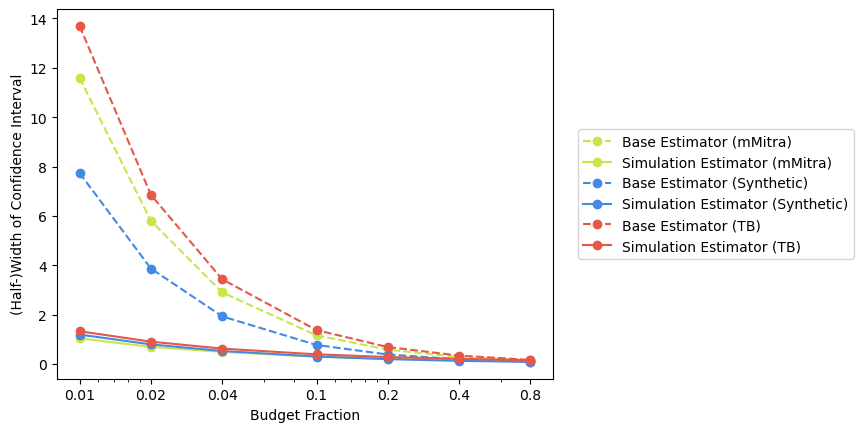}
        \caption{Power of estimators when varying the budget, i.e., the number of allocated treatments.}\label{fig:budget}
    \end{subfigure}\\
    ~
    \begin{subfigure}[t]{0.5\textwidth}
        \centering
        \includegraphics[width=0.9\textwidth]{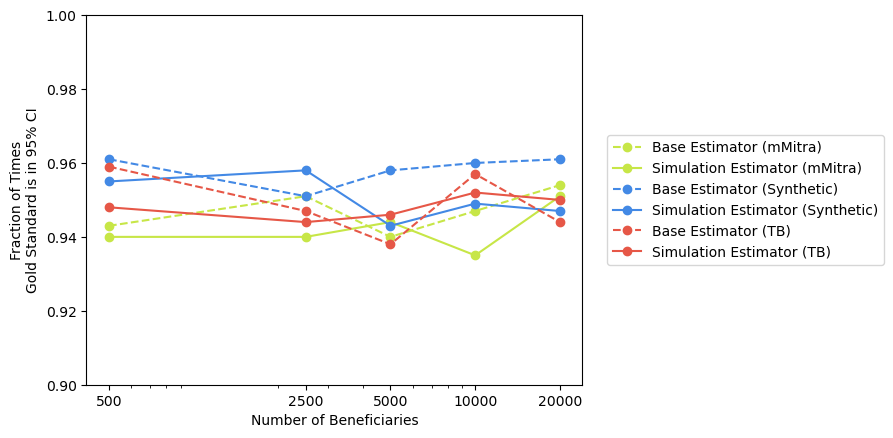}
        \caption{Validity of confidence interval when varying the number $n$ of agents.}\label{power:n1}
    \end{subfigure}%
    ~ 
    \begin{subfigure}[t]{0.5\textwidth}
        \centering
        \includegraphics[width=0.9\textwidth]{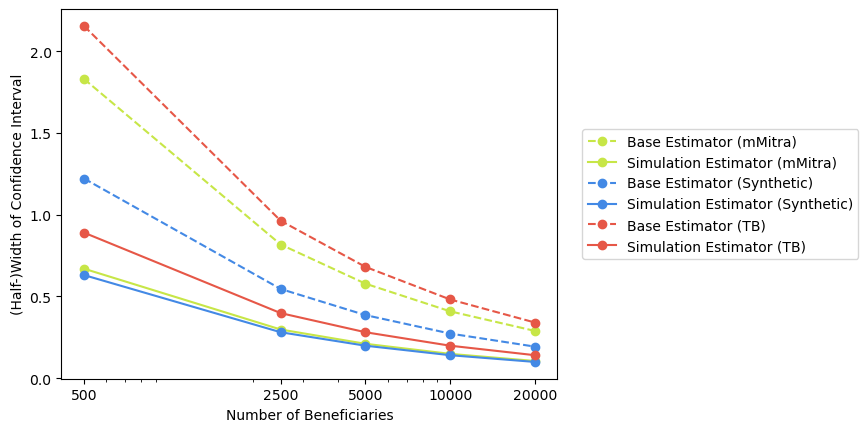}
        \caption{Power of estimators when varying the number $n$ of agents.}\label{power:n}
    \end{subfigure}\\
    ~
    \begin{subfigure}[t]{0.5\textwidth}
        \centering
        \includegraphics[width=0.9\textwidth]{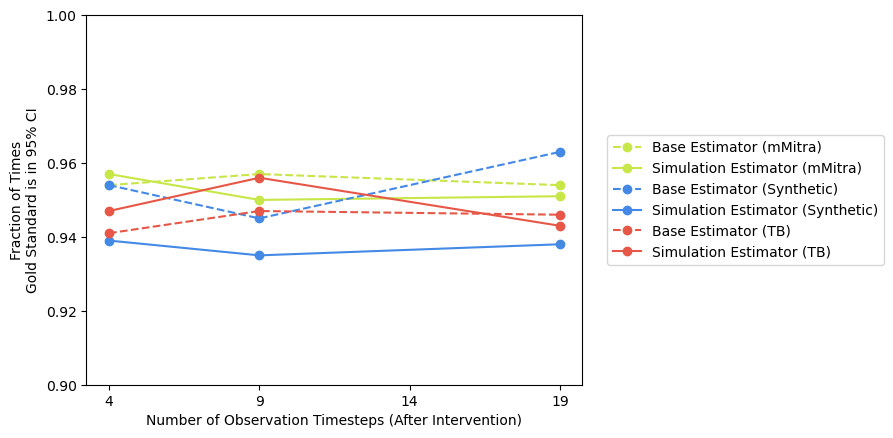}
        \caption{Validity of confidence interval when varying the observation horizon, i.e., the number of timesteps over which we observe agents and accumulate after the initial treatment allocation.}\label{fig:O}
    \end{subfigure}%
    ~ 
    \begin{subfigure}[t]{0.5\textwidth}
        \centering
        \includegraphics[width=0.9\textwidth]{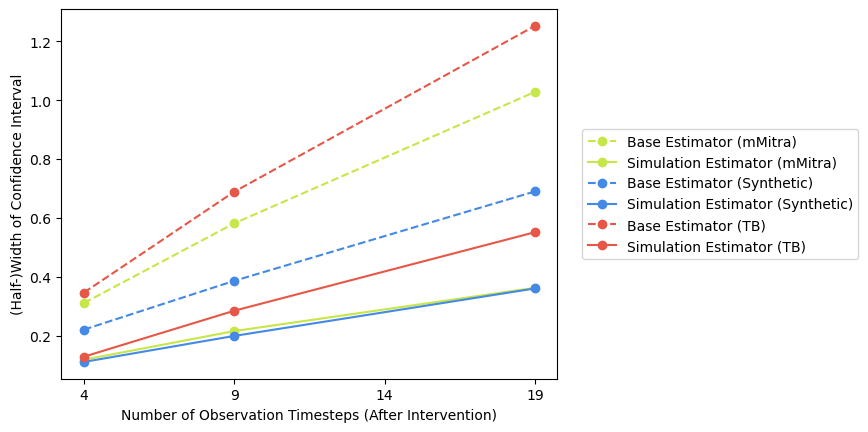}
        \caption{Power of estimators when varying the observation horizon.}\label{fig:O2}
    \end{subfigure}
    \caption{Empirical comparison of the confidence intervals produced by different estimators when varying hyperparameters. On the left, we analyze validity by showing the fraction of times the estimand falls in an estimator's $95\%$ confidence interval (the closer to $95\%$ the better). On the right, we analyze the power of estimators by depicting the half-width of computed confidence intervals (the smaller the better).}\label{hype:1}
\end{figure*}

\FloatBarrier

\section{Additional Material for Section  \ref{sec:extMe}} \label{app:extensions-meth}\label{app:extensions}

\begin{table}[H]
    \centering
    \begin{tabular}{ccccccccccc}
\toprule
\multirow{2}{*}{Category} & \multirow{2}{*}{Estimator} & \multicolumn{3}{c}{TB} & \multicolumn{3}{c}{Synthetic} & \multicolumn{3}{c}{mMitra} \\
\cmidrule(lr){3-5} \cmidrule(lr){6-8} \cmidrule(lr){9-11}
& & $<$CI & in CI & $>$CI & $<$CI & in CI & $>$CI & $<$CI & in CI & $>$CI \\
\midrule
\multirow{2}{*}{Basic} & Base & 0.024 & 0.944 & 0.032 & 0.023 & 0.957 & 0.020 & 0.028 & 0.945 & 0.027 \\
& Subgroup & 0.018 & 0.949 & 0.033 & 0.026 & 0.945 & 0.029 & 0.022 & 0.956 & 0.022 \\
\midrule
\multirow{2}{*}{\makecell{Timestep\\Truncation}} & \makecell{6 Timesteps} & 0.000 & 0.620 & 0.380 & 0.025 & 0.951 & 0.024 & 0.013 & 0.932 & 0.055 \\
& \makecell{2 Timesteps} & 0.000 & 0.000 & 1.000 & 0.007 & 0.886 & 0.107 & 0.000 & 0.013 & 0.987 \\
\midrule
\multirow{2}{*}{\makecell{Covariate Correction\\(Linear Regression)}} & Base & - & - & - & 0.068 & 0.850 & 0.082 & 0.030 & 0.940 & 0.030 \\
& \makecell{Subgroup} & - & - & - & 0.033 & 0.939 & 0.028 & 0.029 & 0.945 & 0.026 \\
\bottomrule
\end{tabular}
    \caption{Validity of Confidence Intervals. \textnormal{We measure the fraction of times that the estimand is in an estimator's 95\% confidence interval (over 1000 different simulations). We find that the base estimator, the subgroup estimator, and the subgroup estimator with covariate correction are all approximately valid. The "timestep truncation" estimators only produce estimates that are lower than the confidence intervals $\approx 0.025$ fraction of the times, and are hence valid estimators of the lower bound of the intervention effect. The base estimator with covariate correction performs poorly in both the mMitra and synthetic domains because the covariates are not strongly linearly correlated with the treatment effect.}}
    \label{tab:validity_ext}
\end{table}

\begin{table}[H]
    \centering\begin{tabular}{cccccccc}
\toprule
\multirow{2}{*}{Category} & \multirow{2}{*}{Estimator} & \multicolumn{3}{c}{Lower Bound of Estimate} & \multicolumn{3}{c}{(Half-)Width of CI} \\
\cmidrule(lr){3-5} \cmidrule(lr){6-8} 
& & TB & Synthetic & mMitra & TB & Synthetic & mMitra \\
\midrule
\multirow{2}{*}{Basic} & Base & 0.122 & -0.174 & -0.225 & 0.689 & 0.386 & 0.581 \\
& Subgroup & 0.540 & 0.015 & 0.135 & 0.284 & 0.199 & 0.215 \\
\midrule
\multirow{2}{*}{\makecell{Timestep\\Truncation}} & \makecell{6 Timesteps} & 0.477 & 0.067 & \textbf{0.165} & 0.190 & 0.147 & 0.155 \\
& \makecell{2 Timesteps} & 0.237 & \textbf{0.125} & 0.136 & 0.063 & 0.068 & 0.066 \\
\midrule
\multirow{2}{*}{\makecell{Covariate Correction\\(Linear Regression)}} & Base & - & -20.002 & -0.221 & - & 19.468 & 0.576 \\
& \makecell{Subgroup} & - & -9.917 & 0.140 & - & 10.101 & 0.211 \\
\bottomrule
\end{tabular}
    \caption{Power of Estimators. \textnormal{Timestep truncation can drastically reduce the size of the confidence intervals at the cost of underestimating intervention effects. However, we find that for two of our domains, there is typically a trade-off point where the variance reduces faster than the bias, leading to larger estimates of the lower bound of the treatment effect.}}
    \label{tab:power_ext}
\end{table}

\subsection{Covariates}

\subsubsection{Methodology} 
While the formulation of the linear regression from \Cref{sec:cov} is straightforward it is slightly less clear on which set of agents (say $N'$) to fit the regression on to recover the subgroup estimator. 
To decide this, let us consider  what happens in the absence of covariates (i.e., $m=0$): 
In this case, $k$ will be the average reward of non-treated agents from $N'$, and $\beta$ will be the average difference between the rewards of treated and non-treated agents from $N'$ \cite{wooldridge2019introductory}. 
Thus, to recover our subgroup estimator in this degenerated case, we need to fit over  the $\alpha$-fraction of agents from the policy and control arm with the lowest indices, i.e., 
$N'=\pi(\mathbf{X}^p_n,\alpha)\cup \pi(\mathbf{X}^c_n,\alpha)$.
Importantly, the intuitive approach of using the full set of available agents (i.e., $N'=N$) should not be pursued, as it leads to wrong results. In this case, $\beta$ would become the average reward difference between all agents we treated and all agents we did not treat in our RCT. This estimator does not measure our estimand anymore, since even in case our intervention had no effect, treating the agents with the highest reward under the passive action would result in a non-trivial $\beta$ value. 

\subsubsection{Experiments: Setup and Results}
For the synthetic domain, we create $|X| = 50$ covariates for an agent by left multiplying their flattened 8-dimensional transition matrix (2 start states $\times$ 2 end states $\times$ 2 actions) by a $50 \times 8$ dimensional matrix whose entries are sampled from the standard normal distribution $\mathcal{N}(0, 1)$. For the mMitra dataset, we use the actual set of covariates (e.g., age, income level, education level) associated with each beneficiary from the field trial. The list of covariates, along with summary statistics for each, is discussed in detail in the appendix of~\citet{wang2023scalable}). We find that correcting for covariates using linear regression yields slight benefits in power in the mMitra domain for the subgroup estimator. However, for the base estimator it is quite bad in the synthetic domain where the confidence intervals are no longer valid (confidence intervals that should contain the estimand 95\% of times, only cover with 85\% probability). This is because, while the underlying relationship between covariates and \textit{probabilities} is linear in this domain, there is a non-linear relationship between the probabilities and the actual rewards.

\subsection{Timestep Truncation}

Recall that the rewards of agents are determined by observing their behavior for $10$ steps. In this experiment, we still compute our estimand using this procedure. However, for the computation of our estimators, we perturb the reward function by just observing agents' behavior for $2$ (or $6$) steps. As discussed in the main body this will lead to an underestimation of intervention effects while hopefully reducing the size of confidence intervals due to reduced noise. 
The results of this experiment can be found in the middle rows of \Cref{tab:validity_ext,tab:power_ext}. 
As expected, in \Cref{tab:validity_ext}, we find that timestep truncation leads to conservative confidence intervals, which will oftentimes underestimate the intervention effect, i.e., the estimated lies above the upper bound of the confidence interval. 
On the other hand, in the right part of  \Cref{tab:power_ext} we see that shortening the observation horizon decreases the size of confidence intervals substantially. For the synthetic domain and mMitra this leads to an increase in the lower bounds of confidence intervals, implying that timestep truncation allows us to establish larger statistically significant effect sizes. For the TB domain, this turns out to be not possible.  

\subsection{Sequential Allocation} \label{app:seq}

\subsubsection{Definition}
To formally speak about the sequential setting, we need to extend our notation. 
Given a treatment fraction $\alpha$, a group size $n$, and a time horizon $T$, we assign the active action to (at most) $\lceil\alpha n\rceil$ agents in every timestep $t\in [T]$. 
We focus on the case where each agent can receive the treatment at most once. 
Accordingly, an agent $i$ is now characterized by a set of time-step dependent covariates $\mathbf{x}^t\in \mathcal{X}^T$ and a reward function $R_i: \{0,\dots, T\}\to \mathbb{R}$ that returns the total reward generated by the agent given the timestep in which we assigned them the active action ($0$ corresponds to never acting). 
We denote as $Q$ the probability function over $\mathcal{X}^T\times ( \{0,\dots, T\}\to \mathbb{R})$ from which agents are sampled i.i.d.  
At timestep $t\in [T]$, given a treatment fraction $\alpha$ and agent's covariates $(\mathbf{x}^{[1,t]}_i)_{i\in [n]}$ up until step $t$, an index based policy $\pi^{\Upsilon}$ returns the $\alpha$-fraction of agents with lowest index $\Upsilon(\mathbf{x}^{[1,t]}_i)$ to which the policy has not assigned an active action in one of the previous timesteps.

To evaluate such a sequential policy, we assume that we have access to an RCT where agents in the policy arm are assigned treatment according to the policy that is tested. 
We again have a policy arm (p) containing $n$ agents $(\check{\mathbf{x}}^{[1,T]}_i,\check{R}_i)_{i\in [n]}$ sampled i.i.d. from $Q$ on which we run our policy $\pi$. As the outcome, we observe $(\check{\mathbf{x}}^{[1,T]}_i,\check{R}_i(\check{J}_i))_{i\in [n]}$, where $\check{J}_i$ is the time step in which the policy $\pi$ assigns $i$ an active action given the covariates $(\check{\mathbf{x}}^{[1,T]}_i)_{i\in [n]}$ of all agents (and $0$ if the policy never assigns an action to the agent). 
Moreover, we have access to a control arm (c) of $n$ agents $(\hat{\mathbf{x}}^{[1,T]}_i,\hat{R}_i)_{i\in [n]}$ sampled i.i.d. from $Q$ for which we observe $(\hat{\mathbf{x}}^{[1,T]}_i,\hat{R}_i(0))_{i\in [n]}$.

\subsubsection{Methodology}

The definition of our estimand $\tau^{\mathrm{new}}_{n,\alpha}$ changes in the sequential setting to: 
\begin{align}
    & \tau^{T}_{n,\alpha}(\pi):= \frac{1}{T\lceil\alpha n\rceil}\mathbb{E} \!\!\!\!\!\!\!\! \sum_{\substack{t\in [T]\\i\in \pi\left((\mathbf{x}^{[1,t]}_i)_{i\in [n]},\alpha\right)}}  \!\!\!\!\!\!\!\![R_i(t)-R_i(0)], 
\end{align}
where the expectation ranges over ${(\mathbf{x}^[1,T]_i,R_i)_{i\in [n]}\sim Q}$.
Moreover, the base estimator in the sequential setting becomes: 
$$\frac{1}{T\lceil\alpha n\rceil}\left(\sum_{i\in [n]} \check{R}_i(\check{J}_i)-\sum_{i\in [n]} \hat{R}_i(0)\right).$$
The subgroup estimator is: 
$$\frac{1}{T\lceil\alpha n\rceil}\left(\sum_{\substack{t\in [T]\\i\in \pi\left((\check{\mathbf{x}}^{[1,t]}_i)_{i\in [n]},\alpha\right)}}\!\!\!\!\! \check{R}_i(\check{J}_i)- \!\!\!\!\!\sum_{\substack{t\in [T]\\i\in \pi\left((\hat{\mathbf{x}}^{[1,t]}_i)_{i\in [n]},\alpha\right)}} \!\!\!\!\! \hat{R}_i(0)\right).$$
The linear regression approach that corrects for covariates extends in a straightforward fashion. For the subgroup estimator, we get: 
$$R_i(J_i)=k+\beta J_i+ \sum_{t=1}^m \gamma_{t}x_{i,t}+\epsilon_i,$$ where $J_i$ indicates whether agent $i$ has received a treatment in one of the timesteps and fit it over $\{\pi((\check{\mathbf{x}}^{[1,t]}_i)_{i\in [n]},\alpha)\mid t\in [T]\}\cup \{\pi((\hat{\mathbf{x}}^{[1,t]}_i)_{i\in [n]},\alpha)\mid t\in [T]\}$. 
For the base estimator, we use 
$$R_i(J_i)=k+\beta I_i+ \sum_{t=1}^m \gamma_{t}x_{i,t}+\epsilon_i,$$ where $I_i$ is one if agent $i$ belongs to the policy arm and zero if it belongs to the treatment arm and fit it over the full agent set.

\subsubsection{Experiments}
Note that the setup of our experiment for the sequential setting is very similar to the one for the experiments in the main body. 
The main difference is that while we still observe agents for ten time steps, here we allocate resources over the first $x$ of these steps. 
Thus, the reward of an agent is still their summed behavior over ten timesteps, but they might receive treatment in say the second or third of these ten steps. 

\Cref{fig:multi} shows results for the sequential setting where we vary the number of timesteps over which $500$ treatment resources are distributed. For $x=1$, all treatments are allocated in one timestep, whereas for $x=5$, we allocate $100$ resources in each of the first five timesteps.
We find that the validity of confidence intervals remains largely unaffected when we allocate resources over multiple (instead of just one) rounds. 
In terms of statistical power, the half-width of the confidence interval of the base estimator does not change when distributing resources over multiple rounds. 
For the subgroup estimator, the size slowly decreases, which is in line with the estimand that also decreases because interventions that are allocated in later rounds will have less of an effect. 
Consequently, the subgroup estimator outperforms the base estimator even more in the sequential setting than in the single-round one.

\begin{figure*}
    \begin{subfigure}[t]{0.49\textwidth}
        \centering
        \includegraphics[width=\textwidth]{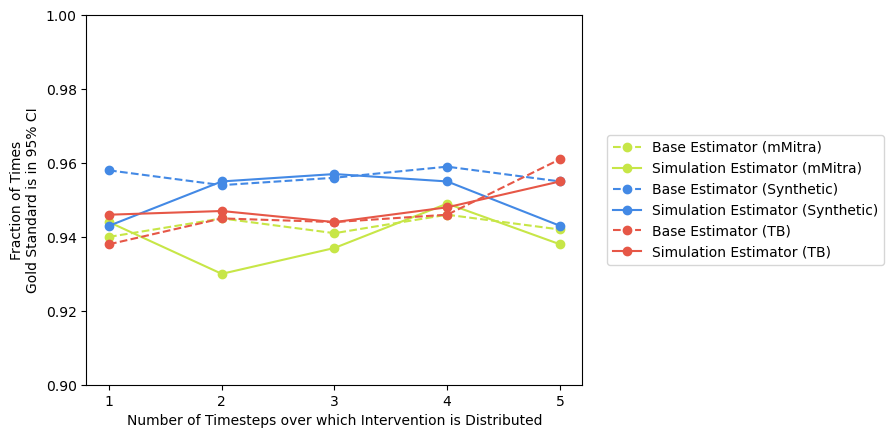}
        \caption{Validity of confidence interval for sequential allocation with varying planning horizons.}
    \end{subfigure}%
    \hfill
    \begin{subfigure}[t]{0.49\textwidth}
        \centering
        \includegraphics[width=\textwidth]{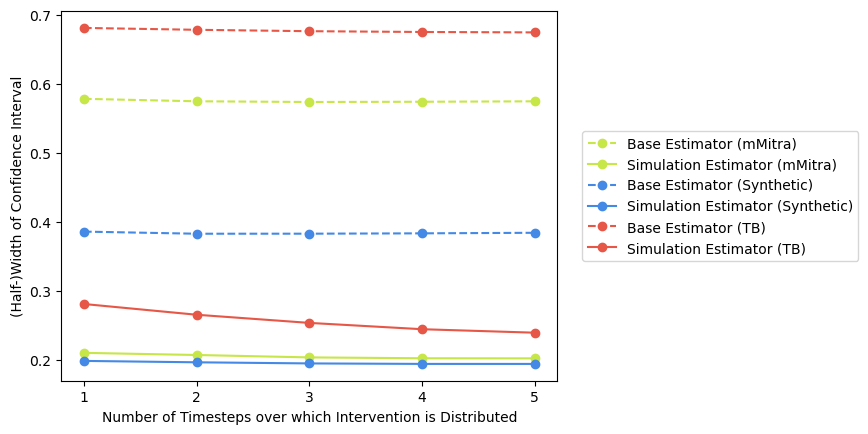}
        \caption{Power of estimators for sequential allocation with varying planning horizons.}
    \end{subfigure}
    \caption{Empirical comparison of the confidence intervals produced by different estimators if resources are allocated over multiple rounds. The $x$-axis value denotes the number $T$ of rounds over which treatment is allocated. In each round we allocate treatment to $\lceil\frac{\alpha}{T} n\rceil$ agents for $\alpha=0.1$ and $n=5000$.   On the left, we analyze validity by showing the fraction of times the estimand falls in an estimator's $95\%$ confidence interval (the closer to $95\%$ the better). On the right, we analyze the power of estimators by depicting the half-width of computed confidence intervals (the smaller the better).}\label{fig:multi}
\end{figure*}

\section{Additional Material for Section \ref{hy:Base}: General Results on Bivariate Distributions} \label{proofsec:1}
In this and the next two sections, we will prove \Cref{ourtheorem}. 
In this section, we prove a result that works for general bivariate distribution (independent of our notation of indices and rewards). 
Thus, we simplify our notation as follows:
For each $n \in \mathbb{N}$, we have \[(W_{i,n}, Z_{i,n}) \overset{iid}{\sim} P\] for some bivariate probability distribution $P$ over $\mathcal{W} \times \mathcal{Z}$.
Let $F_W(x)=\mathbb{P}(W\leq x)$ denote $W$'s marginal cdf and let $F_W^{-1}$ denote $W$'s quantile function.
Let $q_\alpha:=F_W^{-1}(\alpha)$ denote $F_W$'s $\alpha$-quantile and define the event \[E_{i,n} := \{W_{i,n} \leq q_\alpha\}.\] Similarly, define \[F_{i,n} := \{Q_{\mathcal{W}_n}(W_{i,n})\leq \lceil\alpha n\rceil\},\] where $Q_{\mathcal{W}_n}(W_{i,n})$ denotes the rank of $W_{i,n}$ among $\mathcal{W}_n := \{W_{1,n}, \ldots, W_{n,n}\}$.
Further, let \[f(t) := \mathbb{E}[Z_{1,n}|W_{1,n} \leq t] \]  $Z$'s expected value conditioned on $W\leq t$.
Moreover, define \[\psi(t) := f(F_W^{-1}(t))\] to be $Z$'s expected value conditioned on $W$ being in the $t$-quantile.

For any integrable (measurable) function $g: \mathcal{W} \times \mathcal{Z} \rightarrow \mathbb{R}$, let $\mathbb{P}_n g := \frac{1}{n}\sum_{i=1}^n g(W_{i,n}, Z_{i,n})$ and $Pg := \int_{\mathbb{R}} g(w, z)dP(w,z)$.
We let $f_t(w,z) := zI[w \leq t]$ and $\varphi: t \mapsto Pf_t$, i.e.,  $\varphi(t) = \mathbb{E}[ZI[W \leq t]]$.

Let \[(W_{(1),n}, Z_{(1),n}), \ldots, (W_{(n),n}, Z_{(n),n})\] denote the sequence of pairs $(W_{i,n}, Z_{i,n})$ sorted in increasing order of the $W_{i,n}$ (i.e., so that $W_{(i),n} \leq W_{(j), n}$ for $i \leq j$). 

Our results from this section rely on the following two assumptions:

\begin{assumption}\label{as:positive-derivative}
    $F_W$ has a positive derivative at $q_\alpha$. 
\end{assumption}

\begin{assumption}\label{second-moment}
    $Z$ has a second moment.
\end{assumption}

We show the following: 
\begin{theorem}\label{theorem4}
Let $\tilde{\mu} = \mathbb{E}[Z_{i,n}I_{E_{i,n}}]$ and $\tilde{\sigma}^2 = \var(Z_iI[W_i \leq q_\alpha])$.
Under \Cref{as:positive-derivative,second-moment},
    \begin{equation}
    \sqrt{n}\left(\frac{1}{n}\sum_{i=1}^n Z_{i,n}I_{F_{i,n}} -  \tilde{\mu} \right) \overset{d}{\rightarrow} \mathcal{N}(0, \tilde{\sigma}^2 - 2\mathbb{E}[Z|W=q_\alpha]\tilde{\mu}(1-\alpha) + \mathbb{E}[Z|W=q_\alpha]^2\alpha(1-\alpha)).
\end{equation}
\end{theorem}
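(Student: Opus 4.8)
The plan is to recognize the quantity of interest as an empirical process indexed by a threshold, evaluated at a \emph{random} empirical threshold, and then to linearize it around the population threshold $q_\alpha$. Write $\hat q := W_{(\lceil\alpha n\rceil),n}$ for the $\lceil\alpha n\rceil$-th order statistic of the $W_{i,n}$, so that—ignoring ties, which occur with probability zero under \Cref{as:positive-derivative}—the rank event $F_{i,n}$ coincides with $\{W_{i,n}\le\hat q\}$. Hence $\frac1n\sum_i Z_{i,n}I_{F_{i,n}} = \mathbb{P}_n f_{\hat q}$ with $f_t(w,z)=zI[w\le t]$, and since $\tilde\mu = \varphi(q_\alpha) = Pf_{q_\alpha}$ the target becomes $\sqrt n\big(\mathbb{P}_n f_{\hat q} - Pf_{q_\alpha}\big)$.

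I would split this as $\sqrt n(\mathbb{P}_n f_{\hat q}-Pf_{\hat q}) + \sqrt n(Pf_{\hat q}-Pf_{q_\alpha}) = \mathbb{G}_n f_{\hat q} + \sqrt n(\varphi(\hat q)-\varphi(q_\alpha))$, where $\mathbb{G}_n = \sqrt n(\mathbb{P}_n-P)$. For the first (stochastic) term, the class $\{f_t : t\in\mathbb{R}\}$ is VC-subgraph with square-integrable envelope $|Z|$ by \Cref{second-moment}, hence $P$-Donsker; combined with the consistency $\hat q \overset{p}{\rightarrow} q_\alpha$ (which uses \Cref{as:positive-derivative} to make $q_\alpha$ the unique $\alpha$-quantile) and $P(f_{\hat q}-f_{q_\alpha})^2 \overset{p}{\rightarrow} 0$, asymptotic equicontinuity of $\mathbb{G}_n$ in the $L_2(P)$ seminorm yields $\mathbb{G}_n f_{\hat q} = \mathbb{G}_n f_{q_\alpha} + o_p(1) = \sqrt n\big(\frac1n\sum_i Z_{i,n}I_{E_{i,n}} - \tilde\mu\big) + o_p(1)$.

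For the second (drift) term I would use that $\varphi(t) = \int_{-\infty}^t \mathbb{E}[Z\mid W=s]\,dF_W(s)$ is differentiable at $q_\alpha$ with $\varphi'(q_\alpha) = \mathbb{E}[Z\mid W=q_\alpha]\,f_W(q_\alpha)$, together with the Bahadur representation $\sqrt n(\hat q - q_\alpha) = -f_W(q_\alpha)^{-1}\sqrt n\big(\frac1n\sum_i I_{E_{i,n}} - \alpha\big) + o_p(1)$, valid because $F_W$ has a positive derivative at $q_\alpha$. The density $f_W(q_\alpha)$ then cancels, giving $\sqrt n(\varphi(\hat q)-\varphi(q_\alpha)) = -\mathbb{E}[Z\mid W=q_\alpha]\,\sqrt n\big(\frac1n\sum_i I_{E_{i,n}} - \alpha\big) + o_p(1)$. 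Adding the two terms collapses everything into a single i.i.d. average: with $c := \mathbb{E}[Z\mid W=q_\alpha]$ and $g_i := Z_{i,n}I_{E_{i,n}} - c\,I_{E_{i,n}}$ (so $\mathbb{E}[g_1] = \tilde\mu - c\alpha$), the expression equals $\sqrt n\big(\frac1n\sum_i g_i - \mathbb{E}[g_1]\big) + o_p(1)$, and the classical CLT together with Slutsky's theorem deliver asymptotic normality with variance $\mathrm{Var}(g_1)$. A direct expansion using $I_E^2 = I_E$, $\mathrm{Var}(I_E) = \alpha(1-\alpha)$ and $\mathrm{Cov}(Z I_E, I_E) = \tilde\mu(1-\alpha)$ gives $\mathrm{Var}(g_1) = \tilde\sigma^2 - 2c\,\tilde\mu(1-\alpha) + c^2\alpha(1-\alpha)$, matching the claimed variance.

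The main obstacle I anticipate is making the drift term rigorous: it requires both the quantile (Bahadur) linearization, where \Cref{as:positive-derivative} is essential to produce the $f_W(q_\alpha)^{-1}$ factor, and the differentiability of $\varphi$ at $q_\alpha$ with the stated derivative—precisely the step that introduces the conditional mean $\mathbb{E}[Z\mid W=q_\alpha]$ and demands enough regularity of $s\mapsto\mathbb{E}[Z\mid W=s]$ near $q_\alpha$. A clean way to package both at once is the functional delta method, treating $\mathbb{P}_n f_{\hat q}$ as a Hadamard-differentiable functional of the empirical distribution evaluated along the Donsker limit. By comparison, the Donsker/equicontinuity step is fairly routine, but still hinges on the second-moment \Cref{second-moment} to control the unbounded envelope $|Z|$.
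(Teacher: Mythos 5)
Your proposal is correct and follows essentially the same route as the paper's proof: the same decomposition into $\mathbb{G}_n f_{\hat q} + \sqrt{n}(\varphi(\hat q)-\varphi(q_\alpha))$, the same VC-subgraph/Donsker/asymptotic-equicontinuity argument to replace $\mathbb{G}_n f_{\hat q}$ by $\mathbb{G}_n f_{q_\alpha}$, and the same delta method plus Bahadur representation for the drift term, with the density cancelling. The only (cosmetic) difference is that you collapse the two leading terms into a single i.i.d.\ average $g_i = Z_iI_{E_i} - cI_{E_i}$ and invoke the univariate CLT, whereas the paper applies the multivariate CLT and the continuous mapping theorem; your variance computation matches, and your identification of the square-integrable envelope as $|Z|$ (controlled by \Cref{second-moment}) is in fact the correct reading of that step.
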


This section (and its notation) follows very closely the notes in Example 1.5 of \cite{sen2018gentle}.
Recall that for any integrable (measurable) function $f: \mathcal{W} \times \mathcal{Z} \rightarrow \mathbb{R}$, let $\mathbb{P}_n f := \frac{1}{n}\sum_{i=1}^n f(W_{i,n}, Z_{i,n})$ and $Pf := \int_{\mathbb{R}} f(w, z)dP(w,z)$.
We let $f_t(w,z) := zI[w \leq t]$ and $\varphi: t \mapsto Pf_t$, i.e.,  $\varphi(t) = \mathbb{E}[ZI[W \leq t]]$.
We observe that \begin{align}\label{main}
    \sqrt{n}\left(\frac{1}{n}\sum_{i=1}^n Z_{i,n}I_{F_{i,n}} - \tilde{\mu}\right) &= \sqrt{n}(\mathbb{P}_nf_{W_{(\lceil \alpha n\rceil),n}} - Pf_{q_\alpha}) \nonumber \\ 
    &= \sqrt{n}(\mathbb{P}_n - P)f_{q_\alpha} + \sqrt{n}(\mathbb{P}_n f_{W_{(\lceil \alpha n\rceil),n}} - \mathbb{P}_n f_{q_\alpha}) \nonumber\\ 
    &= \mathbb{G}_n[f_{q_\alpha}] +\mathbb{G}_n[f_{W_{(\lceil \alpha n\rceil),n}}- f_{q_\alpha}] + \sqrt{n}(P f_{W_{(\lceil \alpha n\rceil),n}} - P f_{q_\alpha}) \nonumber \\ 
    &= \mathbb{G}_n[f_{q_\alpha}] +\mathbb{G}_n[f_{W_{(\lceil \alpha n\rceil),n}}- f_{q_\alpha}] + \sqrt{n}(\varphi(W_{(\lceil \alpha n\rceil),n}) - \varphi(q_\alpha))
\end{align} where $\mathbb{G}_n$ denotes the empirical process (indexed by functions $f \in \mathcal{F} := \{f_t: t \in \mathbb{R}\}$) equal to $\sqrt{n}(\mathbb{P}_n - P)$.

The delta method allows us to easily handle the third term:
\begin{lemma}\label{second-term}
We have \[\sqrt{n}(\varphi(W_{(\lceil \alpha n\rceil),n}) - \varphi(q_\alpha)) = \varphi'(q_\alpha)\sqrt{n}(W_{(\lceil \alpha n\rceil),n} - q_\alpha) + o_p(1).\]
\end{lemma}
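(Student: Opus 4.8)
The plan is to prove this by the standard delta method, whose two ingredients are differentiability of $\varphi$ at $q_\alpha$ and the $\sqrt{n}$-consistency of the order statistic $W_{(\lceil \alpha n\rceil),n}$ for $q_\alpha$. Once both are in place, the expansion is a one-line computation.

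First I would record that $\varphi$ is differentiable at $q_\alpha$. Writing $\varphi(t) = \mathbb{E}[Z I[W \le t]] = \int_{-\infty}^{t} \mathbb{E}[Z \mid W = w]\, dF_W(w)$ and using that $F_W$ has a positive derivative at $q_\alpha$ (\Cref{as:positive-derivative}), the fundamental theorem of calculus gives $\varphi'(q_\alpha) = \mathbb{E}[Z \mid W = q_\alpha]\, F_W'(q_\alpha)$, which is finite because $Z$ has a second (hence first) moment (\Cref{second-moment}); the requisite continuity of $w \mapsto \mathbb{E}[Z\mid W=w]$ at $q_\alpha$ is the mild regularity that makes this derivative exist. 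Second, I would invoke the classical asymptotics of sample quantiles: since $\lceil\alpha n\rceil / n \to \alpha$, the order statistic $W_{(\lceil\alpha n\rceil),n}$ is the empirical $\alpha$-quantile up to an $O(1/n)$ shift, and \Cref{as:positive-derivative} ($F_W'(q_\alpha) > 0$) yields both $W_{(\lceil\alpha n\rceil),n} \overset{p}{\rightarrow} q_\alpha$ and $\sqrt{n}(W_{(\lceil\alpha n\rceil),n} - q_\alpha) = O_p(1)$ (indeed it converges in distribution to $\mathcal{N}(0, \alpha(1-\alpha)/F_W'(q_\alpha)^2)$).

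With these in hand the delta method is immediate. Differentiability at $q_\alpha$ lets me write $\varphi(t) = \varphi(q_\alpha) + \varphi'(q_\alpha)(t - q_\alpha) + R(t)$ with $R(t)/(t - q_\alpha) \to 0$ as $t \to q_\alpha$. Substituting $t = W_{(\lceil\alpha n\rceil),n}$ and multiplying by $\sqrt{n}$, the remainder becomes $\sqrt{n} R(W_{(\lceil\alpha n\rceil),n}) = \big[\sqrt{n}(W_{(\lceil\alpha n\rceil),n} - q_\alpha)\big]\cdot\big[R(W_{(\lceil\alpha n\rceil),n})/(W_{(\lceil\alpha n\rceil),n} - q_\alpha)\big]$, which is $O_p(1)\cdot o_p(1) = o_p(1)$ by the second step together with the vanishing of the remainder ratio as $W_{(\lceil\alpha n\rceil),n} \overset{p}{\rightarrow} q_\alpha$. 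This is exactly the claimed expansion.

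\textbf{Main obstacle.} The only genuinely delicate point is the differentiability of $\varphi$ at $q_\alpha$: it hinges on a regularity/continuity property of the conditional mean $\mathbb{E}[Z\mid W=w]$ near $q_\alpha$ (and the finiteness of $\varphi'(q_\alpha)$, which is where \Cref{second-moment} enters), rather than on anything about the empirical process $\mathbb{G}_n$. Everything else is the textbook delta-method-for-quantiles argument, so I would keep the presentation short and route the quantile rate through the standard reference.
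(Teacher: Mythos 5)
Your proposal is correct and follows essentially the same route as the paper: the paper's proof of this lemma consists of invoking the delta method and deferring the differentiability of $\varphi$ at $q_\alpha$ to the end of the section, where it is established exactly as you do---by writing $\varphi(t)=\int_{-\infty}^{t}\mathbb{E}[Z\mid W=w]\,dF_W(w)$ and applying the fundamental theorem of calculus for Riemann--Stieltjes integrals together with \Cref{as:positive-derivative}, while the $O_p(1)$ rate of the order statistic comes from the same classical quantile expansion (van der Vaart, Corollary 21.5) that the paper uses. Your write-up is merely more explicit about the remainder-term bookkeeping and about the continuity of $w\mapsto\mathbb{E}[Z\mid W=w]$ near $q_\alpha$ that the Stieltjes FTC implicitly requires.
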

\begin{proof}
    The delta method simply requires that $\varphi$ is differentiable at $q_\alpha$, which we verify at the end of this subsection (using \Cref{as:positive-derivative}), just before the beginning of \Cref{sub:zero-cov}.
\end{proof}

Using empirical process theory in \Cref{sub:zero-cov} we show that the second term goes to $0$ in probability:

\begin{restatable}{lemma}{zeroconv}\label{zero-conv}
    \[\mathbb{G}_n[f_{W_{(\lceil \alpha n\rceil),n}}- f_{q_\alpha}] \overset{p}{\rightarrow} 0\]
\end{restatable}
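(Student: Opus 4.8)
The plan is to invoke the standard machinery of empirical process theory: show that the indexing class $\mathcal{F} := \{f_t : t \in \mathbb{R}\}$ is $P$-Donsker, deduce that $\mathbb{G}_n$ is asymptotically equicontinuous over $\mathcal{F}$, and then combine this with the consistency of the sample quantile $W_{(\lceil \alpha n\rceil),n}$ for $q_\alpha$. The point is that $W_{(\lceil \alpha n\rceil),n}$ is itself random and data-dependent, so a pointwise CLT does not suffice; one needs a statement that is uniform over the function class, which is exactly what the Donsker property provides.

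First I would establish that $\mathcal{F}$ is $P$-Donsker. The class of indicators $\{w \mapsto I[w \le t] : t \in \mathbb{R}\}$ consists of indicators of half-lines, hence is a VC class (VC index $2$), and multiplying by the fixed coordinate map $z$ preserves the VC-subgraph property by a standard preservation theorem \cite{vaart2023empirical}. Since $Z$ has a second moment by \Cref{second-moment}, the envelope $|z|$ is square-integrable, so the usual uniform-entropy Donsker theorem applies and shows $\mathcal{F}$ is $P$-Donsker. In particular, $\mathbb{G}_n$ is asymptotically equicontinuous with respect to the $L_2(P)$ seminorm $\rho(f,g) := (P(f-g)^2)^{1/2}$: for every $\epsilon > 0$,
\[
\lim_{\delta \downarrow 0}\ \limsup_{n\to\infty}\ \mathbb{P}^*\!\Bigl(\sup_{\rho(f,g)<\delta}|\mathbb{G}_n f - \mathbb{G}_n g| > \epsilon\Bigr) = 0.
\]

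Next I would show that the $L_2(P)$ distance between the random function and its limit vanishes. A direct computation gives
\[
\rho(f_t, f_{q_\alpha})^2 = \mathbb{E}\!\left[Z^2\,\bigl|I[W \le t] - I[W \le q_\alpha]\bigr|\right] = \mathbb{E}\!\left[Z^2\, I[\min(t,q_\alpha) < W \le \max(t,q_\alpha)]\right],
\]
which is continuous in $t$ at $t = q_\alpha$ and equals $0$ there, by dominated convergence using the second moment of $Z$. Because \Cref{as:positive-derivative} makes $q_\alpha$ the unique $\alpha$-quantile (with a positive density there), the sample quantile is consistent, i.e.\ $W_{(\lceil \alpha n\rceil),n} \overset{p}{\rightarrow} q_\alpha$; the continuous mapping theorem then yields $\rho(f_{W_{(\lceil \alpha n\rceil),n}}, f_{q_\alpha}) \overset{p}{\rightarrow} 0$.

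Finally I would stitch these pieces together. Fixing $\epsilon > 0$ and $\delta > 0$, I bound
\[
\mathbb{P}\bigl(|\mathbb{G}_n[f_{W_{(\lceil \alpha n\rceil),n}} - f_{q_\alpha}]| > \epsilon\bigr) \le \mathbb{P}^*\!\Bigl(\sup_{\rho(f, f_{q_\alpha}) < \delta}|\mathbb{G}_n f - \mathbb{G}_n f_{q_\alpha}| > \epsilon\Bigr) + \mathbb{P}\bigl(\rho(f_{W_{(\lceil \alpha n\rceil),n}}, f_{q_\alpha}) \ge \delta\bigr).
\]
For each fixed $\delta$ the second term vanishes as $n \to \infty$ by the previous paragraph; taking $\limsup_n$ and then letting $\delta \downarrow 0$ kills the first term by asymptotic equicontinuity, which gives the claim. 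The main obstacle is the first step, verifying that $\mathcal{F}$ is Donsker and hence asymptotically equicontinuous, as this is where the real content lies; the quantile consistency and the $L_2$-continuity computation are routine, and the data-dependence of the index $W_{(\lceil \alpha n\rceil),n}$ is precisely what the equicontinuity step is designed to absorb.
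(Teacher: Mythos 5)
Your proposal is correct and follows essentially the same route as the paper: establish that $\mathcal{F}=\{f_t\}$ is $P$-Donsker via the VC-subgraph preservation argument and the square-integrable envelope $|z|$, invoke asymptotic equicontinuity, show the $L_2(P)$ distance between $f_{W_{(\lceil \alpha n\rceil),n}}$ and $f_{q_\alpha}$ is $o_p(1)$, and stitch the two via the same two-term probability bound. The only differences are cosmetic: your $L_2$ step goes through continuity of $t\mapsto\rho(f_t,f_{q_\alpha})$ at $q_\alpha$ plus the continuous mapping theorem rather than the paper's explicit splitting of the integral over $\{|w-q_\alpha|<\delta\}$ and its complement, and you leave the $P$-measurability hypothesis of the uniform-entropy Donsker theorem unverified, which the paper checks explicitly by reducing the supremum to one over the rationals.
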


By \eqref{main} as well as \Cref{second-term,zero-conv}, we have that \begin{equation}\label{emp-done}\sqrt{n}\left(\frac{1}{n}\sum_{i=1}^n Z_{i,n}I_{F_{i,n}} - \tilde{\mu}\right) = \mathbb{G}_n[f_{q_\alpha}] + \varphi'(q_\alpha)\sqrt{n}(W_{(\lceil \alpha n\rceil),n} - q_\alpha)  + o_p(1)\end{equation}
\[= \sqrt{n}(\frac{1}{n}\sum_{i=1}^nZ_{i,n} I[W_i \leq q_\alpha] - \tilde{\mu}) + \varphi'(q_\alpha)\sqrt{n}(W_{(\lceil \alpha n\rceil),n} - q_\alpha) + o_p(1)\]

Furthermore, under  \Cref{as:positive-derivative} standard results (c.f.~ \cite{van2000asymptotic} Corollary 21.5) give the following asymptotic expansion for the second term of \Cref{emp-done}: \[\sqrt{n}(W_{(\lceil \alpha n\rceil),n} - q_\alpha) = -\frac{1}{\sqrt{n}}\sum_{i=1}^n \frac{I[W_i \leq q_\alpha]-\alpha}{F_W'(q_\alpha)} + o_p(1).\]

Using multidimensional CLT we have that \[\sqrt{n}\begin{pmatrix}
\frac{1}{n}\sum_{i=1}^nZ_i I[W_i \leq q_\alpha] - \tilde{\mu}\\
-\frac{1}{n}\sum_{i=1}^n \varphi'(q_\alpha)\frac{I[W_i \leq q_\alpha]-\alpha}{F_W'(q_\alpha)}

\end{pmatrix} \overset{d}{\rightarrow} \mathcal{N}\left(\begin{pmatrix}
    0\\
    0
\end{pmatrix}, \begin{pmatrix}
    \tilde{\sigma}^2 & -\varphi'(q_\alpha)\tilde{\mu}(1-\alpha)/F'_W(q_\alpha)\\
    -\varphi'(q_\alpha)\tilde{\mu}(1-\alpha)/F'_W(q_\alpha) & \varphi'(q_\alpha)^2\alpha(1-\alpha)/F'_W(q_\alpha){}^2
\end{pmatrix}\right)\] 

So, using continuous mapping theorem, we can conclude that \[\sqrt{n}(\frac{1}{n}\sum_{i=1}^nZ_i I[W_i \leq q_\alpha] - \tilde{\mu}) -\frac{1}{\sqrt{n}}\sum_{i=1}^n \frac{I[W_i \leq q_\alpha]-\alpha}{F_W'(q_\alpha)}) \overset{d}{\rightarrow} \mathcal{N}(0, \tilde{\sigma}^2 - 2\varphi'(q_\alpha)\tilde{\mu}(1-\alpha)/F'_W(q_\alpha) + \varphi'(q_\alpha)^2\alpha(1-\alpha)/F'_W(q_\alpha){}^2\] and hence 

\begin{equation}\label{eq:4}
   \sqrt{n}\left(\frac{1}{n}\sum_{i=1}^n Z_{i,n}I_{F_{i,n}} - \tilde{\mu}\right) \overset{d}{\rightarrow} \mathcal{N}(0, \tilde{\sigma}^2 - 2\varphi'(q_\alpha)\tilde{\mu}(1-\alpha)/F'_W(q_\alpha) + \varphi'(q_\alpha)^2\alpha(1-\alpha)/F'_W(q_\alpha){}^2) 
\end{equation}

Finally, let us obtain a reasonable form for $\varphi'(q_\alpha)$. Recall that $\varphi(t) := \mathbb{E}[ZI[W \leq t]]$. Writing the expectation as a Riemann-Stieltjies integral, we find that \[\mathbb{E}[ZI[W \leq t]] = \mathbb{E}[I[W \leq t]\mathbb{E}[Z|W]] = \int_{-\infty}^t \mathbb{E}[Z|W=w]dF_W(w).\] The Fundamental Theorem of Calculus for Riemann-Stieltjies integrals (cf.~Theorem 7.32 (iii) of \cite{apostol1974mathematical}) combined with \Cref{as:positive-derivative} then implies that the derivative of the above, at $q_\alpha$, is $\mathbb{E}[Z|W=q_\alpha]F_W'(q_\alpha)$. Plugging this into \Cref{eq:4}, \Cref{theorem4} follows.
It remains to prove \Cref{zero-conv}. 

\subsection{Proof of \Cref{zero-conv}} \label{sub:zero-cov}

To prove \Cref{zero-conv}, we first recall some basic terminology from \cite{vaart2023empirical} for ease of readability.

\begin{definition}[VC dimension]
    Let $\mathcal{C}$ be a collection of subsets of $\mathcal{X}$. The VC dimension of $\mathcal{C}$, is defined as \[\vcd(\mathcal{C}) := \max\{n \in \mathbb{N}: \exists S \subseteq \mathcal{X} \text{ with } |S| = n \text{ such that }S \text{ is shattered by } \mathcal{C}\}\] where we say that a set $S$ is shattered by $\mathcal{C}$ if the power set of $S$ is contained in $\{C \cap S: C \in \mathcal{C}\}$.
\end{definition}

\begin{definition}[Subgraph of a function; cf Page 141 of \cite{vaart2023empirical}]
    Let $f: \mathcal{X} \rightarrow \mathbb{R}$. The subgraph of $f$ is defined as the set \[\{(x,s) \in \mathcal{X} \times \mathbb{R}: s< f(x)\}.\]
\end{definition}

\begin{definition}[VC subgraph; cf Page 141 of \cite{vaart2023empirical}]
    Let $\mathcal{F}$ be a class of functions from $\mathcal{X} \rightarrow \mathbb{R}$ and let $\mathcal{C}$ be the associated class of subgraphs of elements of $\mathcal{F}$. The class $\mathcal{F}$ is said to be a VC-subgraph class if $\vcd(\mathcal{C}) < \infty$.
\end{definition}

We now show a standard fact:

\begin{lemma}\label{vc1}
    Let $g_t: \mathcal{W} \times \mathcal{Z} \rightarrow \mathbb{R}$ given by $g_t(w,z) = I[w \leq t]$. Then $\mathcal{G} := \{g_t: t \in \mathbb{R}\}$ is a VC-subgraph class.
\end{lemma}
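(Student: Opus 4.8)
The plan is to compute the subgraphs of the functions $g_t$ explicitly and then check that the resulting collection of sets has VC dimension at most one, which is exactly the assertion that $\mathcal{G}$ is a VC-subgraph class.

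First I would write down the subgraph of $g_t$. Since $g_t$ takes only the values $0$ and $1$, its subgraph is
\[
C_t := \{(w,z,s) : s < g_t(w,z)\} = \{(w,z,s): w \le t,\ s < 1\} \cup \{(w,z,s) : w > t,\ s < 0\}.
\]
The crucial observation is that membership of a point $(w,z,s)$ in $C_t$ does not depend on $z$ at all, and depends on $(w,s)$ only through three regimes: if $s \ge 1$ then $(w,z,s) \notin C_t$ for every $t$; if $s < 0$ then $(w,z,s) \in C_t$ for every $t$; and if $0 \le s < 1$ then $(w,z,s) \in C_t$ if and only if $w \le t$.

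Next I would use this trichotomy to bound $\vcd(\mathcal{C})$, where $\mathcal{C} := \{C_t : t \in \mathbb{R}\}$ denotes the collection of subgraphs. Any point with $s \ge 1$ is never selected and any point with $s < 0$ is always selected; hence such a point cannot belong to any shattered set, since we could never (respectively) exclude or include it. Therefore any shattered set must consist entirely of points lying in the band $0 \le s < 1$, on which membership in $C_t$ reduces to the single threshold condition $w \le t$. The class of half-lines $\{(-\infty, t] : t \in \mathbb{R}\}$ has VC dimension one: for two points $w_1 < w_2$ the singleton $\{w_2\}$ is never cut out, since $w_2 \le t$ forces $w_1 \le t$. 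Consequently no two points can be shattered by $\mathcal{C}$, so $\vcd(\mathcal{C}) \le 1 < \infty$.

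Finally, by the definition of a VC-subgraph class, finiteness of $\vcd(\mathcal{C})$ is precisely the statement that $\mathcal{G}$ is a VC-subgraph class, completing the proof. I do not anticipate any real obstacle here: the only points requiring care are the bookkeeping of the three $s$-regimes and the (standard) fact that half-lines have VC dimension one, both of which are immediate once the subgraph is written out. An alternative, equally short route would be to invoke the general result that the indicators of a VC class of sets form a VC-subgraph class, applied to the VC class of half-lines $\{w \le t\}$, but the direct computation above is self-contained.
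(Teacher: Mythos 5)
Your proof is correct and follows essentially the same route as the paper's: both explicitly bound the VC dimension of the subgraph class by restricting attention to points with $s \in [0,1)$ (where membership reduces to the half-line condition $w \le t$) and then using the ordering of the $w$-coordinates to exhibit an unrealizable subset. The only cosmetic difference is that you show no two points can be shattered while the paper shows no three can; both bounds are finite and suffice.
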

\begin{proof}
    Let $\mathcal{C}$ be the class of subgraphs of elements of $\mathcal{G}$. We claim that no three points \[S = \{(w_1, z_1, s_1), (w_2, z_2, s_2), (w_3, z_3, s_3)\}\] can be shattered by $\mathcal{C}$ (and hence $\vcd(\mathcal{C}) < 3$). To see why, suppose, without loss of generality, that $w_1 \leq w_2 \leq w_3$. Also, observe that we need only consider $s_i \in [0,1)$ since otherwise the point $(w_i, z_i, s_i)$ could only be labeled in one way. But in this case, notice that there exists no  $C \in \mathcal{C}$ for which $C \cap S = \{(w_1, z_1, s_1), (w_3, z_3, s_3)\}$ since otherwise there would exist some $t$ for which $w_1 \leq t, w_3 \leq t$ but $w_2 > t$, contradicting the fact that $w_1 \leq w_2 \leq w_3$.
    
\end{proof}

Lemma~\ref{vc1} combined with another standard fact shows that $\mathcal{F} = \{f_t: t \in \mathbb{R}\}$ is a VC-subgraph class:
\begin{lemma}\label{vc2}
    The class of functions $\mathcal{F} = \{f_t: t \in \mathbb{R}\}$ is a VC-subgraph class.
\end{lemma}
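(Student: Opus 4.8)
The plan is to obtain $\mathcal{F}$ from the class $\mathcal{G}=\{g_t:t\in\mathbb{R}\}$ of \Cref{vc1} via a single operation that is known to preserve the VC-subgraph property, rather than analyzing the subgraphs $\{(w,z,s):s<f_t(w,z)\}$ from scratch. The crucial observation is the factorization $f_t(w,z)=z\,I[w\le t]=h(w,z)\,g_t(w,z)$, where $h(w,z):=z$ is a single \emph{fixed} function that does not depend on $t$. Thus $\mathcal{F}=\{h\cdot g_t:t\in\mathbb{R}\}$ is exactly the class obtained by multiplying every element of the VC-subgraph class $\mathcal{G}$ by the one fixed function $h$.

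First I would recall from \Cref{vc1} that $\mathcal{G}$ is a VC-subgraph class. Then I would invoke the standard stability property of VC-subgraph classes (one of the preservation operations catalogued in \cite{vaart2023empirical}): if $\mathcal{H}$ is a VC-subgraph class and $\phi$ is a fixed function, then $\{\phi\cdot u:u\in\mathcal{H}\}$ is again a VC-subgraph class. Applying this with $\mathcal{H}=\mathcal{G}$ and $\phi=h$ immediately yields that $\mathcal{F}$ is VC-subgraph, which completes the proof.

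The only step requiring care is this appeal to the multiplication-by-a-fixed-function preservation property, so the main obstacle is simply citing (or stating) it in the exact form needed. If one prefers a fully self-contained argument, it can be supplied directly: for a fixed point $(w,z,s)$, its membership in the subgraph $C_t$ of $f_t$ equals $I[s<0]$ when $t<w$ and equals $I[s<z]$ when $t\ge w$, so as $t$ increases the membership of each point changes value at most once, at the single threshold $t=w$. Hence any $k$ points induce at most $k+1$ distinct subsets as $t$ ranges over $\mathbb{R}$, which is strictly less than $2^k$ once $k\ge 2$; therefore no two points can be shattered and $\vcd$ of the associated subgraph class is finite (indeed at most $1$). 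Either route establishes the claim.
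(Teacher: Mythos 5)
Your main argument is exactly the paper's proof: factor $f_t = h\cdot g_t$ with the fixed function $h(w,z)=z$ and invoke the preservation of the VC-subgraph property under multiplication by a fixed function (Lemma 2.6.18(vi) of \cite{vaart2023empirical}), applied to the class $\mathcal{G}$ from \Cref{vc1}. Your optional self-contained counting argument (each point's subgraph membership switches at most once as $t$ crosses $w$, so $k$ points yield at most $k+1$ patterns) is also correct and would make the lemma independent of the citation, but it is not needed.
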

\begin{proof}
    Lemma 2.6.18 of \cite{vaart2023empirical} part (vi) tells us that $\{fg: g \in \mathcal{G}\}$ is a VC-subgraph class so long as the class $\mathcal{G}$ is a VC-subgraph class. Observing that $\mathcal{F} = \{fg: g \in \mathcal{G}\}$ with $\mathcal{G}$ defined as above and $f(w,z) = z$, and using the fact that $\mathcal{G}$ is a VC-subgraph class from Lemma~\ref{vc1} allows us to conclude the result.
\end{proof}

We now recall a bit more terminology.
\begin{definition}[Envelope function]
A measurable function $F: \mathcal{W} \times \mathcal{Z} \rightarrow \mathbb{R}$ is said to be an envelope function for a function class $\mathcal{F}$ if $|f| \leq F$ for all $f \in \mathcal{F}$.
\end{definition}

\begin{definition}[$P$-measurability; cf Definition 2.3.3 of \cite{vaart2023empirical}]
    A set $\mathcal{F}$ of functions, $f: \mathcal{X} \rightarrow \mathbb{R}$ on $(\mathcal{X}, \mathcal{A}, P)$ is called $P$-measurable if the map \[(X_1, \ldots, X_n) \mapsto \|\sum_{i=1}^n e_if(X_i)\|_{\mathcal{F}},\] where $\|f(\cdot)\|_{\mathcal{F}}$ means $\sup_{f\in \mathcal{F} }|f(\cdot)|$, is measurable on the completion of $(\mathcal{X}^n, \mathcal{A}^n, P^n)$ for every $n$ and every vector $(e_1, \ldots, e_n) \in \mathbb{R}^n$.
\end{definition}

Now, we define covering number and Donsker class:

\begin{definition}[Uniform entropy bound; c.f.~\cite{vaart2023empirical} Page 127]
    A class of functions $\mathcal{F}$ is said to satisfy the uniform entropy bound if \[\int_0^\infty \sup_Q\sqrt{\log N(\epsilon \|F\|_{Q,2}, \mathcal{F}, L_2(Q))}d\epsilon < \infty.\]
\end{definition}

\begin{definition}[$P$-Donsker; c.f.~\cite{vaart2023empirical} page 81]
    A class of functions $\mathcal{F}$ for which the empirical process $\sqrt{n}(\mathbb{P}_n - P)$, indexed by $\mathcal{F}$, converges weakly in $\ell^\infty(\mathcal{F})$ to a tight Borel measurable element $\mathbb{G}$ in $\ell^\infty(\mathcal{F})$ is said to be $P$-Donsker.
\end{definition}

Now, a theorem from \cite{vaart2023empirical}:
\begin{theorem}[Theorem 2.5.2 of \cite{vaart2023empirical}]
    Let $\mathcal{F}$ be a class of functions satisfying the uniform entropy bound. Furthermore, suppose that the classes \[\{f-g: f,g \in \mathcal{F}, \|f-g\|_{P,2} < \delta\}\] and \[\{(f-g) \cdot (f'-g'): f,g,f',g' \in \mathcal{F}\}\] are $P$-measurable for every $\delta > 0$. If the envelope function $F$ for $\mathcal{F}$ is square integrable, then $\mathcal{F}$ is $P$-Donsker.
\end{theorem}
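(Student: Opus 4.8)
The plan is to prove that $\mathcal{F}$ is $P$-Donsker by verifying the two conditions that characterize weak convergence of the empirical process $\mathbb{G}_n = \sqrt{n}(\mathbb{P}_n - P)$ in $\ell^\infty(\mathcal{F})$: (i) convergence of all finite-dimensional marginals to a Gaussian limit, and (ii) asymptotic equicontinuity of $\mathbb{G}_n$ with respect to the $L_2(P)$ seminorm. Condition (i) is the easy part: since the envelope $F$ is square integrable, $Pf^2 \le PF^2 < \infty$ for every $f \in \mathcal{F}$, so for any finite collection $f_1,\dots,f_k$ the ordinary multivariate central limit theorem gives $(\mathbb{G}_n f_1,\dots,\mathbb{G}_n f_k) \overset{d}{\to} \mathcal{N}(0,\Sigma)$ with $\Sigma_{jl} = Pf_jf_l - Pf_j\,Pf_l$. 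The real work is in (ii), namely showing that for every $\epsilon > 0$
\[
\lim_{\delta\downarrow 0}\limsup_{n\to\infty}\mathbb{P}^*\Big(\sup_{f,g\in\mathcal{F},\,\|f-g\|_{P,2}<\delta}|\mathbb{G}_n(f-g)|>\epsilon\Big)=0,
\]
since (i) and (ii) together imply that $\mathcal{F}$ is $P$-Donsker with a tight Gaussian limit.

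To establish the equicontinuity in (ii), I would bound the outer expectation $\mathbb{E}^*\|\mathbb{G}_n\|_{\mathcal{F}_\delta}$, where $\mathcal{F}_\delta := \{f-g : f,g\in\mathcal{F},\ \|f-g\|_{P,2}<\delta\}$, and then apply Markov's inequality. The first move is to apply the symmetrization lemma to replace $\mathbb{G}_n$ by the Rademacher-symmetrized process, reducing the task to bounding $\mathbb{E}^*\big\|\tfrac{1}{\sqrt n}\sum_{i=1}^n \varepsilon_i\, h(X_i)\big\|_{\mathcal{F}_\delta}$ with $\varepsilon_i$ i.i.d.\ signs independent of the data. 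Conditionally on $X_1,\dots,X_n$, this is a sub-Gaussian process indexed by $\mathcal{F}_\delta$ with respect to the random metric induced by $\|\cdot\|_{\mathbb{P}_n,2}$, so Dudley's chaining (entropy integral) bound yields, up to a universal constant $C$,
\[
\mathbb{E}_{\varepsilon}\Big\|\tfrac{1}{\sqrt n}\sum_{i=1}^n \varepsilon_i\, h(X_i)\Big\|_{\mathcal{F}_\delta}\le C\int_0^{\theta_n}\sqrt{\log N\big(\eta,\mathcal{F}_\delta,L_2(\mathbb{P}_n)\big)}\,d\eta,
\]
where $\theta_n$ is (a constant times) the empirical $L_2$-radius $\sup_{h\in\mathcal{F}_\delta}\|h\|_{\mathbb{P}_n,2}$.

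The crucial step is then to control this random entropy integral uniformly in $n$ and let $\delta\downarrow 0$. Rescaling covering numbers by the empirical envelope norm $\|F\|_{\mathbb{P}_n,2}$ and bounding $N(\eta,\mathcal{F}_\delta,L_2(\mathbb{P}_n))$ above by $\sup_Q N(\eta\|F\|_{Q,2},\mathcal{F},L_2(Q))$—so that the hypothesis $J(\infty,\mathcal{F}):=\int_0^\infty\sup_Q\sqrt{\log N(\eta\|F\|_{Q,2},\mathcal{F},L_2(Q))}\,d\eta<\infty$ applies—produces an integrand dominated, uniformly in $n$, by an integrable function whose integral is $J(\infty,\mathcal{F})$. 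Because the weak law of large numbers forces $\|h\|_{\mathbb{P}_n,2}^2 \to \|h\|_{P,2}^2$, the effective radius $\theta_n$ is, in probability, of order $\delta$, so a dominated-convergence argument on the entropy integral shows the bound tends to $0$ as first $n\to\infty$ and then $\delta\downarrow 0$. Passing to expectation over the data—where the two $P$-measurability hypotheses are exactly what is needed to make the relevant suprema measurable and to justify symmetrization and Fubini for outer expectations—and combining with Markov's inequality delivers (ii).

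The main obstacle is this last step: making the chaining bound genuinely uniform in the sample and rigorously interchanging the $n\to\infty$ and $\delta\downarrow 0$ limits. This demands care in (a) converting the random empirical covering numbers into the deterministic uniform-entropy quantity via the envelope rescaling, (b) verifying that the entropy integral over the shrinking class $\mathcal{F}_\delta$ vanishes, which uses square integrability of $F$ together with $J(\infty,\mathcal{F})<\infty$ through the dominated-convergence argument, and (c) discharging the measurability bookkeeping so that symmetrization and Fubini are valid for outer expectations. Finite-dimensional convergence together with this asymptotic equicontinuity then yield the stated conclusion that $\mathcal{F}$ is $P$-Donsker.
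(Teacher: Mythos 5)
The paper does not actually prove this statement: it is imported verbatim as Theorem 2.5.2 of \cite{vaart2023empirical} and used as a black box to establish that $\{f_t : t\in\mathbb{R}\}$ is $P$-Donsker (\Cref{donsker}). So the benchmark for your attempt is the textbook proof, and your outline follows exactly that route: finite-dimensional convergence via the ordinary multivariate CLT (legitimate, since $Pf^2\le PF^2<\infty$), and asymptotic equicontinuity via symmetrization, conditional sub-Gaussianity of the Rademacher process with respect to $L_2(\mathbb{P}_n)$, Dudley's entropy-integral bound, and rescaling the empirical covering numbers by $\|F\|_{\mathbb{P}_n,2}$ so that the uniform entropy hypothesis applies. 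All of that matches the standard argument.

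There is, however, one genuine gap: your control of the random radius $\theta_n=\sup_{h\in\mathcal{F}_\delta}\|h\|_{\mathbb{P}_n,2}$. You justify ``$\theta_n$ is of order $\delta$ in probability'' by appealing to the weak law of large numbers, i.e.\ $\|h\|_{\mathbb{P}_n,2}^2\to\|h\|_{P,2}^2$ for each fixed $h$. Pointwise convergence per function does not bound a supremum over the (generally uncountable) class $\mathcal{F}_\delta$; the whole difficulty is uniformity. What the actual proof does is show that the class of squared differences $\{(f-g)^2 : f,g\in\mathcal{F}\}$ is $P$-Glivenko--Cantelli: it has integrable envelope $(2F)^2$ (a second, essential use of the square integrability of $F$), and its covering numbers are controlled by those of $\mathcal{F}$ via the uniform entropy bound, yielding $\sup_{h\in\mathcal{F}_\delta}\bigl|\|h\|_{\mathbb{P}_n,2}^2-\|h\|_{P,2}^2\bigr|\to 0$ in outer probability and hence $\theta_n^2\le \delta^2+o_P(1)$ uniformly. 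This GC step is also where the second measurability hypothesis---on the product class $\{(f-g)\cdot(f'-g')\}$---is actually consumed (it is needed to symmetrize inside that uniform-LLN argument), not merely for the Fubini bookkeeping your sketch assigns it to. With that uniform step supplied, the rest of your plan (dominated convergence on the entropy integral as the upper limit shrinks with $\delta$, then Markov's inequality) closes the argument exactly as in \cite{vaart2023empirical}.
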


This theorem (and some of the previous lemmata) easily allows us to conclude that:
\begin{lemma}\label{donsker}
    $\mathcal{F} = \{f_t: t \in \mathbb{R}\}$ is $P$-Donsker.
\end{lemma}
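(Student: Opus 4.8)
The plan is to invoke the cited Theorem 2.5.2 of \cite{vaart2023empirical} directly, verifying its three hypotheses in turn: a square-integrable envelope, the uniform entropy bound, and the two $P$-measurability conditions. The first two have essentially already been arranged by the preceding lemmata and assumptions, so the work is really in assembling them.

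First I would exhibit the envelope. The natural choice is $F(w,z) := |z|$, since $|f_t(w,z)| = |z|\,I[w \leq t] \leq |z|$ for every $t \in \mathbb{R}$, so $F$ dominates all of $\mathcal{F}$. Square integrability of $F$ is precisely $\mathbb{E}[Z^2] < \infty$, which is exactly \Cref{second-moment}; this is the only place that assumption is needed. For the uniform entropy bound I would appeal to \Cref{vc2}, which already establishes that $\mathcal{F}$ is a VC-subgraph class. A standard covering-number bound for VC-subgraph classes (van der Vaart--Wellner Theorem 2.6.7) gives, uniformly over all probability measures $Q$, a bound of the form $N(\epsilon \|F\|_{Q,2}, \mathcal{F}, L_2(Q)) \leq K V (16e)^V (1/\epsilon)^{2(V-1)}$, where $V = \vcd$ of the associated subgraph class is finite. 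Since this grows only polynomially in $1/\epsilon$, the integrand $\sup_Q \sqrt{\log N(\epsilon\|F\|_{Q,2}, \mathcal{F}, L_2(Q))}$ is $O(\sqrt{\log(1/\epsilon)})$ near $0$ and vanishes for $\epsilon \geq 1$, so the entropy integral $\int_0^\infty \sup_Q \sqrt{\log N(\epsilon\|F\|_{Q,2},\mathcal{F},L_2(Q))}\, d\epsilon$ converges. Thus the uniform entropy bound holds.

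It remains to verify the two $P$-measurability conditions on the derived classes $\{f-g : f,g\in\mathcal{F},\ \|f-g\|_{P,2}<\delta\}$ and $\{(f-g)\cdot(f'-g') : f,g,f',g'\in\mathcal{F}\}$. Here I would use a pointwise-separability argument: the index set $\mathbb{R}$ admits the countable dense subset $\mathbb{Q}$, and because $t \mapsto I[w \leq t]$ is nondecreasing and right-continuous, every supremum over $\mathcal{F}$ (and over the two derived classes) is realized as a limit of suprema over rational indices. Consequently the maps $(W_1,\dots,W_n,Z_1,\dots,Z_n) \mapsto \|\sum_i e_i h(W_i,Z_i)\|$ appearing in the definition of $P$-measurability are countable suprema of measurable functions, hence measurable, for each $n$ and each sign vector. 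This makes both derived classes $P$-measurable. With all three hypotheses of Theorem 2.5.2 in hand, the theorem yields that $\mathcal{F}$ is $P$-Donsker, as claimed. The most delicate step is this last measurability verification; everything else is a direct citation, but the measurability is handled cleanly by the monotone, right-continuous structure of the indicator threshold.
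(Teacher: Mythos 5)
Your proposal is correct and follows essentially the same route as the paper: Theorem 2.5.2 of van der Vaart--Wellner, with measurability via restriction of the supremum to rational thresholds, the uniform entropy bound via the VC-subgraph property and Theorem 2.6.7, and a square-integrable envelope. Your envelope $F(w,z)=|z|$ (square-integrable by \Cref{second-moment}) is in fact the correct one; the paper's proof writes $F(w,z)=w$ and invokes a second moment of $W$, which is evidently a typo for what you wrote.
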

\begin{proof}
    First, we show the even stronger condition that $\{f_t - f_s: f_t, f_s \in \mathcal{F}\}$ is $P$-measurable. Consider the map \begin{align*}((W_1, Z_1), \ldots, (W_n,Z_n)) &\mapsto \sup_{t,s}\Big|\sum_{i=1}^ne_if_t(W_i, Z_i)-e_if_s(W_i,Z_i)\Big|\\
    &= \sup_{t,s}|\sum_{i \in [n]: W_i \in (s,t]}e_iZ_i|\end{align*} Clearly the supremum can be replaced by a supremum over $t$ and $s$ in the rationals. Since a countable supremum of measurable functions (which the inside of the supremum clearly is) is measurable, the result is measurable. The same argument shows \[((W_1, Z_1), \ldots, (W_n,Z_n)) \mapsto \sup_{s,t,s',t'}\Big|\sum_{i=1}^ne_i(f_t(W_i, Z_i) - f_s(W_i, Z_i))(f_{t'}(W_i, Z_i) - f_{s'}(W_i, Z_i))\Big|\] is measurable.

    Now, observe that $|f(w_i, z_i)| \leq |w_i|$ and $W$'s having second moment immediately implies that the envelope function $F(w,z) = w$ is square-integrable.

    Finally, notice that $\log N(\epsilon \|F\|_{Q,2}, \mathcal{F}, L_2(Q)) = 0$ for $\epsilon \geq 1$, clearly. And hence we must show that \[\int_0^1\sup_Q\sqrt{\log N(\epsilon \|F\|_{Q,2}, \mathcal{F}, L_2(Q))}d\epsilon < \infty.\]

    Theorem 2.6.7 of \cite{vaart2023empirical}, combined with our observation that $\mathcal{F}$ is a VC class, says that $\sqrt{\log N(\epsilon \|F\|_{Q,2}, \mathcal{F}, L_2(Q))} \leq O(\sqrt{\log(1/\epsilon)})$ and it is indeed true that $\int_0^1 \sqrt{\log(1/\epsilon)} < \infty$, as desired.
\end{proof}

We now recall the definition of asymptotic equicontinuity:
\begin{definition}[\cite{vaart2023empirical} page 89]
    Define the seminorm $\rho_f(f) = (P(f - Pf)^2)^{1/2}$. Then we say that the empirical process $\mathbb{G}_n$ indexed by the function class $\mathcal{F}$ is asymptotically equicontinuous if \[\lim_{\delta \downarrow 0}\limsup_{n\rightarrow \infty}P\left(\sup_{\rho_P(f-g)<\delta}|\mathbb{G}_n(f-g)| > \epsilon\right) = 0.\]
\end{definition}

Theorem 1.5.7 of \cite{vaart2023empirical} combined with the fact that $\mathcal{F}$ is $P$-Donsker (Lemma~\ref{donsker}) then immediately implies that $\mathbb{G}_n := \sqrt{n}(\mathbb{P}_n - P)$ is uniformly equicontinuous. We are finally able to prove \Cref{zero-conv}:

\zeroconv*
\begin{proof}
    First, observe that $\rho_f(f_{W_{(\lceil \alpha n\rceil),n}} - f_{q_\alpha}) = o_p(1)$. To see why, we have that \begin{align*}
        (\rho_f(f_{W_{(\lceil \alpha n\rceil),n}} - f_{q_\alpha}))^2 &\leq \int (z(I[w \leq W_{(\lceil \alpha n\rceil),n}] - I[w \leq q_\alpha])^2dP(w,z)\\
            &=\int z^2|I[w \leq W_{(\lceil \alpha n\rceil),n}] - I[w \leq q_\alpha]|dP(w,z)
    \end{align*} which easily converges to zero in probability:
    Fix any $\delta > 0$
    \begin{align}
        &\int z^2|I[w \leq W_{(\lceil \alpha n\rceil),n}] - I[w \leq q_\alpha]|dP(w,z) \nonumber\\
        &= \int z^2|I[w \leq W_{(\lceil \alpha n\rceil),n}] - I[w \leq q_\alpha]|I[|w-q_\alpha| < \delta]dP(w,z) \nonumber\\
        &+ \int z^2|I[w \leq W_{(\lceil \alpha n\rceil),n}] - I[w \leq q_\alpha]|I[|w-q_\alpha| \geq \delta]dP(w,z) \nonumber\\
        &\leq 2\int z^2I[|w-q_\alpha| < \delta]dP(w,z) + \int z^2|I[w \leq W_{(\lceil \alpha n\rceil),n}] - I[w \leq q_\alpha]|I[|w-q_\alpha| \geq \delta]dP(w,z) \label{integral-eq}
    \end{align}

    Notice that \[z^2|I[w \leq W_{(\lceil \alpha n\rceil),n}] - I[w \leq q_\alpha]|I[|w-q_\alpha| \geq \delta] \overset{a.s.}{\rightarrow} 0\] for both $w=q_\alpha - \delta$ and $w=q_\alpha+\delta$ since $W_{(\lceil \alpha n\rceil),n}\overset{a.s.}{\rightarrow} q_\alpha$. Letting \begin{align*}A &= \{\lim_{n \rightarrow \infty}z^2|I[q_\alpha - \delta \leq W_{(\lceil \alpha n\rceil),n}] - I[q_\alpha - \delta \leq q_\alpha]|I[|q_\alpha - \delta-q_\alpha| \geq \delta]=0\}\\& \cap \{\lim_{n \rightarrow \infty}z^2|I[q_\alpha + \delta \leq W_{(\lceil \alpha n\rceil),n}] - I[q_\alpha + \delta \leq q_\alpha]|I[|q_\alpha + \delta-q_\alpha| \geq \delta]=0\}\end{align*}
    denote the event that both convergences occur, a union bound tells us that $P(A) = 1$. Now, notice that for any $w' \leq q_\alpha - \delta$ we have that \begin{align*}
        &z^2|I[w' \leq W_{(\lceil \alpha n\rceil),n}] - I[w' \leq q_\alpha]|I[|w'-q_\alpha| \geq \delta] \\
        &=z^2(1 - I[w' \leq W_{(\lceil \alpha n\rceil),n}])\\
        &\leq z^2(1 - I[q_\alpha-\delta \leq W_{(\lceil \alpha n\rceil),n}])\\
        &= z^2|I[q_\alpha - \delta \leq W_{(\lceil \alpha n\rceil),n}] - I[q_\alpha - \delta \leq q_\alpha]|I[|q_\alpha - \delta-q_\alpha|\geq \delta]
    \end{align*}

    Similarly, for any $w' \geq q_\alpha+\delta$, we have that \begin{align*}
        &z^2|I[w' \leq W_{(\lceil \alpha n\rceil),n}] - I[w' \leq q_\alpha]|I[|w'-q_\alpha| \geq \delta] \\
        &=z^2I[w' \leq W_{(\lceil \alpha n\rceil),n}]\\
        &\leq z^2I[q_\alpha+\delta \leq W_{(\lceil \alpha n\rceil),n}]\\
        &\leq z^2|I[q_\alpha + \delta \leq W_{(\lceil \alpha n\rceil),n}] - I[q_\alpha + \delta \leq q_\alpha]|I[|q_\alpha + \delta-q_\alpha|\geq \delta]
    \end{align*}

    Hence, on the event $A$, we have that $z^2|I[w \leq W_{(\lceil \alpha n\rceil),n}] - I[w \leq q_\alpha]|I[|w-q_\alpha| \geq \delta] \rightarrow 0$ for all $(w,z)$ so that \[P\left(z^2|I[w \leq W_{(\lceil \alpha n\rceil),n}] - I[w \leq q_\alpha]|I[|w-q_\alpha| \geq \delta] \rightarrow 0, \forall (w,z)\right) = 1.\]
    
    Hence, dominated convergence theorem implies that \[\int z^2|I[w \leq W_{(\lceil \alpha n\rceil),n}] - I[w \leq q_\alpha]|I[|w-q_\alpha| > \delta]dP(w,z) \overset{a.s.}{\rightarrow} 0,\] Using this, from \Cref{integral-eq} we get that:  \[\limsup_n \int z^2|I[w \leq W_{(\lceil \alpha n\rceil),n}] - I[w \leq q_\alpha]|dP(w,z) \overset{a.s.}{\leq} 2\int z^2I[|w-q_\alpha| \leq \delta]dP(w,z).\]

    But notice that $\lim_{\delta \downarrow 0}[|w-q_\alpha| \leq \delta]= 0$ for almost every $w$ (in view of \Cref{as:positive-derivative}) and hence dominated convergence in turn tells us that \[\lim_{\delta \downarrow 0}2\int z^2I[|w-q_\alpha| \leq \delta]dP(w,z) = 0,\] and hence indeed $$\rho_f(f_{W_{(\lceil \alpha n\rceil),n}} - f_{q_\alpha}) = o_p(1),$$ as desired.

    Then we have that, for any $\epsilon, \delta > 0$, \begin{align*}
        &P(|\mathbb{G}_n[f_{W_{(\lceil \alpha n\rceil),n}}- f_{q_\alpha}]| > \epsilon)\\
        &= P(|\mathbb{G}_n[f_{W_{(\lceil \alpha n\rceil),n}}- f_{q_\alpha}]| > \epsilon, \rho_f(f_{W_{(\lceil \alpha n\rceil),n}} - f_{q_\alpha}) > \delta) + P(|\mathbb{G}_n[f_{W_{(\lceil \alpha n\rceil),n}}- f_{q_\alpha}]| > \epsilon, \rho_p(f_{W_{(\lceil \alpha n\rceil),n}} - f_{q_\alpha}) \leq \delta)
    \end{align*}
    The first term goes to zero by the above and the second term is at most \[P\left(\sup_{\rho_P(f-g)<\delta}|\mathbb{G}_n(f-g)| > \epsilon\right).\] Taking $n \rightarrow \infty$, then $\delta \downarrow 0$ and using the definition of uniform equicontinuity then gives the desired result.
\end{proof}

\section{Additional Material for Section \ref{hy:Base}: Main Result}\label{app:Base}

We now prove the asymptotic normality of the base estimator (and afterward present an alternative proof to the one in \cite{imai2023statistical} for the asymptotic normality of the subgroup estimator). 
For ease of presentation, we slightly adjust our notation from the main body. 
First, we characterize agents directly by their indices $W$ (instead of their covariates from which their indices are computed). 
Accordingly, we let $P'$ be an adjusted variant of the probability distribution $P$ from the main body defined over the space of indices $\mathbb{R}$ and reward functions $A\to \mathbb{R}$. 
We write $(W_{i,n},R_{i,n})\sim P'$ to denote a set of $n$ agents being sampled i.i.d. from the probability distribution $P'$.

In the policy group, we observe $(W_{i,n}, R_{i,n}(J_{i,n}))$ where $J_{i,n}$ is the binary treatment indicator variable of agent $i$. 
In the control group, we observe  $(W^0_{i,n}, R^0_{i,n}(0))$. 
For simplicity, we will sometimes write for agents in the policy group $R_{i,n}$ to denote the outcome of the reward function that we observe and $R_{i,n}^0$ for the agents in the control group.

Let \[F_{i,n} := \{Q_{\mathcal{W}_n}(W_{i,n})\leq \lceil\alpha n\rceil\},\] where $Q_{\mathcal{W}_n}(W_{i,n})$ denotes the rank of $W_{i,n}$ among $\mathcal{W}_n := \{W_{1,n}, \ldots, W_{n,n}\}$ and let $I_{F_{i,n}}$ denote the corresponding indicator variable, i.e., $I_{F_{i,n}}$ is $1$ if $W_{i,n}$ is among the $\lceil\alpha n\rceil$ lowest indices of the $n$ agents in the policy group, i.e., it receives a treatment. 
Analogously, let \[F^0_{i,n} := \{Q_{\mathcal{W}^0_n}(W^0_{i,n})\leq \lceil\alpha n\rceil\},\] where $Q_{\mathcal{W}^0_n}(W^0_{i,n})$ denotes the rank of $W^0_{i,n}$ among $\mathcal{W}^0_n := \{W^0_{1,n}, \ldots, W^0_{n,n}\}$. Similarly, define \[E_{i,n} := \{W_{i,n} \leq q_\alpha\}\] and  \[E^0_{i,n} := \{W^0_{i,n} \leq q_\alpha\}.\]

We define $\tau_n:= \mathbb{E}[R_{1,n}(1)I_{F_{1,n}}]- \mathbb{E}[R_{1,n}(0)I_{F_{1,n}}]$.

Before proceeding, we simply restate the convergence result of \cite{imai2023statistical} which shows that the difference between estimands converges at a faster-than-$\sqrt{n}$ rate
\begin{theorem}[Lemma S2 in Appendix S2 of \cite{imai2023statistical}]  \label{th:quant-vs-ranks}
    Under \Cref{regularity}, we have
    \begin{equation}
        \sqrt{n}(\tau_n - \mathbb{E}[R_{1,n}(1)I_{E_{i,n}} - R^0_{1,n}(0)I_{E_{i,n}^0}]) \rightarrow 0.
    \end{equation}
\end{theorem}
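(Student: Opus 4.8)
The plan is to reduce the claim to a deterministic analytic estimate, since both $\tau_n$ and the target are population quantities (the assertion is that a sequence of real numbers, not random variables, tends to $0$ after scaling by $\sqrt{n}$). First I would note that because the control and policy arms are drawn from the same distribution $P'$, we have $\mathbb{E}[R^0_{1,n}(0)I_{E^0_{1,n}}] = \mathbb{E}[R_{1,n}(0)I_{E_{1,n}}]$, so it suffices to analyze, for each action $a\in\{0,1\}$, the gap $D_n^{(a)} := \mathbb{E}[R_{1,n}(a)I_{F_{1,n}}] - \mathbb{E}[R_{1,n}(a)I_{E_{1,n}}]$ and show $\sqrt{n}\,(D_n^{(1)}-D_n^{(0)})\to 0$. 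The only discrepancy between $I_{F_{1,n}}$ (selection by empirical rank into the bottom $\lceil\alpha n\rceil$) and $I_{E_{1,n}}$ (selection by the fixed population threshold $q_\alpha$) is that agent $1$'s rank is coupled to the other $n-1$ agents.

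The key representation step exploits independence: conditioning on $W_{1,n}=w$, agent $1$ lands among the bottom $\lceil\alpha n\rceil$ iff at most $\lceil\alpha n\rceil-1$ of the remaining indices fall below $w$, an event of probability $\mathbb{P}(\mathrm{Bin}(n-1,F_W(w))\le \lceil\alpha n\rceil-1)$. Writing $m_a(w)=\mathbb{E}[R(a)\mid W=w]$ and changing variables to $p=F_W(w)$, I obtain
\[
D_n^{(a)} = \int_0^1 m_a(F_W^{-1}(p))\,h_n(p)\,dp, \qquad h_n(p) := \mathbb{P}(\mathrm{Bin}(n-1,p)\le \lceil\alpha n\rceil-1) - I[p\le\alpha].
\]
Subtracting the two actions collapses the integrand to the conditional average treatment effect $\tau(p):=\mathbb{E}[R(1)-R(0)\mid W=F_W^{-1}(p)]$, which is precisely the object that \Cref{regularity} constrains. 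Thus the entire problem becomes showing $\sqrt{n}\int_0^1\tau(p)\,h_n(p)\,dp\to 0$.

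Three facts about the deterministic kernel $h_n$ drive the rest. A Beta-integral identity (each term $\binom{n-1}{i}\int_0^1 p^i(1-p)^{n-1-i}\,dp=1/n$) yields the exact signed mass $\int_0^1 h_n(p)\,dp = \lceil\alpha n\rceil/n-\alpha = O(1/n)$; and Chernoff/Hoeffding tail bounds give $|h_n(p)|\le \exp(-c\,n(p-\alpha)^2)$ for $p$ near $\alpha$, so that $\int_0^1|h_n(p)|\,dp=O(1/\sqrt{n})$ with mass concentrated in a window of width $\sim n^{-1/2}$ about $\alpha$. I then split
\[
\int_0^1 \tau(p)h_n(p)\,dp = \int_0^1 (\tau(p)-\tau(\alpha))\,h_n(p)\,dp \;+\; \tau(\alpha)\int_0^1 h_n(p)\,dp.
\]
The second term is $\tau(\alpha)\,O(1/n)$, so $\sqrt{n}$ times it vanishes. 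For the first term I split the domain at $|p-\alpha|\le\delta$ versus $|p-\alpha|>\delta$: on the far region the exponential decay of $h_n$ overwhelms $\sqrt{n}$, with boundedness of $\tau$ (from the bounded-variation part of \Cref{regularity}) controlling the integrand; on the near region continuity of $\tau$ at $\alpha$ bounds $|\tau(p)-\tau(\alpha)|$ by a modulus $\omega(\delta)$, and since $\sqrt{n}\int_0^1|h_n|=O(1)$ this gives $\limsup_n \sqrt{n}\,\bigl|\int(\tau-\tau(\alpha))h_n\bigr|\le C\,\omega(\delta)$. Letting $\delta\downarrow 0$ closes the argument.

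The main obstacle I anticipate is that the bound $\int_0^1|h_n|=O(1/\sqrt{n})$ is \emph{exactly} critical: it makes $\sqrt{n}\int_0^1|h_n|=O(1)$ rather than $o(1)$, so a crude $L^1$ estimate cannot succeed and the proof genuinely must exploit the continuity of $\tau$ at $\alpha$. This is precisely why \Cref{regularity} requires continuity at $\alpha$ (to make $\omega(\delta)\to 0$) together with bounded variation away from $\alpha$ (to keep $\tau$ integrable against the exponentially small tails). A secondary technical nuisance is ties in the index distribution, which muddy the clean binomial representation; the quantile-function formulation and the left-continuity stipulated in \Cref{regularity} are what absorb this.
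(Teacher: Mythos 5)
The paper does not actually prove this statement: it is imported verbatim as Lemma~S2 of \citet{imai2023statistical}, so there is no internal proof to compare against. Your proposal supplies a self-contained argument, and it is essentially correct. The reduction to $\sqrt{n}\int_0^1\tau(p)h_n(p)\,dp\to 0$ via the binomial representation $\mathbb{P}(F_{1,n}\mid W_{1,n}=w)=\mathbb{P}(\mathrm{Bin}(n-1,F_W(w))\le\lceil\alpha n\rceil-1)$ is valid (conditional on $W_{1,n}$, the event $F_{1,n}$ depends only on the other agents' indices and is hence independent of $R_{1,n}$, which is what lets you replace the reward by its conditional mean), the Beta-integral identity giving $\int_0^1 h_n=\lceil\alpha n\rceil/n-\alpha=O(1/n)$ is exact, and the Hoeffding bound $|h_n(p)|\le e^{-cn(p-\alpha)^2}$ together with the split at $|p-\alpha|\le\delta$ correctly isolates why \Cref{regularity} demands continuity of $\tau$ at $\alpha$: as you note, $\sqrt{n}\int|h_n|=O(1)$ is borderline, so the argument must combine the $o(1/\sqrt n)$ signed mass with the modulus of continuity $\omega(\delta)$. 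What your route buys is an elementary, fully explicit proof of the rate; what the paper buys by citing is nothing more than brevity. Two small points deserve tightening. First, on the far region $|p-\alpha|>\delta$ you invoke ``boundedness of $\tau$ from the bounded-variation part of \Cref{regularity}'', but bounded variation on every $(\gamma,1-\gamma)$ does not bound $\tau$ near $p=0,1$; you should instead use $\int_0^1|\tau(p)|\,dp\le\mathbb{E}|R(1)|+\mathbb{E}|R(0)|<\infty$ (available from the moment assumptions, e.g.\ \Cref{third-moment}), which the uniform exponential bound on $|h_n|$ then annihilates. Second, the tie issue is not absorbed by \Cref{regularity} but by continuity of $F_W$ at $q_\alpha$ (as in \Cref{as:positive-derivative}); without an atom-free index distribution near $q_\alpha$ both the binomial representation and the identity $\{W\le q_\alpha\}=\{F_W(W)\le\alpha\}$ need a tie-breaking convention. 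Neither point is a structural gap.
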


\subsection{Base estimator}
We now prove asymptotic normality for the base estimator. The original, non-rescaled version of the base estimator can be written as: 
\[\frac{1}{n}\sum_{i=1}^n(R_{i,n} - R^0_{i,n}).\]
In particular, we show that \[\frac{1}{\sqrt{n}}\sum_{i=1}^n(R_{i,n} - R^0_{i,n} - \tau_n) \overset{d}{\rightarrow} \mathcal{N}(0, \sigma^2_{\text{dm}}),\] for some $\sigma^2_{\text{dm}}$ to be specified later.

Using \Cref{th:quant-vs-ranks}, we write \begin{align}
    &\frac{1}{\sqrt{n}}\sum_{i=1}^n(R_{i,n} - R^0_{i,n} - \tau_n)\\
    &= \frac{1}{\sqrt{n}}\sum_{i=1}^n(R_{i,n}I_{F_{i,n}} - \mathbb{E}[R_{i,n}(1)I_{E_{i,n}}]) + \frac{1}{\sqrt{n}}\sum_{i=1}^n(R_{i,n}(0)(1-I_{F_{i,n}}) - \mathbb{E}[R_{i,n}(0)(1-I_{E_{i,n}})]) \\&- \frac{1}{\sqrt{n}}\sum_{i=1}^n (R^0_{i,n} - \mathbb{E}[R^0_{i,n}]) + o(1)\label{dm}
\end{align}

Under the same assumptions as \Cref{theorem4}, essentially the exact same proof of \Cref{theorem4} allows us to show that, defining  $\mu_t := \mathbb{E}[R_{i,n}(1) I[W_{i,n} \leq q_\alpha]]$, $\check{\mu}_0 := \mathbb{E}[R_{i,n}(0) I[W_{i,n} > q_\alpha]]$, \begin{align*}
    &\frac{1}{\sqrt{n}}\sum_{i=1}^n(R_{i,n}I_{F_{i,n}} - \mathbb{E}[R_{i,n}(1)I_{E_{i,n}}])\\
    &= \frac{1}{\sqrt{n}}\sum_{i=1}^n(R_{i,n}(1) I[W_{i,n} \leq q_\alpha] - \mu_t) + \mathbb{E}[R(1)|W=q_\alpha]F_W'(q_\alpha)\sqrt{n}(W_{(\lceil \alpha n\rceil),n} - q_\alpha) + o_p(1)\\
    &= \frac{1}{\sqrt{n}}\sum_{i=1}^n(R_{i,n}(1) I[W_{i,n} \leq q_\alpha] - \mu_t) -\frac{\mathbb{E}[R(1)|W=q_\alpha]F_W'(q_\alpha)}{\sqrt{n}}\sum_{i=1}^n \frac{I[W_i \leq q_\alpha]-\alpha}{F_W'(q_\alpha)} + o_p(1)\\
    &= \frac{1}{\sqrt{n}}\sum_{i=1}^n(R_{i,n}(1) I[W_{i,n} \leq q_\alpha] - \mu_t) -\frac{\mathbb{E}[R(1)|W=q_\alpha]}{\sqrt{n}}\sum_{i=1}^n (I[W_i \leq q_\alpha]-\alpha) + o_p(1)
\end{align*}

Similarly, by using a proof strategy nearly identical to \Cref{theorem4}, we obtain that 
\begin{align*}
    &\frac{1}{\sqrt{n}}\sum_{i=1}^n(R_{i,n}(0)(1-I_{F_{i,n}}) - \mathbb{E}[R_{i,n}(0)(1-I_{E_{i,n}})])\\
    &= \frac{1}{\sqrt{n}}\sum_{i=1}^n(R_{i,n}(0) I[W_{i,n} > q_\alpha] - \check{\mu}_0) - \mathbb{E}[R(0)|W=q_\alpha]F_W'(q_\alpha)\sqrt{n}(W_{(\lceil \alpha n\rceil),n} - q_\alpha) + o_p(1)\\
    &= \frac{1}{\sqrt{n}}\sum_{i=1}^n(R_{i,n}(0) I[W_{i,n} > q_\alpha] - \check{\mu}_0) +\frac{\mathbb{E}[R(0)|W=q_\alpha]F_W'(q_\alpha)}{\sqrt{n}}\sum_{i=1}^n \frac{I[W_i \leq q_\alpha]-\alpha}{F_W'(q_\alpha)} + o_p(1)\\
    &= \frac{1}{\sqrt{n}}\sum_{i=1}^n(R_{i,n}(0) I[W_{i,n} > q_\alpha] - \check{\mu}_0) +\frac{\mathbb{E}[R(0)|W=q_\alpha]}{\sqrt{n}}\sum_{i=1}^n (I[W_i \leq q_\alpha]-\alpha) + o_p(1)
\end{align*}

Hence, display~\eqref{dm} may be rewritten as 
\begin{align*}
    &\frac{1}{\sqrt{n}}\sum_{i=1}^n(R_{i,n}(1)I_{F_{i,n}} - \mathbb{E}[R_{i,n}I_{E_{i,n}}]) + \frac{1}{\sqrt{n}}\sum_{i=1}^n(R_{i,n}(0)(1-I_{F_{i,n}}) - \mathbb{E}[R_{i,n}(0)(1-I_{E_{i,n}})]) \\&- \frac{1}{\sqrt{n}}\sum_{i=1}^n (R^0_{i,n} - \mathbb{E}[R^0_{i,n}]) + o(1)\\
    &= \frac{1}{\sqrt{n}}\sum_{i=1}^n(R_{i,n}(1) I[W_{i,n} \leq q_\alpha] - \mu_t) + \frac{1}{\sqrt{n}}\sum_{i=1}^n(R_{i,n}(0) I[W_{i,n} > q_\alpha] - \check{\mu}_0)\\
    &- \frac{\mathbb{E}[R(1)|W=q_\alpha]-\mathbb{E}[R(0)|W=q_\alpha]}{\sqrt{n}}\sum_{i=1}^n (I[W_i \leq q_\alpha]-\alpha) - \frac{1}{\sqrt{n}}\sum_{i=1}^n (R^0_{i,n} - \mathbb{E}[R^0_{i,n}]) + o_p(1)
\end{align*}

Note that the last term is independent of the first three.
By the CLT, the last term converges in distribution to $\mathcal{N}(0,\mathrm{Var}(R(0)))$. 
To obtain the limiting distribution for the first three terms, we employ the multidimensional CLT. Defining $\sigma^2_t := \var(R_{i,n}(1) I[W_{i,n} \leq q_\alpha]), \check{\sigma^2_0} := \var(R_{i,n}(0) I[W_{i,n} > q_\alpha])$, we obtain that:

\begin{align*}
    &\frac{1}{\sqrt{n}}\sum_{i=1}^n\begin{pmatrix}
        R_{i,n}(1)I[W_{i,n} \leq q_\alpha] - \mu_t\\
        R_{i,n}(0)I[W_{i,n} > q_\alpha] - \check{\mu}_0\\
        -\mathbb{E}[R(1)-R(0)|W=q_\alpha](I[W_i \leq q_\alpha]-\alpha)
    \end{pmatrix} \\&\overset{d}{\rightarrow} \mathcal{N}\left(0, \Sigma\right)
\end{align*} where $\Sigma$ is

\[\begin{pmatrix}
    \sigma^2_t & -\mu_t\check{\mu}_0 & -\mathbb{E}[R(1)-R(0)|W=q_\alpha]\mu_t(1-\alpha)\\
    -\mu_t\check{\mu}_0 & \check{\sigma^2_0} & \alpha \mathbb{E}[R(1)-R(0)|W=q_\alpha]\check{\mu}_0\\
    -\mathbb{E}[R(1)-R(0)|W=q_\alpha]\mu_t(1-\alpha) & \alpha \mathbb{E}[R(1)-R(0)|W=q_\alpha]\check{\mu}_0 & \mathbb{E}[R(1)-R(0)|W=q_\alpha]^2\alpha(1-\alpha)
\end{pmatrix}\]

Hence, the continuous mapping theorem, combined with our earlier calculations, easily shows that \[\frac{1}{\sqrt{n}}\sum_{i=1}^n(R_{i,n} - R^0_{i,n} - \tau_n) \overset{d}{\rightarrow} \mathcal{N}\left(0, \sigma^2_{\text{dm}}\right)\] where 
\begin{align*}
    &\sigma^2_{\text{dm}} \\
    &= \alpha(1-\alpha)\mathbb{E}[R(1)-R(0)|W=q_\alpha]^2  \\
    & +(2\alpha\check{\mu}_0- 2(1-\alpha)\mu_t )\mathbb{E}[R(1)-R(0)|W=q_\alpha]+\sigma^2_t + \check{\sigma^2_0} -2\mu_t\check{\mu}_0+ \var(R(0))
\end{align*}
with $\mu_t := \mathbb{E}[R_{i,n}(1) I[W_{i,n} \leq q_\alpha]]$, $\check{\mu}_0 := \mathbb{E}[R_{i,n}(0) I[W_{i,n} > q_\alpha]]$, $\sigma^2_t := \var(R_{i,n}(1) I[W_{i,n} \leq q_\alpha]), \check{\sigma^2_0} := \var(R_{i,n}(0) I[W_{i,n} > q_\alpha])$

Using again the slightly more complex notation from the main body and the rescaled base estimator, we arrive at: 
\begin{theorem}\label{context1} 
   Under \Cref{second-moment} for $Z=R(0)$ and $Z=R(1)$ and \Cref{as:positive-derivative} for $\Upsilon(\mathbf{x})$ as well as \Cref{regularity}, we get:  
   $$\sqrt{n}\left(\theta^{\mathrm{base}}_{n,\alpha}(\pi)-\tau^{\mathrm{new}}_{n,\alpha}(\pi)\right)
   \overset{d}{\rightarrow} \mathcal{N}(0, \sigma^2_{\text{base}})$$ where 
   \begin{equation}
        \sigma^2_{\text{base}} 
    = \frac{1}{\alpha^2}\left(\alpha(1-\alpha)(\rho_1-\rho_0)^2  
    +(2\alpha\check{\mu}_0- 2(1-\alpha)\mu_1 )(\rho_1-\rho_0)+\sigma^2_1 + \check{\sigma^2_0} -2\mu_1\check{\mu}_0+ \mathrm{Var}(R(0))\right)
   \end{equation} 
    with $\mu_{i}=\mathbb{E}[R(i))I[\Upsilon(\mathbf{x})\leq q_{\alpha}]$, $\check{\mu_{i}}=\mathbb{E}[R(i))I[\Upsilon(\mathbf{x})> q_{\alpha}]$, $\rho_{i}=\mathbb{E}[R(i)|\Upsilon(\mathbf{x})= q_{\alpha}]$, $\sigma^2_i=\mathrm{Var}[R(i)I[\Upsilon(\mathbf{x})\leq q_\alpha]]$ and $\check{\sigma^2_i}=\mathrm{Var}[R(i)I[\Upsilon(\mathbf{x})> q_\alpha]]$ for $i\in \{0,1\}$ where $\mathbb{E}$ is taken over $(\mathbf{x},R)\sim P$.
\end{theorem}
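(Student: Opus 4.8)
The plan is to prove the result first for the \emph{unscaled} base estimator $\frac{1}{n}\sum_{i=1}^n(R_{i,n}-R^0_{i,n})$, establishing a $\sqrt{n}$-CLT against the rank-based estimand $\tau_n$, and then recover \Cref{context1} by the deterministic rescaling $n/\lceil\alpha n\rceil\to 1/\alpha$ (which turns the limiting variance $\sigma^2_{\text{dm}}$ into $\sigma^2_{\text{dm}}/\alpha^2$) together with a relabeling of notation back to the main body ($\mu_t\mapsto\mu_1$, $\mathbb{E}[R(\cdot)\mid W=q_\alpha]\mapsto\rho_\cdot$, etc.). The first step is to decompose the observed policy-arm reward as $R_{i,n}=R_{i,n}(1)I_{F_{i,n}}+R_{i,n}(0)(1-I_{F_{i,n}})$ and to invoke \Cref{th:quant-vs-ranks} (valid under \Cref{regularity}) so that the centering $\tau_n$ may be replaced, at cost $o(1/\sqrt{n})$, by the fixed-quantile centering $\mathbb{E}[R(1)I_E-R(0)I_{E^0}]$. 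This writes $\frac{1}{\sqrt{n}}\sum_i(R_{i,n}-R^0_{i,n}-\tau_n)$ as a sum of three centered averages: a policy-treated term indexed by the rank event $F$, a policy-untreated term indexed by its complement, and a clean control term.

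The heart of the argument is the second step: linearizing the two rank-indexed averages into sums of i.i.d.\ quantities. Here I would apply \Cref{theorem4} essentially verbatim to the treated term with $Z=R(1)$, and apply the mirror-image version of its proof (replacing $I[W\le q_\alpha]$ by $I[W>q_\alpha]$, which only flips the sign of the quantile correction) to the untreated term with $Z=R(0)$; both uses require \Cref{second-moment} and \Cref{as:positive-derivative}. Each linearization replaces the rank indicator $I_{F_{i,n}}$ by the fixed-threshold indicator $I[W_{i,n}\le q_\alpha]$ plus a Bahadur-type correction $\mp\,\mathbb{E}[R(\cdot)\mid W=q_\alpha]\,(I[W_i\le q_\alpha]-\alpha)$ arising from the asymptotic expansion of the sample quantile $W_{(\lceil\alpha n\rceil),n}$. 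The control term is already an i.i.d.\ average and, being built from an independently sampled arm, is independent of the first two; the ordinary CLT hands it limiting variance $\mathrm{Var}(R(0))$.

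The third step assembles the pieces. After linearization the entire policy-arm contribution is a single i.i.d.\ average of the three-component vector with entries $R(1)I[W\le q_\alpha]-\mu_1$, $R(0)I[W>q_\alpha]-\check{\mu}_0$, and $-\mathbb{E}[R(1)-R(0)\mid W=q_\alpha](I[W\le q_\alpha]-\alpha)$. The multivariate CLT gives a joint Gaussian limit with covariance matrix $\Sigma$, and summing the three coordinates via the continuous mapping theorem yields a univariate normal whose variance is $\mathbf{1}^\top\Sigma\,\mathbf{1}$. Adding the independent control variance $\mathrm{Var}(R(0))$ and dividing by $\alpha^2$ reproduces exactly the stated $\sigma^2_{\text{base}}$. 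The covariance bookkeeping is routine once the linearization is available; for instance $\mathrm{Cov}(R(1)I[W\le q_\alpha],\,R(0)I[W>q_\alpha])=-\mu_1\check{\mu}_0$ because the two indicator events are disjoint, which supplies the $-2\mu_1\check{\mu}_0$ cross term.

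The main obstacle is precisely the linearization of the second step, i.e.\ \Cref{theorem4} itself, because the rank indicator $I_{F_{i,n}}$ makes each summand depend on the whole sample, so the elementary CLT does not apply directly. The crux is the empirical-process estimate \Cref{zero-conv}, which shows the drift term $\mathbb{G}_n[f_{W_{(\lceil\alpha n\rceil),n}}-f_{q_\alpha}]$ vanishes in probability; this in turn rests on $\mathcal{F}=\{f_t\}$ being $P$-Donsker (\Cref{donsker}, via the VC-subgraph property), on asymptotic equicontinuity, and on the consistency $W_{(\lceil\alpha n\rceil),n}\overset{p}{\rightarrow}q_\alpha$. Everything downstream---the delta method for $\varphi$, the quantile's Bahadur representation, and the covariance computation---is standard once this uniform control over the randomly indexed empirical process is in hand.
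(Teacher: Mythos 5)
Your proposal follows essentially the same route as the paper's own proof: the same decomposition of the unscaled difference-in-means into a rank-indexed treated term, a rank-indexed untreated term, and an independent control term; the same use of \Cref{th:quant-vs-ranks} to swap the centering, of \Cref{theorem4} (and its mirror image for $I[W>q_\alpha]$) for the Bahadur-type linearization, of the multivariate CLT plus continuous mapping to sum the covariance matrix $\Sigma$, and the same final $1/\alpha^2$ rescaling. The covariance bookkeeping you sketch (e.g.\ the $-\mu_1\check{\mu}_0$ cross term from disjoint indicators) matches the paper's $\Sigma$, so I see no gap.
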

Note that our assumptions are slightly different from those used in \citet{imai2023statistical}. They require a finite third moment of the reward (really, their proof only requires a Lyapunov condition of $(2+\delta)$-moment control) if the active (resp.\ passive) action is applied,  while we only need a finite second moment. However, we require that $F_{\Upsilon}$ has positive derivative at $q_{\alpha}$.

\subsection{Subgroup Estimator}

Under \Cref{second-moment} for $Z=R(0)$ and $Z=R(1)$ and \Cref{as:positive-derivative} for $W$ as well as \Cref{regularity}, we can show that
 \begin{equation}\label{int}\frac{1}{\sqrt{n}}\sum_{i=1}^n(R_{i,n}I_{F_{i,n}} - R^0_{i,n}I_{F^0_{i,n}} - \tau_n) \overset{d}{\rightarrow} \mathcal{N}(0, \sigma^2_{\text{asym}})\end{equation} where \begin{equation*}\sigma^2_{\text{asym}} = \alpha(1-\alpha)(\mathbb{E}[R(1)|W=q_\alpha]^2+\mathbb{E}[R(0)|W=q_\alpha]^2) - 2(1-\alpha)(\mathbb{E}[R(1)|W=q_\alpha]\mu_t+\mathbb{E}[R(0)|W=q_\alpha]\mu_c) + \sigma^2_t + \sigma^2_c\end{equation*} where $\mu_t = \mathbb{E}[R_{1,n}(1)I[W_{1,n}\leq q_\alpha]], \mu_c = \mathbb{E}[R_{1,n}(0)I[W_{1,n}\leq q_\alpha]]$, $\sigma^2_t=  \var[R_{1,n}(1)I[W_{1,n}\leq q_\alpha]], \sigma^2_c=  \var[R_{1,n}(0)I[W_{1,n}\leq q_\alpha]]$

To do so by \Cref{theorem4}  for $W$ and $R(1)$ we can conclude: 
\begin{equation}\label{eq:normdist}
    \sqrt{n}\left(\frac{1}{n}\sum_{i=1}^n R_{i,n}I_{F_{i,n}} -  \mathbb{E}[R_{i,n}(1)I_{E_{i,n}}] \right) \overset{d}{\rightarrow} \mathcal{N}(0, \sigma_t^2 - 2\mathbb{E}[R(1)|W=q_\alpha]\mu_t(1-\alpha) + \mathbb{E}[R(1)|W=q_\alpha]^2\alpha(1-\alpha)).
\end{equation}

Similarly, for $W$ and $R(0)$, we get: 
$$\sqrt{n}\left(\frac{1}{n}\sum_{i=1}^n R^0_{i,n}I_{F^ 0_{i,n}} -  \mathbb{E}[R^0_{i,n}(0)(1-I_{E^0_{i,n}})] \right) \overset{d}{\rightarrow} \mathcal{N}(0, \sigma_c^2 - 2\mathbb{E}[R(0)|W=q_\alpha]\mu_c(1-\alpha) + \mathbb{E}[R(0)|W=q_\alpha]^2\alpha(1-\alpha)).$$
Combining with \Cref{th:quant-vs-ranks} yields \Cref{int}. 
Translated to the notation from the main body, we get: 
\begin{theorem}\label{context2} 
   Under \Cref{second-moment} for $Z=R(0)$ and $Z=R(1)$ and \Cref{as:positive-derivative} for $\Upsilon(\mathbf{x})$ as well as \Cref{regularity}, we get:  
   $$\sqrt{n}\left(\theta^{\mathrm{SG}}_{n,\alpha}(\pi)-\tau^{\mathrm{new}}_{n,\alpha}(\pi)\right)
   \overset{d}{\rightarrow} \mathcal{N}(0, \sigma^2_{\text{SG}})$$ where 
   \begin{equation}
       \sigma^2_{\text{SG}}   =\frac{1}{\alpha^2}\bigg(\alpha(1-\alpha)(\rho_1^2+  \rho_0^2)  - 2(1-\alpha)(\rho_1 \mu_1  +\rho_0 \mu_0)+ \sigma^2_1+\sigma^2_0\bigg)
   \end{equation} 
    with $\mu_{i}=\mathbb{E}[R(i))I[\Upsilon(\mathbf{x})\leq q_{\alpha}]$, $\rho_{i}=\mathbb{E}[R(i)|\Upsilon(\mathbf{x})= q_{\alpha}]$ and $\sigma^2_i=\mathrm{Var}[R(i)I[\Upsilon(\mathbf{x})\leq q_\alpha]]$ for $i\in \{0,1\}$ where $\mathbb{E}$ is taken over $(\mathbf{x},R)\sim P$.
\end{theorem}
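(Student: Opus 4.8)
The plan is to reduce the claim to the general bivariate central limit result \Cref{theorem4}, applied once per RCT arm, and then to stitch the two arms together using their independence. I would first pass to the simplified index notation of \Cref{app:Base}, writing the unrescaled subgroup estimator as $\tfrac{1}{n}\sum_{i=1}^n (R_{i,n}I_{F_{i,n}} - R^0_{i,n}I_{F^0_{i,n}})$, where $I_{F_{i,n}}$ and $I_{F^0_{i,n}}$ are the rank-based selection indicators in the policy and control arms respectively and $W=\Upsilon(\mathbf{x})$. Since $R(0)$ and $R(1)$ each have a second moment (\Cref{second-moment}) and $F_\Upsilon$ has positive derivative at $q_\alpha$ (\Cref{as:positive-derivative}), the hypotheses of \Cref{theorem4} are met in each arm.

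Applying \Cref{theorem4} to the policy arm with $Z=R(1)$ gives the asymptotic normality of $\sqrt{n}(\tfrac1n\sum_i R_{i,n}I_{F_{i,n}} - \mu_1)$ with limiting variance $\sigma_1^2 - 2\rho_1\mu_1(1-\alpha) + \rho_1^2\alpha(1-\alpha)$, where I read off $\tilde\mu=\mu_1$, $\tilde\sigma^2=\sigma_1^2$, and $\mathbb{E}[Z\mid W=q_\alpha]=\rho_1$. Applying it again to the control arm with $Z=R(0)$ yields the analogous statement with $(\mu_0,\sigma_0^2,\rho_0)$. The two arms are drawn from independent i.i.d.\ samples, so the two centered sums are independent, and the variance of their difference is the sum of the two variances, which is exactly $\sigma^2_{\text{asym}}$ appearing in \Cref{int}. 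This centers the difference at the quantile-based quantity $\mu_1-\mu_0$.

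To recenter at the rank-based estimand $\tau_n$, I would invoke \Cref{th:quant-vs-ranks} (equivalently \Cref{imai1}), which under \Cref{regularity} guarantees $\sqrt{n}(\tau_n - (\mu_1-\mu_0))\to 0$; Slutsky's theorem then lets me replace $\mu_1-\mu_0$ by $\tau_n$ without altering the limit, establishing \Cref{int}. Finally I would undo the rescaling: since $\theta^{\mathrm{SG}}_{n,\alpha} = \tfrac{n}{\lceil\alpha n\rceil}\cdot\tfrac1n\sum_i(R_{i,n}I_{F_{i,n}} - R^0_{i,n}I_{F^0_{i,n}})$ and $\tfrac{n}{\lceil\alpha n\rceil}\to \tfrac1\alpha$, a further application of Slutsky multiplies the limiting variance by $1/\alpha^2$, turning $\sigma^2_{\text{asym}}$ into $\sigma^2_{\text{SG}}$; translating $\mu_i,\rho_i,\sigma_i^2$ back to the covariate notation via $W=\Upsilon(\mathbf{x})$ completes the argument.

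The genuinely hard work is already packaged inside \Cref{theorem4} --- the empirical-process control of the rank-vs-quantile discrepancy term together with the Hájek-type expansion of the sample quantile. Conditional on that, the remaining task is mostly organizational: correctly instantiating the variance formula of \Cref{theorem4} for each arm, justifying additivity of the variances via independence of the arms, and carefully tracking the two Slutsky steps (the $\tau_n$-recentering and the $1/\alpha$ rescaling). The one subtlety worth double-checking is that the control-arm centering produced by \Cref{theorem4} is indeed $\mu_0 = \mathbb{E}[R(0)I[W\le q_\alpha]]$ and matches the quantile-based term in \Cref{th:quant-vs-ranks}, so that the recentering is exact and no residual bias of order $1/\sqrt n$ survives.
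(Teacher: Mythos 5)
Your proposal is correct and follows essentially the same route as the paper's own proof: apply \Cref{theorem4} separately to each arm (with $Z=R(1)$ and $Z=R(0)$ respectively), add the two limiting variances by independence of the arms, recenter from $\mu_1-\mu_0$ to $\tau_n$ via \Cref{th:quant-vs-ranks} and Slutsky, and rescale by $n/\lceil\alpha n\rceil\to 1/\alpha$ to obtain $\sigma^2_{\text{SG}}=\sigma^2_{\text{asym}}/\alpha^2$. The variance instantiation and the centering check you flag as the ``one subtlety'' both match the paper exactly.
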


\section{Additional Material for Section \ref{hy:Base}: Variance Estimation}\label{app:Base2}

In this section, we construct variance estimators for the asymptotic variance terms obtained above. We start with the subgroup estimator whose variance expression is easier and then reuse the calculations for the base estimator.

\subsection{Subgroup Estimator}\label{SE:var}

We again start with using the less convoluted notation from the appendix and then restate the results in terms of the notation of the main body. 
Recall that the asymptotic variance in Theorem 5 is given by \begin{equation*}\sigma^2_{\text{asym}} = \alpha(1-\alpha)(\mathbb{E}[R(1)|W=q_\alpha]^2+\mathbb{E}[R(0)|W=q_\alpha]^2) - 2(1-\alpha)(\mathbb{E}[R(1)|W=q_\alpha]\mu_t+\mathbb{E}[R(0)|W=q_\alpha]\mu_c) + \sigma^2_t + \sigma^2_c\end{equation*} where $\mu_t = \mathbb{E}[R_{1,n}(1)I[W_{1,n}\leq q_\alpha]], \mu_c = \mathbb{E}[R_{1,n}(0)I[W_{1,n}\leq q_\alpha]]$, $\sigma^2_t=  \var[R_{1,n}(1)I[W_{1,n}\leq q_\alpha]], \sigma^2_c=  \var[R_{1,n}(0)I[W_{1,n}\leq q_\alpha]]$. To consistently estimate $\sigma^2_{\text{asym}}$ it suffices to consistently estimate each term above. 

\begin{itemize}
    \item $\alpha$ is known and doesn't need to be estimated
    \item $\mu_t$ can be consistently estimated by $\frac{1}{n}\sum_{i=1}^n R_{i,n}I_{F_{i,n}}$. This is a simple consequence of \Cref{theorem4} which tells us that \[\sqrt{n}\left(\frac{1}{n}\sum_{i=1}^n R_{i,n}I_{F_{i,n}} - \mu_t\right)\] converges in distribution to a Normal distribution, which by Slutsky's theorem implies that \[\frac{1}{n}\sum_{i=1}^n R_{i,n}I_{F_{i,n}} - \mu_t \overset{p}{\rightarrow} 0.\] 
    \item The same reasoning as in the previous bullet easily shows that $\mu_c$ is consistently estimated by $\frac{1}{n}\sum_{i=1}^n R^0_{i,n}I_{F^0_{i,n}}$.
    \item Now we turn to the consistent estimation of $\sigma^2_t.$ Since we know how to consistently estimate $\mu^2_t$ (since we can consistently estimate $\mu_t$) it suffices to be able to consistently estimate $\mathbb{E}[R^2_{1,n}(1)I_{E_{1,n}}].$ We claim that \[\frac{1}{n}\sum_{i=1}^n R^2_{i,n}I_{F_{i,n}} \overset{p}{\rightarrow} \mathbb{E}[R^2_{1,n}(1)I_{E_{1,n}}].\]

    We now show this claim, again following closely Example 1.5 of \cite{sen2018gentle}. First note that: \begin{align*}
        &\frac{1}{n}\sum_{i=1}^n R^2_{i,n}I_{F_{i,n}} - \mathbb{E}[R^2_{1,n}(1)I_{E_{1,n}}]\\
        &= P_n(g_{W_{(\lceil \alpha n \rceil), n}}) - P(g_{q_\alpha}), 
    \end{align*} where here $g_t(y,x) = y^2I[x \leq t]$ and $P$ and $P_n$ are with respect to the joint law of $(W_{i,n}, R_{i,n}(1))$. Above is \begin{align*}
        &= P_n(g_{W_{(\lceil \alpha n \rceil), n}}) - P(g_{q_\alpha})\\
        &= (P_n-P)(g_{W_{(\lceil \alpha n \rceil), n}}) + P(g_{W_{(\lceil \alpha n \rceil), n}})-P(g_{q_\alpha})
    \end{align*}

    Let $\varphi(t) := P(g_{t})$. Using the same argument which showed that $t \mapsto \mathbb{E}[ZI[W \leq t]]$ is differentiable at $q_\alpha$ (under \Cref{as:positive-derivative}), we see that $\varphi$ is also differentiable at $q_\alpha$ and so the delta method, combined with the asymptotic Normality of $(W_{(\lceil \alpha n \rceil), n}-q_\alpha)$ combined with Slutsky's theorem tells us that $P(g_{W_{(\lceil \alpha n \rceil), n}})-P(g_{q_\alpha}) \overset{p}{\rightarrow} 0$.

    As for the first term, notice that \[|(P_n-P)(g_{W_{(\lceil \alpha n \rceil), n}})| \leq \sup_t|(P_n-P)(g_t)|.\] We will show that this goes to zero in probability.

    Now, again for ease of readability, we recall a few elementary definitions from \cite{vaart2023empirical}:

    \begin{definition}[Brackets; Definition 2.1.6 of \cite{vaart2023empirical}]
        Fix a function class $\mathcal{G}$. Let $\ell$ and $u$ be two functions from $\mathcal{R} \times \mathcal{X} \rightarrow \mathbb{R}$ for which $\ell \leq u$ pointwise. Then $[\ell, u]$ is called a bracket and is defined to be the set of functions $g \in \mathcal{G}$ for which $ \ell \leq g \leq u$ (pointwise). An $\epsilon$-bracket is a bracket for which $\|\ell - u \| \leq \epsilon.$
    \end{definition}
    
    \begin{definition}[Bracketing number; Definition 2.1.6 of \cite{vaart2023empirical}]
        The $\epsilon$ bracketing number, $N_{[]}(\epsilon, \mathcal{F}, \|\cdot\|)$ for a function class $\mathcal{G}$ is the minimum number of $\epsilon$-brackets required to cover $\mathcal{G}$.
    \end{definition}

    \begin{definition}
        A function class $\mathcal{G}$ is called Glivenko-Cantelli if \[\sup_{g \in \mathcal{G}}|(P_n-P)(g)| \overset{p}{\rightarrow} 0\]
    \end{definition}

    Now, we recall a key theorem:
    \begin{theorem}[Theorem 2.4.1 of \cite{vaart2023empirical}]\label{gc}
        Let $\mathcal{G}$ consist of measurable functions and be such that $N_{[]}(\epsilon, \mathcal{G}, L_1(P)) < \infty$ for all $\epsilon > 0$. Then $\mathcal{G}$ is Glivenko-Cantelli.
    \end{theorem}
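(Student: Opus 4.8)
This is the classical bracketing Glivenko--Cantelli theorem, and the plan is to prove it by the standard sandwiching argument, which converts uniform control over the (possibly uncountable) class $\mathcal{G}$ into the law of large numbers for finitely many fixed functions. First I would fix $\epsilon > 0$. By hypothesis $k := N_{[]}(\epsilon, \mathcal{G}, L_1(P)) < \infty$, so there is a finite collection of brackets $[\ell_1, u_1], \ldots, [\ell_k, u_k]$, with integrable endpoints (integrability is implicit in the $L_1(P)$ bracketing number being finite) satisfying $P(u_j - \ell_j) \leq \epsilon$ for each $j$, whose union contains $\mathcal{G}$.

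The key step is the sandwiching. Given any $g \in \mathcal{G}$, choose a bracket index $j$ with $\ell_j \leq g \leq u_j$ pointwise. Using monotonicity of both $\mathbb{P}_n$ and $P$, I would bound
\[
(\mathbb{P}_n - P)g \leq (\mathbb{P}_n - P)u_j + P(u_j - \ell_j) \leq (\mathbb{P}_n - P)u_j + \epsilon,
\]
and symmetrically $(\mathbb{P}_n - P)g \geq (\mathbb{P}_n - P)\ell_j - \epsilon$. Because the chosen index depends on $g$ only through which bracket contains it, taking a supremum over $g \in \mathcal{G}$ yields
\[
\sup_{g \in \mathcal{G}} |(\mathbb{P}_n - P)g| \leq \max_{1 \leq j \leq k}\big( |(\mathbb{P}_n - P)u_j| \vee |(\mathbb{P}_n - P)\ell_j| \big) + \epsilon.
\]

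To finish, each $u_j$ and $\ell_j$ is a fixed integrable function, so $\mathbb{P}_n u_j$ and $\mathbb{P}_n \ell_j$ are averages of i.i.d.\ (across $i$, for each fixed $n$) copies drawn from $P$; the weak law of large numbers gives $(\mathbb{P}_n - P)u_j \overset{p}{\rightarrow} 0$ and $(\mathbb{P}_n - P)\ell_j \overset{p}{\rightarrow} 0$. Writing $M_n$ for the maximum on the right-hand side above, a maximum of finitely many terms converging to $0$ in probability satisfies $M_n \overset{p}{\rightarrow} 0$, so $\mathbb{P}(\sup_{g \in \mathcal{G}} |(\mathbb{P}_n - P)g| > 2\epsilon) \leq \mathbb{P}(M_n > \epsilon) \to 0$. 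Since $\epsilon > 0$ was arbitrary, this gives $\sup_{g \in \mathcal{G}} |(\mathbb{P}_n - P)g| \overset{p}{\rightarrow} 0$, i.e.\ $\mathcal{G}$ is Glivenko--Cantelli.

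The hard part will not be the analytic argument, which is routine, but rather the measurability of the supremum $\sup_{g \in \mathcal{G}} |(\mathbb{P}_n - P)g|$ over an uncountable index set: this quantity need not be Borel measurable in general, which is why the rigorous treatment of \cite{vaart2023empirical} states the conclusion in terms of outer probability. The finite-bracketing bound is exactly what sidesteps this, since the dominating right-hand side involves only the measurable maximum over the $2k$ fixed endpoints; I would also take care to record that integrability of these endpoints is precisely what licenses the appeal to the law of large numbers.
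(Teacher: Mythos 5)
Your proposal is correct, and there is nothing to compare it against within the paper itself: the paper does not prove this statement but imports it wholesale as Theorem 2.4.1 of \cite{vaart2023empirical}. Your sandwiching argument --- bracketing $\ell_j \leq g \leq u_j$ with $P(u_j - \ell_j) \leq \epsilon$, bounding the supremum over $\mathcal{G}$ by the measurable maximum over the finitely many endpoint functions, and invoking the law of large numbers for those endpoints --- is exactly the classical proof given in the cited source, so you have faithfully reconstructed it. The only substantive difference is that van der Vaart and Wellner run the argument with the strong law and conclude outer almost-sure convergence, whereas you use the weak law and get convergence in probability; since this paper's own definition of Glivenko--Cantelli (in the same appendix) asks only for $\sup_{g \in \mathcal{G}}|(\mathbb{P}_n - P)(g)| \overset{p}{\rightarrow} 0$, your weaker conclusion is precisely what is needed, and your remarks on the measurability of the supremum and the integrability of the bracket endpoints correctly identify the two points one must be careful about.
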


    Using Theorem~\ref{gc}, we obtain:
    \begin{lemma}
    We have that
        \[\sup_t|(P_n-P)(g_t)| \overset{p}{\rightarrow} 0\]
    \end{lemma}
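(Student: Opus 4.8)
The plan is to invoke Theorem~\ref{gc} for the class $\mathcal{G} = \{g_t : t \in \mathbb{R}\}$ with $g_t(y,x) = y^2 I[x \le t]$, where $P$ denotes the joint law of $(W_{1,n}, R_{1,n}(1))$. By that theorem it suffices to show $N_{[]}(\epsilon, \mathcal{G}, L_1(P)) < \infty$ for every $\epsilon > 0$. The structural fact I would exploit is that for each fixed $(x,y)$ the map $t \mapsto g_t(y,x)$ is nondecreasing, and that $\mathcal{G}$ is dominated by the envelope $G(y,x) = y^2$, which is $P$-integrable exactly because $\mathbb{E}[R(1)^2] < \infty$ under Assumption~\ref{second-moment}. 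Monotonicity in $t$ reduces the bracketing problem to partitioning the range of the single scalar function $H(t) := P g_t = \mathbb{E}[R(1)^2 I[W \le t]]$.

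First I would record the elementary properties of $H$: it is nonnegative, nondecreasing, and right-continuous (by dominated convergence, since $g_t \downarrow g_{t^*}$ pointwise as $t \downarrow t^*$, dominated by $G$), with $H(-\infty) = 0$ and $H(+\infty) = \mathbb{E}[R(1)^2] =: c < \infty$. Then, given $\epsilon > 0$, I would choose finitely many breakpoints $-\infty = t_0 < t_1 < \cdots < t_m = +\infty$ so that the $H$-increment across each consecutive pair is at most $\epsilon$. For each $j$, pointwise monotonicity shows that every $g_t$ with $t \in [t_{j-1}, t_j)$ is sandwiched between $g_{t_{j-1}}$ and the left-limit function $g_{t_j^-}(y,x) = y^2 I[x < t_j]$, and the $L_1(P)$ size of this bracket is $H(t_j^-) - H(t_{j-1}) \le \epsilon$. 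This produces finitely many $\epsilon$-brackets covering $\mathcal{G}$, so $N_{[]}(\epsilon, \mathcal{G}, L_1(P)) < \infty$.

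The main obstacle I anticipate is the presence of atoms in the law of $W$, i.e. jumps of $H$: a jump larger than $\epsilon$ at a point $s$ cannot be straddled by an $\epsilon$-bracket. The resolution is the standard one for monotone classes. A bounded monotone function has at most $\lceil c/\epsilon \rceil$ jumps of size exceeding $\epsilon$, so there are only finitely many such points $s$; I would cover each one by the degenerate zero-width bracket $[g_s, g_s]$ (which lies in $\mathcal{G}$), use left-limit upper endpoints $g_{s^-}$ on the interval approaching $s$ from below, and subdivide the remaining ``continuous'' part of $H$ into at most $\lceil c/\epsilon \rceil + 1$ further pieces whose $H$-increments are each at most $\epsilon$. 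Everything else—measurability of the $g_t$, pointwise monotonicity, and integrability of the envelope—is routine.

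With finite $L_1(P)$ bracketing numbers in hand, Theorem~\ref{gc} gives that $\mathcal{G}$ is Glivenko--Cantelli, that is $\sup_{g \in \mathcal{G}} |(P_n - P)(g)| = \sup_t |(P_n - P)(g_t)| \overset{p}{\rightarrow} 0$, which is precisely the desired claim.
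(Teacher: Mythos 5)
Your proposal is correct and follows essentially the same route as the paper: both reduce the claim to finiteness of the $L_1(P)$ bracketing number via Theorem~\ref{gc}, and both build the brackets from the monotone map $t \mapsto \mathbb{E}[R(1)^2 I[W \le t]]$, using lower endpoints $g_{t_{j-1}}$ and upper endpoints given by the strict-inequality (left-limit) functions $y^2 I[x < t_j]$ — exactly the paper's $h_{t_j}$. Your explicit treatment of atoms of the law of $W$ merely spells out what the paper's grid construction handles implicitly by placing large jumps at grid points.
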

    \begin{proof}
        By way of Theorem~\ref{gc}, all that must be shown is that $N_{[]}(\epsilon, \mathcal{G}, L_1(P)) < \infty$, where $\mathcal{G} = \{g_t: t \in \mathbb{R}\}$. 

        Observe that there exists a grid $t_1, t_2, \ldots, t_K$ for which \[\mathbb{E}[R^2(1)I[W < t_i]]-\mathbb{E}[R^2(1)I[W \leq t_{i-1}]] \leq \epsilon\] for all $i = 1, \ldots, K+1$, where $t_0 := -\infty$ and $t_{K+1} = \infty$. Then $[g_{t_{i-1}}, h_{t_i}]$, with $h_{t}(r,w) := r^2I[w < t]$ are clearly $\epsilon$-brackets and they also clearly cover all of $\mathcal{G}$.
    \end{proof}

    Therefore \[\frac{1}{n}\sum_{i=1}^n R^2_{i,n}I_{F_{i,n}} \overset{p}{\rightarrow} \mathbb{E}[R^2_{1,n}(1)I_{E_{1,n}}]\] and hence \[\frac{1}{n}\sum_{i=1}^n R^2_{i,n}I_{F_{i,n}} - (\frac{1}{n}\sum_{i=1}^n R_{i,n}I_{F_{i,n}})^2 \overset{p}{\rightarrow} \sigma^2_t.\]

    \item The exact same argument as in the last bullet point (as well as the same assumption) shows that \[\frac{1}{n}\sum_{i=1}^n (R^0_{i,n})^2I_{F^0_{i,n}} - (\frac{1}{n}\sum_{i=1}^n R^0_{i,n}I_{F^0_{i,n}})^2 \overset{p}{\rightarrow} \sigma^2_c.\]

    \item Now we show how to estimate $\mathbb{E}[R(1)|W=q_\alpha]$

    Define the estimator \[\hat{m}_n := \frac{1}{k}\sum_{i=\lceil \alpha n \rceil-k}^{\lceil \alpha n \rceil} R_{(i),n}\] and let \[m(w) := \mathbb{E}[R(1)|W=w]\] be the true conditional mean function. Let $\frac{k}{\sqrt{n} \log n} \rightarrow \infty$ with $k/n \rightarrow 0$.

    We will obtain a theorem (proved in the next section), that is a mild extension of Theorem 1 of \cite{CHENG198463}:

\begin{restatable}{lemma}{cheng}\label{cheng}
 Let $(X_i, Y_i)$ be iid draws from some distribution. Defining $\hat{m}_n$ and $m$ analogously as above (but now for the pair $(X,Y)$) suppose additionally that:
        \begin{enumerate}
            \item[(A1)] The function $m$ exists and is continuous in a closed neighborhood of $q_\alpha$. In particular, we assume that $m$ is continuous on $[L_0, U_0] \subseteq [L, U]$ for some $L_0, U_0$ such that $q_\alpha \in (L_0,U_0)$.
            \item[(A2)] We have that $\var(Y|X=x)$ is bounded by some constant $M$ for all $x \in [L_0, U_0]$.
        \end{enumerate}

        Then, \[|\hat{m}_n - m(q_\alpha)| \overset{p}{\rightarrow}0.\]
\end{restatable}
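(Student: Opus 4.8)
The plan is to split $\hat m_n - m(q_\alpha)$ into a ``bias'' piece governed by the smoothness of $m$ and a ``noise'' piece governed by the conditional variance of the concomitants, and to show each tends to $0$ in probability. Write $Y_{[i],n}$ for the concomitant of the $i$-th order statistic $X_{(i),n}$ (the $Y$-value of the pair whose $X$-coordinate is $X_{(i),n}$), and let $W_n := \{\lceil\alpha n\rceil - k, \ldots, \lceil\alpha n\rceil\}$ be the index window. Then
\[
\hat m_n - m(q_\alpha) = \underbrace{\frac{1}{k}\sum_{i \in W_n}\bigl(Y_{[i],n} - m(X_{(i),n})\bigr)}_{=:A_n} + \underbrace{\frac{1}{k}\sum_{i \in W_n}\bigl(m(X_{(i),n}) - m(q_\alpha)\bigr)}_{=:B_n}.
\]

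First I would establish that the order statistics in the window concentrate at $q_\alpha$. Since $k/n \to 0$, both the smallest and the largest rank in $W_n$ correspond to quantile levels converging to $\alpha$, so $X_{(\lceil\alpha n\rceil-k),n} \overset{p}{\rightarrow} q_\alpha$ and $X_{(\lceil\alpha n\rceil),n} \overset{p}{\rightarrow} q_\alpha$ by the standard sample-quantile convergence theorem, invoking \Cref{as:positive-derivative} to guarantee $F_W^{-1}$ is continuous (indeed locally strictly increasing) at $\alpha$. By monotonicity of order statistics, every $X_{(i),n}$ with $i \in W_n$ is squeezed between these two endpoints, so $M_n := \max_{i \in W_n}|X_{(i),n} - q_\alpha| \overset{p}{\rightarrow} 0$. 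This controls $B_n$ at once: for $\epsilon$ small enough that $[q_\alpha-\epsilon,q_\alpha+\epsilon]\subseteq[L_0,U_0]$, on the event $\{M_n \le \epsilon\}$ (whose probability tends to one) the uniform continuity of $m$ on the compact set $[L_0,U_0]$ from (A1) yields $|B_n| \le \omega_m(\epsilon)$, where $\omega_m$ is the modulus of continuity of $m$; hence $B_n \overset{p}{\rightarrow} 0$.

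For the noise term $A_n$ I would condition on the entire vector of order statistics. The crucial structural fact is that, conditionally on $(X_{(1),n}, \ldots, X_{(n),n})$, the concomitants $Y_{[1],n}, \ldots, Y_{[n],n}$ are mutually independent with $Y_{[i],n}$ distributed as $Y$ given $X = X_{(i),n}$; this is immediate from the i.i.d.\ structure of the pairs. Consequently $\mathbb{E}[A_n \mid X\text{'s}] = 0$, and on the high-probability event $\{M_n \le \epsilon\} \subseteq \{X_{(i),n} \in [L_0, U_0]\ \forall i \in W_n\}$, assumption (A2) bounds each conditional variance by $M$, so
\[
\var\!\left(A_n \mid X_{(1),n}, \ldots, X_{(n),n}\right) = \frac{1}{k^2}\sum_{i \in W_n}\var\!\left(Y \mid X = X_{(i),n}\right) \le \frac{(k+1)M}{k^2}.
\]
Since $k/(\sqrt n \log n) \to \infty$ forces $k \to \infty$, the right-hand side vanishes, and conditional Chebyshev combined with the conditioning event having probability tending to one gives $A_n \overset{p}{\rightarrow} 0$. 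Adding the two pieces yields $\hat m_n - m(q_\alpha) \overset{p}{\rightarrow} 0$.

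The main obstacle is the dependence introduced by ordering: the summands of $\hat m_n$ are neither independent nor identically distributed, and the averaging window is itself random. The decomposition circumvents this by exploiting the conditional independence of concomitants given the order statistics, which collapses the noise term to a routine conditional second-moment estimate; the only remaining care is to check that the random window concentrates tightly enough around $q_\alpha$ for the continuity of $m$ to take effect, which is exactly what \Cref{as:positive-derivative} and $k/n\to 0$ provide.
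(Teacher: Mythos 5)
Your proof is correct, but it takes a genuinely different and more elementary route than the paper's. The paper follows Cheng (1984) closely: it introduces the truncated variables $\tilde{Y}_{i,n} = Y_{i,n}I[|Y_{i,n}|\le n^{1/2}]$, shows $|\tilde{m}_n-\hat{m}_n|\to 0$ a.s.\ (using square-integrability of $Y$), controls the truncation bias in the conditional means, and then proves $|\tilde{m}_n-\tilde{V}_n|\to 0$ \emph{almost surely} via a Bernstein-type exponential moment bound together with the first Borel--Cantelli lemma --- this last step is where the full strength of $k/(\sqrt{n}\log n)\to\infty$ is consumed. You instead observe that the lemma only asserts convergence \emph{in probability}, so after the same bias/noise decomposition (your $B_n$ is exactly the paper's $V_n - m(q_\alpha)$, handled identically via concentration of the windowed order statistics and continuity of $m$), the noise term $A_n$ can be dispatched by a single conditional Chebyshev inequality, using the conditional independence of the concomitants given the order statistics and the local variance bound (A2). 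This buys you a shorter argument that needs only $k\to\infty$ and $k/n\to 0$ (not the full rate condition) and avoids the global square-integrability of $Y$ used in the paper's truncation lemma; what it gives up is the almost-sure convergence that Cheng's original machinery delivers, which the stated lemma does not require. One small point worth making explicit: since (A1) only guarantees that $m$ exists on $[L_0,U_0]$, the decomposition $\hat{m}_n - m(q_\alpha) = A_n + B_n$ should be asserted only on the event $\{M_n\le\epsilon\}$ where all windowed order statistics lie in $[L_0,U_0]$; as you note, this event has probability tending to one, so the in-probability conclusion is unaffected.
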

  
    Hence, we have shown that $\hat{m}_n$ is a consistent estimate for $\mathbb{E}[R(1)|W=q_\alpha]$ so long as $\frac{k}{\sqrt{n}\log n} \rightarrow \infty$ with $k/n \rightarrow 0$.
    
    \item Just as above, we have that\[\frac{1}{k}\sum_{i=\lceil \alpha n \rceil-k}^{\lceil \alpha n \rceil} R^0_{(i),n}\] is a consistent estimate for $\mathbb{E}[R(0)|W=q_\alpha]$
    
\end{itemize}

Plugging all of this together, we arrive at: 
 Then \begin{align*}
        &\hat{\sigma}^2_{\text{asym}} := \\&\alpha(1-\alpha)\Bigg[\left(\frac{1}{k}\sum_{i=\lceil \alpha n \rceil-k}^{\lceil \alpha n \rceil} R_{(i),n} \right)^2+\left(\frac{1}{k}\sum_{i=\lceil \alpha n \rceil-k}^{\lceil \alpha n \rceil} R^0_{(i),n}\right)^2\Bigg] - 2(1-\alpha)\Bigg[\left(\frac{1}{k}\sum_{i=\lceil \alpha n \rceil-k}^{\lceil \alpha n \rceil} R_{(i),n} \right)\cdot\frac{1}{n}\sum_{i=1}^n R_iI_{F_{i,n}}\\&+\left(\frac{1}{k}\sum_{i=\lceil \alpha n \rceil-k}^{\lceil \alpha n \rceil} R^0_{(i),n}\right)\cdot\frac{1}{n}\sum_{i=1}^n R^0_iI_{F^0_{i,n}}\Bigg] + \frac{1}{n}\sum_{i=1}^n R^2_{i,n}I_{F_{i,n}} - \left(\frac{1}{n}\sum_{i=1}^n R_{i,n}I_{F_{i,n}}\right)^2 + \frac{1}{n}\sum_{i=1}^n (R^0_{i,n})^2I_{F^0_{i,n}} - \left(\frac{1}{n}\sum_{i=1}^n R^0_{i,n}I_{F^0_{i,n}}\right)^2
    \end{align*}
    is a consistent estimator of $\sigma^2_{\text{asym}}$.
    
Formally, in the language of the main body, we conclude that (where $R^p_{(i)}$, respectively $R^c_{(i)}$, is the reward function of the agent with the $i$th lowest index in the policy, respectively control, arm): 
\begin{theorem}
    In addition to the assumptions made in \Cref{context2}, assume that:
    \begin{enumerate}
        \item The functions $w \mapsto \mathbb{E}[R(1)|\Upsilon(\mathbf{x})=w]$ and $w \mapsto \mathbb{E}[R(0)|\Upsilon(\mathbf{x})=w]$ are continuous in a closed neighborhood of $q_\alpha$.
        \item We have that $\var[R(1)|\Upsilon(\mathbf{x})=w]$ and $\var[R(0)|\Upsilon(\mathbf{x})=w]$ are bounded for all $w$ in these neighborhoods.
        \item  $\frac{k}{\sqrt{n}\log n} \rightarrow \infty$ with $k/n \rightarrow 0$.
    \end{enumerate}

    Then {\small \begin{align*}
        \hat{\sigma}^2_{\text{SE}} := & \frac{1}{\alpha^2}\Bigg(\alpha(1-\alpha)\Bigg[\left(\frac{1}{k}\sum_{i=\lceil \alpha n \rceil-k}^{\lceil \alpha n \rceil} R_{(i)}^p(1) \right)^2+\left(\frac{1}{k}\sum_{i=\lceil \alpha n \rceil-k}^{\lceil \alpha n \rceil} R^c_{(i)}(0)\right)^2\Bigg] - 2(1-\alpha)\Bigg[\left(\frac{1}{k}\sum_{i=\lceil \alpha n \rceil-k}^{\lceil \alpha n \rceil} R^p_{(i)}(1) \right)\cdot\frac{1}{n}\sum_{i\in \pi(\mathbf{X}_n^p,\alpha)} R_i^p(1)\\ &+\left(\frac{1}{k}\sum_{i=\lceil \alpha n \rceil-k}^{\lceil \alpha n \rceil} R^c_{(i)}(0)\right)\cdot\frac{1}{n}\sum_{i\in \pi(\mathbf{X}_n^c,\alpha)}  R^c_i(0)\Bigg] + \frac{1}{n}\sum_{i\in \pi(\mathbf{X}_n^p,\alpha)} R^p_i(1)^2 - \left(\frac{1}{n}\sum_{i\in \pi(\mathbf{X}_n^p,\alpha)} R^p_i(1)\right)^2 + \frac{1}{n}\sum_{i\in \pi(\mathbf{X}_n^c,\alpha)} R^c_i(0)^2 - \left(\frac{1}{n}\sum_{i\in \pi(\mathbf{X}_n^c,\alpha)} R^c_i(0)\right)^2 \Bigg)
    \end{align*}}
    is a consistent estimator of $\sigma^2_{\text{SE}}$. Therefore, by \Cref{context2} and Slutsky's theorem, we have that \[\frac{
    \sqrt{n}\left(\theta^{\mathrm{SG}}_{n,\alpha}(\pi)-\tau^{\mathrm{new}}_{n,\alpha}(\pi)\right)}{\hat{\sigma}_{\text{SE}}} \overset{d}{\rightarrow} \mathcal{N}(0,1)\]
\end{theorem}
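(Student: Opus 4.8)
The plan is to reduce the statement to the consistency $\hat{\sigma}^2_{\text{SE}} \overset{p}{\rightarrow} \sigma^2_{\text{SG}}$ and then invoke \Cref{context2} together with Slutsky's theorem. Observe first that $\sigma^2_{\text{SG}}$ is a fixed, continuous function of the six scalar quantities $\rho_1, \rho_0, \mu_1, \mu_0, \sigma_1^2, \sigma_0^2$ (with $\alpha$ known), so by the continuous mapping theorem it suffices to estimate each of these consistently and substitute. I would organize the argument term-by-term, exactly as in the itemized computation preceding the statement, and then assemble.

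For the ``bulk'' terms $\mu_1 = \mathbb{E}[R(1)I[W \leq q_\alpha]]$ and $\mu_0 = \mathbb{E}[R(0)I[W \leq q_\alpha]]$, I would use the sample means $\frac{1}{n}\sum_i R^p_i(1) I_{F_{i,n}}$ and $\frac{1}{n}\sum_i R^c_i(0)I_{F^0_{i,n}}$; consistency is immediate from \Cref{theorem4}, which gives a $\sqrt{n}$-rate limiting Normal centered at $\mu_1$ (resp.\ $\mu_0$), so Slutsky forces the centered sample mean to $0$ in probability. For the variance terms $\sigma_1^2, \sigma_0^2$, since squares of consistent estimators are consistent for the squared means, it remains to consistently estimate the second moments $\mathbb{E}[R^2(1)I[W \leq q_\alpha]]$ and its control analogue. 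Here I would run the Glivenko--Cantelli argument: writing $g_t(r,w) = r^2 I[w \leq t]$, the class $\{g_t : t \in \mathbb{R}\}$ has finite $L_1(P)$ bracketing numbers (a monotone grid in $t$ yields $\epsilon$-brackets under the second-moment assumption), so \Cref{gc} gives $\sup_t |(\mathbb{P}_n - P)g_t| \overset{p}{\rightarrow} 0$. Combined with the delta method applied to $t \mapsto P g_t$ at $q_\alpha$ and the asymptotic Normality of the order statistic $W_{(\lceil \alpha n \rceil),n}$, this controls replacing $q_\alpha$ by the empirical rank threshold, yielding $\frac{1}{n}\sum_i (R^p_i(1))^2 I_{F_{i,n}} \overset{p}{\rightarrow} \mathbb{E}[R^2(1)I[W\leq q_\alpha]]$.

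The delicate step, and the main obstacle, is the consistent estimation of the boundary conditional means $\rho_i = \mathbb{E}[R(i) \mid W = q_\alpha]$. These are genuine pointwise conditional expectations at the quantile boundary, and I would estimate them by the ``$k$ nearest ranks'' average $\hat{m}_n = \frac{1}{k}\sum_{j=\lceil \alpha n\rceil - k}^{\lceil \alpha n\rceil} R_{(j),n}$. Its consistency is exactly \Cref{cheng} (our mild extension of Cheng's nearest-neighbor regression theorem), which requires the bandwidth schedule $k/(\sqrt{n}\log n) \to \infty$ with $k/n \to 0$, continuity of $w \mapsto \mathbb{E}[R(i)\mid \Upsilon(\mathbf{x}) = w]$ near $q_\alpha$, and boundedness of the conditional variance there --- precisely the three additional hypotheses in the statement. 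The subtlety is the bias--variance balance: too few ranks inflates the variance while too many introduces smoothing bias, so one must show that the chosen window of order statistics concentrates around $q_\alpha$ fast enough (via $W_{(\lceil\alpha n\rceil),n}\overset{a.s.}{\rightarrow}q_\alpha$ and the rank/quantile correspondence) that both error sources vanish. I would defer the full proof of \Cref{cheng} to the next section and treat it here as a black box.

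Finally, feeding all six consistent component estimates into the continuous map defining $\sigma^2_{\text{SG}}$ gives $\hat{\sigma}^2_{\text{SE}} \overset{p}{\rightarrow} \sigma^2_{\text{SG}}$ by the continuous mapping theorem. Since \Cref{context2} already establishes $\sqrt{n}(\theta^{\mathrm{SG}}_{n,\alpha}(\pi) - \tau^{\mathrm{new}}_{n,\alpha}(\pi)) \overset{d}{\rightarrow} \mathcal{N}(0,\sigma^2_{\text{SG}})$, Slutsky's theorem (dividing by $\hat{\sigma}_{\text{SE}}\overset{p}{\rightarrow}\sigma_{\text{SG}}$) yields the claimed studentized convergence to $\mathcal{N}(0,1)$, completing the proof.
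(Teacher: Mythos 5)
Your proposal is correct and follows essentially the same route as the paper's own proof: term-by-term consistent estimation of $\mu_1,\mu_0$ via \Cref{theorem4} plus Slutsky, of $\sigma^2_1,\sigma^2_0$ via the bracketing/Glivenko--Cantelli argument (\Cref{gc}) combined with the delta method for the rank-threshold replacement, and of the boundary conditional means $\rho_i$ via the nearest-rank average justified by \Cref{cheng}, then assembly by the continuous mapping theorem and Slutsky. No substantive differences to report.
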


\subsection{Proof of \Cref{cheng}}

We now show how to prove \Cref{cheng} which is a very mild extension of Theorem 1 of \cite{CHENG198463}; the proof follows closely the one given in \cite{CHENG198463}.
\cheng*

First we prove an analogue to Lemma 2 of \cite{CHENG198463}:
\begin{lemma}[Analogue of Lemma 2 of \cite{CHENG198463}]
     Define $V_n = k^{-1}\sum_{i=\lceil \alpha n \rceil-k}^{\lceil \alpha n \rceil} m(X_{(i),n})$. Under (A1), we have that \[|V_n - m(q_\alpha)| \overset{p}{\rightarrow} 0.\]
\end{lemma}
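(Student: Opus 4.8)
The plan is to exploit the continuity of $m$ together with the fact that, because $k/n\to 0$, \emph{all} of the order statistics $X_{(i),n}$ appearing in $V_n$ lie in an arbitrarily small neighborhood of $q_\alpha$ with probability tending to one. On that event every summand $m(X_{(i),n})$ is within $\epsilon$ of $m(q_\alpha)$, and the bound transfers immediately to the average $V_n$.

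Concretely, I would first fix $\epsilon>0$ and use (A1): since $m$ is continuous on the compact interval $[L_0,U_0]$ with $q_\alpha\in(L_0,U_0)$, it is uniformly continuous there, so there is a $\delta>0$ with $[q_\alpha-\delta,q_\alpha+\delta]\subseteq[L_0,U_0]$ and $|m(w)-m(q_\alpha)|\le\epsilon$ whenever $|w-q_\alpha|\le\delta$. Next, invoking \Cref{as:positive-derivative} (so that $F_X$ strictly increases through the level $\alpha$ at $q_\alpha$), I set $p_-:=F_X(q_\alpha-\delta)<\alpha<F_X(q_\alpha+\delta)=:p_+$. I then define the event $A_n=\{X_{(\lceil\alpha n\rceil-k),n}\ge q_\alpha-\delta\}\cap\{X_{(\lceil\alpha n\rceil),n}\le q_\alpha+\delta\}$; by monotonicity of the order statistics, on $A_n$ every index $i\in\{\lceil\alpha n\rceil-k,\dots,\lceil\alpha n\rceil\}$ satisfies $X_{(i),n}\in[q_\alpha-\delta,q_\alpha+\delta]$, whence $|V_n-m(q_\alpha)|\le\tfrac{k+1}{k}\epsilon+\tfrac{1}{k}|m(q_\alpha)|\le 2\epsilon$ for $n$ large, using $k\to\infty$.

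It then remains to show $\mathbb{P}(A_n^c)\to 0$, which I would do with elementary binomial tail bounds. Writing $S_n=\#\{i:X_i\le q_\alpha-\delta\}\sim\mathrm{Bin}(n,p_-)$, the event $\{X_{(\lceil\alpha n\rceil-k),n}<q_\alpha-\delta\}$ forces $S_n\ge\lceil\alpha n\rceil-k$; since $(\lceil\alpha n\rceil-k)/n\to\alpha>p_-$ (this is exactly where $k/n\to0$ enters) the gap $\lceil\alpha n\rceil-k-np_-$ grows linearly, so Chebyshev's inequality gives $\mathbb{P}(S_n\ge\lceil\alpha n\rceil-k)=O(1/n)$. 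Symmetrically, with $T_n=\#\{i:X_i\le q_\alpha+\delta\}\sim\mathrm{Bin}(n,p_+)$, the event $\{X_{(\lceil\alpha n\rceil),n}>q_\alpha+\delta\}$ forces $T_n\le\lceil\alpha n\rceil-1$, and since $\mathbb{E}[T_n]=np_+>\alpha n\ge\lceil\alpha n\rceil-1$ with a linearly growing gap, Chebyshev again yields $O(1/n)$. Combining the two bounds, $\mathbb{P}(A_n^c)\to0$, hence $\mathbb{P}(|V_n-m(q_\alpha)|>2\epsilon)\le\mathbb{P}(A_n^c)\to0$; as $\epsilon>0$ was arbitrary, $V_n\overset{p}{\rightarrow}m(q_\alpha)$.

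The main obstacle is the order-statistic concentration in the middle step: one must verify that the \emph{lower} endpoint rank $\lceil\alpha n\rceil-k$ still corresponds asymptotically to the $\alpha$-quantile (which is precisely where $k/n\to0$ is used) and that the \emph{upper} order statistic cannot drift away from $q_\alpha$, which would fail without a strict-increase condition at $q_\alpha$ — hence the appeal to \Cref{as:positive-derivative}. Everything else (uniform continuity, the averaging estimate, and the $O(1/n)$ tail bounds) is routine. I would also note that the stronger rate condition $k/(\sqrt{n}\log n)\to\infty$ from \Cref{cheng} is \emph{not} needed for this lemma; it is required only later to control the stochastic fluctuation $\hat{m}_n-V_n$, whereas $V_n$ is an average of the \emph{deterministic} function $m$ evaluated at the order statistics and so needs only $k\to\infty$ and $k/n\to0$.
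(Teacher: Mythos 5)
Your proposal is correct and follows essentially the same route as the paper's proof: fix $\epsilon>0$, use uniform continuity of $m$ near $q_\alpha$, condition on the event that the extreme order statistics $X_{(\lceil\alpha n\rceil-k),n}$ and $X_{(\lceil\alpha n\rceil),n}$ lie in that neighborhood (so that monotonicity traps every summand there and the averaging bound is immediate), and show the complementary event is asymptotically negligible. The only difference is that you establish the order-statistic concentration explicitly via binomial tail bounds under \Cref{as:positive-derivative}, whereas the paper simply cites the convergence in probability of these order statistics to $q_\alpha$; your observation that only $k\to\infty$ and $k/n\to 0$ (not the stronger rate condition) are needed here is also consistent with the paper.
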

\begin{proof}
Proof is basically same as that of Theorem 1 in \cite{dev}. Fix $\epsilon > 0$. Choose $L'_0 \geq L_0, U'_0 \leq U_0$ close enough to $q_\alpha$ so that $|m(x)-m(x')| \leq \epsilon/2$ for all $x,x' \in [L'_0, U'_0]$. Then 
    We have that 
    \begin{align*}
        &P(|V_n - m(q_\alpha)| > \epsilon) \\
        &= P(|V_n - m(q_\alpha)| > \epsilon, X_{(\lceil \alpha n \rceil),n} \in [L'_0, U'_0] \text{ and }X_{(\lceil \alpha n \rceil-k),n} \in [L'_0, U'_0])\\
        &+ P(|V_n - m(q_\alpha)| > \epsilon, X_{(\lceil \alpha n \rceil),n} \not\in [L'_0, U'_0] \text{ or }X_{(\lceil \alpha n \rceil-k),n} \not\in [L'_0, U'_0])
    \end{align*}

    The second term goes to zero because both $X_{(\lceil \alpha n \rceil), n}$ and $X_{(\lceil \alpha n \rceil-k), n}$ converge in probability to $q_\alpha$ under the conditions of \Cref{theorem4}. 
   
    So, all that must be done is to handle $P(|V_n - m(q_\alpha)| > \epsilon, X_{(\lceil \alpha n \rceil),n} \in [L'_0, U'_0] \text{ and }X_{(\lceil \alpha n \rceil-k),n} \in [L'_0, U'_0])$ which is upper bounded as \begin{align*}
        &P(|V_n - m(q_\alpha)| > \epsilon, X_{(\lceil \alpha n \rceil),n} \in [L'_0, U'_0] \text{ and }X_{(\lceil \alpha n \rceil-k),n} \in [L'_0, U'_0])\\
        &\leq P( k^{-1}\sum_{i = \lceil \alpha n \rceil-k}^{\lceil \alpha n \rceil} |m(X_{(i),n})-m(q_\alpha)|>\epsilon, X_{(\lceil \alpha n \rceil),n} \in [L'_0, U'_0] \text{ and }X_{(\lceil \alpha n \rceil-k),n} \in [L'_0, U'_0])\\
        &\leq P(((k+1)/k)\sup_{x \in [L'_0, U'_0]}|m(x) - m(q_\alpha)| > \epsilon)\\
        &= 0
    \end{align*} for all large $k$ (hence all large $n$) since we have chosen $L'_0, U'_0$ so that $|m(x)-m(x')| \leq \epsilon/2$ for all $x,x' \in [L'_0, U'_0]$.
\end{proof}

\begin{lemma}[Lemma 3 of \cite{CHENG198463}]
     Define $\tilde{m}_n := \frac{1}{k}\sum_{i=\lceil \alpha n \rceil-k}^{\lceil \alpha n \rceil} \tilde{Y}_{(i),n}$ where $\tilde{Y}_{i,n} := Y_{i,n}I[|Y_{i,n}| \leq n^{1/2}]$ is a truncated version of $Y_{i,n}$. If $Y$ is square-integrable, then we have that \[|\tilde{m}_n - \hat{m}_n| \overset{a.s.}{\rightarrow} 0\]
\end{lemma}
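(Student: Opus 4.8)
The plan is to show that the truncation is asymptotically inactive on the whole sample, so that the two averages actually coincide for all large $n$. Writing out the difference,
\[
\hat{m}_n - \tilde{m}_n = \frac{1}{k}\sum_{i=\lceil \alpha n \rceil-k}^{\lceil \alpha n \rceil}\left(Y_{(i),n} - \tilde{Y}_{(i),n}\right) = \frac{1}{k}\sum_{i=\lceil \alpha n \rceil-k}^{\lceil \alpha n \rceil} Y_{(i),n}\,I[|Y_{(i),n}| > \sqrt{n}],
\]
since $Y_{i,n} - \tilde{Y}_{i,n} = Y_{i,n}\,I[|Y_{i,n}| > \sqrt{n}]$. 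The concomitants appearing here form a sub-multiset of the full sample $\{Y_1, \ldots, Y_n\}$, so using nonnegativity of $|Y_{(i),n}|\,I[\cdot]$ I would bound
\[
|\hat{m}_n - \tilde{m}_n| \leq \frac{1}{k}\sum_{i=\lceil \alpha n \rceil-k}^{\lceil \alpha n \rceil}|Y_{(i),n}|\,I[|Y_{(i),n}| > \sqrt{n}] \leq \frac{1}{k}\sum_{j=1}^n |Y_j|\,I[|Y_j| > \sqrt{n}] =: \frac{1}{k}S_n.
\]
The point of this step is that it discards all information about \emph{which} concomitants land in the summation window, reducing the problem to controlling a single symmetric statistic $S_n$ of the first $n$ samples.

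Next I would invoke the first Borel--Cantelli lemma. Because $Y_j \overset{d}{=} Y$ and $Y$ is square-integrable,
\[
\sum_{n=1}^\infty P(|Y_n| > \sqrt{n}) = \sum_{n=1}^\infty P(Y^2 > n) \leq \int_0^\infty P(Y^2 > t)\,dt = \mathbb{E}[Y^2] < \infty,
\]
so almost surely $|Y_j| \leq \sqrt{j}$ for all but finitely many $j$; let $J(\omega)$ be the last exceptional index and set $M(\omega) := \max_{j \leq J}|Y_j| < \infty$. Fixing such an $\omega$ and taking $n$ large enough that $\sqrt{n} > M(\omega)$ and $n \geq J$, I would argue that for $j \leq J$ we have $|Y_j| \leq M < \sqrt{n}$ while for $J < j \leq n$ we have $|Y_j| \leq \sqrt{j} \leq \sqrt{n}$; in either case $I[|Y_j| > \sqrt{n}] = 0$, so $S_n = 0$. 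Hence $S_n = 0$ for all large $n$ almost surely, whence $|\hat{m}_n - \tilde{m}_n| = 0$ eventually, which is more than enough to conclude $|\tilde{m}_n - \hat{m}_n| \overset{a.s.}{\rightarrow} 0$.

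The only real subtlety, and where I would be most careful, is the reduction in the second display: it is essential that the summands are nonnegative, so that enlarging the index set from the $k+1$ selected concomitants to all $n$ samples only \emph{increases} the bound—the identities of the concomitants are random and awkward to track directly, and this move circumvents that entirely. It is worth flagging that square-integrability is used precisely at the Borel--Cantelli summability step (a finite first moment would not make $\sum_n P(Y^2 > n)$ converge), and that the rate condition $\tfrac{k}{\sqrt{n}\log n}\to\infty$ is \emph{not} needed for this truncation lemma—it is consumed elsewhere in the proof of \Cref{cheng} (in controlling the variance term). Finally, I would note that this clean ``eventually exactly zero'' argument relies on treating the data as a single growing sequence; were $(X_j,Y_j)$ a genuine triangular array, Borel--Cantelli would no longer apply and one would instead settle for convergence in probability via the first-moment bound $\mathbb{E}[S_n/k] \leq \sqrt{n}\,\mathbb{E}[Y^2]/k \to 0$.
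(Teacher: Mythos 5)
Your proof is correct, and it is the standard argument behind Lemma 3 of Cheng (1984), which is exactly what the paper defers to here (its ``proof'' is just the citation): bound the concomitant sum by the full-sample truncation error and kill it eventually-almost-surely via Borel--Cantelli using $\sum_n \mathbb{P}(Y^2>n)\le \mathbb{E}[Y^2]<\infty$. Your side remarks --- that square-integrability is consumed precisely at the summability step, that the rate condition on $k$ is not needed here, and that the argument treats the data as a single growing i.i.d.\ sequence rather than a triangular array --- are all accurate.
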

\begin{proof}
    Exact same as in \cite{CHENG198463}.
\end{proof}

\begin{lemma}[Analogue to Lemma 4 of \cite{CHENG198463}]
    Assume (A1) and (A2). And define $$\tilde{V}_n := k^{-1}\sum_{i=\lceil \alpha n \rceil-k}^{\lceil \alpha n \rceil} \mathbb{E}[\tilde{Y}_{(i),n}|X_{(i),n}].$$ Then \[|V_n - \tilde{V}_n| \overset{p}{\rightarrow} 0\]
\end{lemma}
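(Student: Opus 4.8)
The plan is to reduce the claim to a deterministic uniform bound on the truncation error, localized to a small neighborhood of $q_\alpha$, and then to pay for that localization with an order-statistic argument. First I would identify the per-summand difference as a truncated tail conditional expectation: writing $m(x) = \mathbb{E}[Y|X=x]$ and $\mathbb{E}[\tilde{Y}_{(i),n}|X_{(i),n}] = \mathbb{E}[YI[|Y|\leq \sqrt{n}]\mid X=X_{(i),n}]$, we have
\[
m(X_{(i),n}) - \mathbb{E}[\tilde{Y}_{(i),n}|X_{(i),n}] = \mathbb{E}[YI[|Y|> \sqrt{n}]\mid X = X_{(i),n}] =: \Delta_n(X_{(i),n}),
\]
so that $V_n - \tilde{V}_n = k^{-1}\sum_{i=\lceil \alpha n\rceil - k}^{\lceil \alpha n\rceil} \Delta_n(X_{(i),n})$.

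Next I would establish a uniform bound on $\Delta_n$ over a fixed closed neighborhood $[L_0', U_0'] \subseteq (L_0, U_0)$ containing $q_\alpha$. By conditional Cauchy--Schwarz, $|\Delta_n(x)| \le \sqrt{\mathbb{E}[Y^2|X=x]}\,\sqrt{\mathbb{P}(|Y|>\sqrt n \mid X=x)}$, and Chebyshev gives $\mathbb{P}(|Y|>\sqrt n \mid X=x) \le \mathbb{E}[Y^2|X=x]/n$. Since $\mathbb{E}[Y^2|X=x] = \var(Y|X=x) + m(x)^2$ is bounded on $[L_0', U_0']$ by some constant $M'$ — using (A2) for the conditional variance and the continuity of $m$ from (A1) for $m^2$ — this yields $\sup_{x\in[L_0',U_0']}|\Delta_n(x)| \le M'/\sqrt n \to 0$.

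Then I would control the location of the relevant order statistics. Under the conditions of \Cref{theorem4}, both $X_{(\lceil \alpha n\rceil - k),n}$ and $X_{(\lceil \alpha n\rceil),n}$ converge in probability to $q_\alpha$ (the same fact already invoked in the preceding lemma), so the event $B_n := \{X_{(\lceil \alpha n\rceil - k),n} \ge L_0',\ X_{(\lceil \alpha n\rceil),n} \le U_0'\}$ satisfies $\mathbb{P}(B_n) \to 1$. On $B_n$, all indices $i$ in the sum have $X_{(i),n}\in[L_0',U_0']$, whence
\[
|V_n - \tilde{V}_n| \le \frac{k+1}{k}\sup_{x\in[L_0',U_0']}|\Delta_n(x)| \le \frac{k+1}{k}\cdot\frac{M'}{\sqrt n}.
\]
For any $\epsilon>0$, splitting $\mathbb{P}(|V_n-\tilde V_n|>\epsilon)$ over $B_n$ and its complement, the complement term vanishes while the $B_n$ term is $0$ for all large $n$, giving the result.

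The main obstacle — really the only subtle point — is the uniform-in-$x$ bound on the truncation tail $\Delta_n$: this is where both assumptions are genuinely needed, since (A2) alone controls the conditional variance but I also need $m$ bounded near $q_\alpha$ (from (A1)) to bound the full conditional second moment, and crucially I must restrict to the neighborhood $[L_0',U_0']$ rather than all of $\mathbb{R}$, a restriction legitimized precisely by the order-statistic localization in the final step.
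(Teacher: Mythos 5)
Your proposal is correct and follows essentially the same route as the paper's proof: bound the per-summand truncation error by $n^{-1/2}\mathbb{E}[Y^2\mid X=x]$, use (A1) and (A2) to make this bound uniform over a compact neighborhood of $q_\alpha$, and localize the relevant order statistics to that neighborhood via their convergence in probability to $q_\alpha$. The only cosmetic difference is that you obtain the key bound via Cauchy--Schwarz plus Chebyshev, whereas the paper uses the direct pointwise inequality $|y|I[|y|>\sqrt{n}]\leq y^2/\sqrt{n}$; both yield the identical estimate.
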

\begin{proof}
    Basically same as in \cite{CHENG198463}:
    \begin{align*}
        |V_n - \tilde{V}_n| &\leq k^{-1}\sum_{i=\lceil \alpha n \rceil-k}^{\lceil \alpha n \rceil} \mathbb{E}[|Y_{(i),n}|I[|Y_{(i),n}| > n^{1/2}]|X_{(i),n}]\\
        &\leq k^{-1}\sum_{i=\lceil \alpha n \rceil-k}^{\lceil \alpha n \rceil } n^{-1/2}\mathbb{E}[Y_{(i),n}^2|X_{(i),n}]\\
        &\leq k^{-1}\sum_{i=\lceil \alpha n \rceil-k}^{\lceil \alpha n \rceil} n^{-1/2}(\var[Y_{(i),n}|X_{(i),n}] + \mathbb{E}[Y_{(i),n}|X_{(i),n}]^2)\\
        &\leq ((k+1)/k)n^{-1/2}\max_{i = \lceil \alpha n \rceil-k, \ldots, \lceil \alpha n \rceil }(\var[Y_{(i),n}|X_{(i),n}] + \mathbb{E}[Y_{(i),n}|X_{(i),n}]^2)
    \end{align*}

    So it suffices to show that \[n^{-1/2}\max_{i = \lceil \alpha n \rceil-k, \ldots, \lceil \alpha n \rceil }(\var[Y_{(i),n}|X_{(i),n}] + \mathbb{E}[Y_{(i),n}|X_{(i),n}]^2)\] to zero. 
   
    So, we have that \begin{align*}
        &P(n^{-1/2}\max_{i = \lceil \alpha n \rceil-k, \ldots, \lceil \alpha n \rceil}(\var[Y_{(i),n}|X_{(i),n}] + \mathbb{E}[Y_{(i),n}|X_{(i),n}]^2) > \epsilon)\\
        &\leq P(n^{-1/2}\max_{i = \lceil \alpha n \rceil-k, \ldots, \lceil \alpha n \rceil }(\var[Y_{(i),n}|X_{(i),n}] + \mathbb{E}[Y_{(i),n}|X_{(i),n}]^2) > \epsilon, X_{(\lceil \alpha n \rceil),n} \in [L_0, U_0] \text{ and }X_{(\lceil \alpha n \rceil-k),n} \in [L_0, U_0])\\
        &+ P(n^{-1/2}\max_{i = \lceil \alpha n \rceil-k, \ldots, \lceil \alpha n \rceil}(\var[Y_{(i),n}|X_{(i),n}] + \mathbb{E}[Y_{(i),n}|X_{(i),n}]^2) > \epsilon, X_{(\lceil \alpha n \rceil),n} \not\in [L_0, U_0] \text{ or }X_{(\lceil \alpha n \rceil-k),n} \not\in [L_0, U_0])
    \end{align*}

    The second term goes to zero by the convergence in probability of both $X_{(\lceil \alpha n \rceil), n}$ and $X_{(\lceil \alpha n \rceil-k), n}$ to $q_\alpha$ and the first term is upper bounded as \[P(n^{-1/2}\sup_{x \in [L_0, U_0]}(\var[Y_{1,n}|X_{1,n}=x] + \mathbb{E}[Y_{1,n}|X_{1,n}=x]^2) > \epsilon) \rightarrow 0\] by employing (A1) and (A2).
\end{proof}

\begin{lemma}[Analogue to Lemma 5 of \cite{CHENG198463}]
    Assume (A1) and (A2). Then \[|\tilde{m}_n - \tilde{V}_n| \overset{a.s.}{\rightarrow} 0\]
\end{lemma}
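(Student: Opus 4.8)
The plan is to write $\tilde{m}_n-\tilde{V}_n$ as a sum that becomes centered once we condition on the covariate sample, control its fluctuations with a Bernstein-type exponential inequality that uses the \emph{bounded conditional variance} (not just boundedness), and then upgrade the resulting tail bound to almost sure convergence by Borel--Cantelli, where summability is supplied by the growth condition $\frac{k}{\sqrt{n}\log n}\to\infty$. Concretely, writing $\mathbf{X}_n:=(X_{1,n},\dots,X_{n,n})$, I would first record the identity
\[
\tilde{m}_n-\tilde{V}_n=\frac{1}{k}\sum_{i=\lceil\alpha n\rceil-k}^{\lceil\alpha n\rceil}\bigl(\tilde{Y}_{(i),n}-\mathbb{E}[\tilde{Y}_{(i),n}\mid X_{(i),n}]\bigr).
\]
The key structural observation is that, conditionally on $\mathbf{X}_n$, the iid structure of the pairs $(X_j,Y_j)$ makes the attached responses $Y_{(i),n}$ (hence their truncations $\tilde{Y}_{(i),n}$) independent, each bounded by $\sqrt{n}$ in absolute value and centered at $\mathbb{E}[\tilde{Y}_{(i),n}\mid X_{(i),n}]$, with conditional variance at most $\mathbb{E}[Y^2\mid X=X_{(i),n}]=\var(Y\mid X_{(i),n})+m(X_{(i),n})^2$.

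Next I would introduce the good event $G_n$ that $X_{(\lceil\alpha n\rceil-k),n}$ and $X_{(\lceil\alpha n\rceil),n}$ both lie in $[L_0,U_0]$, so that on $G_n$ every relevant order statistic lies in $[L_0,U_0]$. There, (A1) bounds $m$ (continuous on a compact set) and (A2) bounds $\var(Y\mid X)$ by $M$, so the sum of the $k+1$ conditional variances is at most $Ck$ for some constant $C$. Applying Bernstein's inequality conditionally on $\mathbf{X}_n$, on $G_n$, gives for fixed $\epsilon>0$
\[
\mathbb{P}\bigl(|\tilde{m}_n-\tilde{V}_n|>\epsilon \mid \mathbf{X}_n\bigr)\le 2\exp\!\left(-\frac{k\epsilon^2/2}{C+\tfrac{2}{3}\sqrt{n}\,\epsilon}\right)\le 2\exp\!\left(-c_\epsilon\,\frac{k}{\sqrt{n}}\right)
\]
for all large $n$, where $c_\epsilon>0$. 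Since this is a deterministic bound valid on $G_n$, it follows that $\mathbb{P}(|\tilde{m}_n-\tilde{V}_n|>\epsilon,\,G_n)\le 2\exp(-c_\epsilon k/\sqrt{n})$.

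To conclude, I would note that the growth condition $\frac{k}{\sqrt{n}\log n}\to\infty$ gives $k/\sqrt{n}\ge M\log n$ eventually for every fixed $M$, so taking $M$ large makes the right-hand side dominated by $n^{-c_\epsilon M}$ and hence summable in $n$; meanwhile $\mathbb{P}(G_n^c)$ decays exponentially in $n$ by the standard large-deviation bound for central sample quantiles (valid under (A1) with $k/n\to 0$, since the limiting rank fraction is $\alpha$), so it too is summable. Adding the two bounds yields $\sum_n\mathbb{P}(|\tilde{m}_n-\tilde{V}_n|>\epsilon)<\infty$, and Borel--Cantelli gives $\mathbb{P}(|\tilde{m}_n-\tilde{V}_n|>\epsilon \text{ infinitely often})=0$ for every $\epsilon>0$; taking $\epsilon=1/j$ and unioning over $j\in\mathbb{N}$ yields $|\tilde{m}_n-\tilde{V}_n|\overset{a.s.}{\rightarrow}0$.

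The main obstacle is the truncation level $\sqrt{n}$: it is large enough that a crude boundedness (Hoeffding) bound is useless here, since that would give an exponent of order $k\epsilon^2/n$, which is \emph{not} summable because $k/n\to 0$. The essential point is therefore to retain the variance information so that Bernstein's inequality produces the much stronger exponent of order $k/\sqrt{n}$; matching this exponent against the prescribed growth $\frac{k}{\sqrt{n}\log n}\to\infty$ is exactly what restores summability. Balancing the truncation threshold, the conditional-variance control from (A1)--(A2), and the rate condition is thus the crux, and everything else (the conditional independence given $\mathbf{X}_n$ and the quantile large-deviation bound for $G_n^c$) is standard.
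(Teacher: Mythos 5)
Your argument is correct and is essentially the paper's: the same decomposition into a sum that is centered and independent conditionally on the covariate order statistics, the same good event forcing the relevant order statistics into $[L_0,U_0]$ so that (A1)--(A2) bound the conditional moments, an exponential concentration bound whose exponent is of order $k/\sqrt{n}$ (you invoke Bernstein's inequality off the shelf, while the paper hand-rolls the same Chernoff bound with an explicit tilt $\beta_n$ and the moment-generating-function estimate from Lemma 6 of \cite{CHENG198463}), and Borel--Cantelli via $k/(\sqrt{n}\log n)\to\infty$. If anything you are slightly more careful on one point: you observe that $\mathbb{P}(G_n^c)$ must itself be summable (and is, exponentially small, by a binomial large-deviation bound for the central order statistics), whereas the paper's proof only remarks that this probability tends to zero, which on its own would not support the almost-sure conclusion.
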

\begin{proof}
    Basically same as in \cite{CHENG198463}:
    \begin{align*}
        &P(\tilde{m}_n - \tilde{V}_n > \epsilon)\\
        &= \mathbb{E}[P(\tilde{m}_n - \tilde{V}_n > \epsilon|X_{(\lceil \alpha n \rceil),n}, \ldots, X_{(\lceil \alpha n \rceil-k),n})]\\
        &= \mathbb{E}[P(\tilde{m}_n - \tilde{V}_n > \epsilon,  X_{(\lceil \alpha n \rceil),n} \in [L_0, U_0] \text{ and }X_{(\lceil \alpha n \rceil-k),n} \in [L_0, U_0]|X_{(\lceil \alpha n \rceil),n}, \ldots, X_{(\lceil \alpha n \rceil-k),n})]\\
        &+ \mathbb{E}[P(\tilde{m}_n - \tilde{V}_n > \epsilon,  X_{(\lceil \alpha n \rceil),n} \not\in [L_0, U_0] \text{ or }X_{(\lceil \alpha n \rceil-k),n} \not\in [L_0, U_0]|X_{(\lceil \alpha n \rceil),n}, \ldots, X_{(\lceil \alpha n \rceil-k),n})]
        \end{align*}
        Again, the second term tends to zero via the same argument made in the last proof since both \[X_{(\lceil \alpha n \rceil),n} \overset{p}{\rightarrow} q_\alpha\] and \[X_{(\lceil \alpha n \rceil-k),n} \overset{p}{\rightarrow} q_\alpha.\] Thus we need only worry about the first term, which is upper bounded by 
        \begin{align*}&\leq \sup_{x_0, \ldots, x_k \in [L_0, U_0]^{k+1}}P(\tilde{m}_n - \tilde{V}_n > \epsilon|X_{(\lceil \alpha n \rceil),n}=x_0, \ldots, X_{(\lceil \alpha n \rceil)-k,n}=x_k),
    \end{align*} which for any $\beta_n > 0$, is at least 
    \begin{align*}
        &\leq n^{-\epsilon\beta_n}\sup_{x_0, \ldots, x_k \in [L_0, U_0]^{k+1}}\mathbb{E}[\exp(\beta_n \log(n)k^{-1}\sum_{i=\lceil \alpha n \rceil-k}^{\lceil \alpha n \rceil}(\tilde{Y}_{(i),n}- \mathbb{E}[\tilde{Y}_{(i),n}|X_{(i),n}]))|X_{(\lceil \alpha n \rceil),n}=x_0, \ldots, X_{(\lceil \alpha n \rceil)-k,n}=x_k]
    \end{align*}
    Lemma 6 of \cite{CHENG198463} shows that, since $\tilde{Y}_{(\lceil \alpha n \rceil),n}, \ldots, \tilde{Y}_{(\lceil \alpha n \rceil-k),n}$ are independent draws conditional on $X_{(\lceil \alpha n \rceil),n}=x_0, \ldots, X_{(\lceil \alpha n \rceil-k),n}=x_k$ that  \[\mathbb{E}[\exp(\beta_n \log(n)k^{-1}\sum_{i=\lceil \alpha n \rceil-k}^{\lceil \alpha n \rceil}(\tilde{Y}_{(i),n}- \mathbb{E}[\tilde{Y}_{(i),n}|X_{(i),n}]))|X_{(\lceil \alpha n \rceil),n}=x_0, \ldots, X_{(\lceil \alpha n \rceil)-k,n}=x_k]\] can be upper bounded as \[\exp\left((k+1) \cdot (\beta_n \log(n)k^{-1})^2 \cdot M \cdot \frac{1+2(\beta_n \log(n)k^{-1})n^{1/2}}{2}\right)\] so long as \[\beta_n \log(n)k^{-1} \leq 1/\sqrt{n}.\]

    In view of the fact that $k/(\sqrt{n} \log n)\rightarrow \infty$, let us set \[\beta_n = \min\left((\log n)^{1/3}, \sqrt{k/(\sqrt{n}\log n)}\right).\] Then, it is clear that the condition that $\beta_n \log(n)k^{-1} \leq 1/\sqrt{n}$ is met for all large $n$, and we also have that \[\exp\left((k+1) \cdot (\beta_n \log(n)k^{-1})^2 \cdot M \cdot \frac{1+2(\beta_n \log(n)k^{-1})n^{1/2}}{2}\right) = O(1)\]

    So, for all large $n$ the above is upper bounded by, for some positive constant $C$, \[n^{-\epsilon \beta_n}C \leq Cn^{-2},\] where the last inequality is for all $n$ large enough. We conclude by the first Borel-Cantelli lemma (via the summability of the series $\sum n^{-2}$) as well as a union bound over $\epsilon = 1/\ell$, $\ell = 1, 2, \ldots$. 
\end{proof}

We conclude the theorem result by combining the four above lemmas and using the triangle inequality.

\subsection{Base Estimator}

Using the results of \Cref{SE:var}, it is easy to establish the following:
\begin{theorem}
    In addition to the assumptions made in \Cref{context1}, assume that:
    \begin{enumerate}
        \item The functions $w \mapsto \mathbb{E}[R(1)|\Upsilon(\mathbf{x})=w]$ and $w \mapsto \mathbb{E}[R(0)|\Upsilon(\mathbf{x})=w]$ are continuous in a closed neighborhood of $q_\alpha$.
        \item We have that $\var[R(1)|\Upsilon(\mathbf{x})=w]$ and $\var[R(0)|\Upsilon(\mathbf{x})=w]$ are bounded for all $w$ in these neighborhoods.
        \item  $\frac{k}{\sqrt{n}\log n} \rightarrow \infty$ with $k/n \rightarrow 0$.
    \end{enumerate}

    Then \begin{align*}
        \hat{\sigma}^2_{\text{base}} := & \frac{1}{\alpha^2}\bigg(\alpha(1-\alpha)\Bigg[\frac{1}{k}\sum_{i=\lceil \alpha n \rceil-k}^{\lceil \alpha n \rceil} R^p_{(i)}(1) - \frac{1}{k}\sum_{i=\lceil \alpha n \rceil-k}^{\lceil \alpha n \rceil} R^c_{(i)}(0)\Bigg]^2+ \\
        &\left( 2\alpha\frac{1}{n}\sum_{i\notin \pi(\mathbf{X}^c_n,\alpha)} R^c_{i}(0) - 
        2(1-\alpha)\frac{1}{n}\sum_{i\in \pi(\mathbf{X}^p_n,\alpha)} R^p_{i}(1)\right)\Bigg[\frac{1}{k}\sum_{i=\lceil \alpha n \rceil-k}^{\lceil \alpha n \rceil} R^p_{(i)}(1) - \frac{1}{k}\sum_{i=\lceil \alpha n \rceil-k}^{\lceil \alpha n \rceil} R^c_{(i)}(0)\Bigg]+\\
         & \frac{1}{n}\sum_{i\in \pi(\mathbf{X}^p_n,\alpha)} R^p_{i}(1)^2- \left(\frac{1}{n}\sum_{i\in \pi(\mathbf{X}^p_n,\alpha)} R^p_{i}(1)\right)^2 + \frac{1}{n}\sum_{i\notin \pi(\mathbf{X}^c_n,\alpha)} R^c_{i}(0)^2 - \left(\frac{1}{n}\sum_{i\notin \pi(\mathbf{X}^c_n,\alpha)} R^c_{i}(0)\right)^2-\\
         & 2\left(\frac{1}{n}\sum_{i\notin \pi(\mathbf{X}^c_n,\alpha)} R^c_{i}(0)\right)\left(\frac{1}{n}\sum_{i\in \pi(\mathbf{X}^p_n,\alpha)} R^p_{i}(1)\right)+\frac{1}{n}\sum_{i=1}^n(R^c_{i}(0)-\frac{1}{n}\sum_{i=1}^nR^c_{i}(0))^2\bigg)
    \end{align*}
    is a consistent estimator of $\sigma^2_{\text{base}}$. Therefore, by \Cref{context1} and Slutsky's theorem, we have that \[\frac{
    \sqrt{n}\left(\theta^{\mathrm{base}}_{n,\alpha}(\pi)-\tau^{\mathrm{new}}_{n,\alpha}(\pi)\right)}{\hat{\sigma}_{\text{base}}} \overset{d}{\rightarrow} \mathcal{N}(0,1)\]
\end{theorem}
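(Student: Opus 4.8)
The plan is to reduce the claim to the consistent estimation of each population quantity appearing in the formula for $\sigma^2_{\text{base}}$ given in \Cref{context1}, and then conclude by the continuous mapping theorem. Rewriting $\sigma^2_1 = \mathbb{E}[R(1)^2 I[\Upsilon(\mathbf{x}) \le q_\alpha]] - \mu_1^2$ and $\check{\sigma}^2_0 = \mathbb{E}[R(0)^2 I[\Upsilon(\mathbf{x}) > q_\alpha]] - \check{\mu}_0^2$, I would first observe that $\sigma^2_{\text{base}}$ is a fixed polynomial in the seven quantities $\rho_1, \rho_0, \mu_1, \check{\mu}_0, \mathbb{E}[R(1)^2 I[\Upsilon(\mathbf{x}) \le q_\alpha]], \mathbb{E}[R(0)^2 I[\Upsilon(\mathbf{x}) > q_\alpha]]$, and $\mathrm{Var}(R(0))$ (together with the known constant $\alpha$). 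Since a continuous function of consistent estimators is itself consistent, it therefore suffices to estimate each of these seven quantities consistently and to match each summand in the displayed formula for $\hat{\sigma}^2_{\text{base}}$ to its population counterpart.

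Most of the required building blocks are already available from the subgroup analysis in \Cref{SE:var}, so the bulk of the work is bookkeeping. For the selected-subgroup means I would use that $\frac{1}{n}\sum_{i\in\pi(\mathbf{X}^p_n,\alpha)} R^p_i(1) \overset{p}{\rightarrow} \mu_1$ by \Cref{theorem4} and Slutsky's theorem, while the squared version $\frac{1}{n}\sum_{i\in\pi(\mathbf{X}^p_n,\alpha)} R^p_i(1)^2 \overset{p}{\rightarrow} \mathbb{E}[R(1)^2 I[\Upsilon(\mathbf{x}) \le q_\alpha]]$ follows from the Glivenko--Cantelli argument of \Cref{SE:var} (\Cref{gc} applied to the class $\{(r,w)\mapsto r^2 I[w \le t]\}$), which needs only \Cref{second-moment}, a second moment of $R$, rather than a fourth. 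The analogous complement-set statements used by the base estimator, namely $\frac{1}{n}\sum_{i\notin\pi(\mathbf{X}^c_n,\alpha)} R^c_i(0) \overset{p}{\rightarrow} \check{\mu}_0$ and its squared analogue converging to $\mathbb{E}[R(0)^2 I[\Upsilon(\mathbf{x}) > q_\alpha]]$, follow from the identical empirical-process machinery, since $I[\Upsilon(\mathbf{x}) > q_\alpha] = 1 - I[\Upsilon(\mathbf{x}) \le q_\alpha]$ leaves the relevant VC-subgraph and bracketing structure unchanged.

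Next I would handle the two conditional-mean-at-the-quantile estimators $\frac{1}{k}\sum_{i=\lceil\alpha n\rceil-k}^{\lceil\alpha n\rceil} R^p_{(i)}(1)$ and $\frac{1}{k}\sum_{i=\lceil\alpha n\rceil-k}^{\lceil\alpha n\rceil} R^c_{(i)}(0)$, which converge in probability to $\rho_1$ and $\rho_0$ by \Cref{cheng}; its three hypotheses are exactly the three additional assumptions imposed here (continuity of the conditional mean near $q_\alpha$, boundedness of the conditional variance, and the rate condition $k/(\sqrt{n}\log n)\to\infty$ with $k/n\to 0$). The only genuinely new term is the sample variance $\frac{1}{n}\sum_{i=1}^n(R^c_i(0)-\frac{1}{n}\sum_{j}R^c_j(0))^2$, computed over the \emph{entire} control arm with no rank-based selection, so it converges to $\mathrm{Var}(R(0))$ by the ordinary weak law of large numbers under \Cref{second-moment}. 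Assembling these through the continuous mapping theorem gives $\hat{\sigma}^2_{\text{base}} \overset{p}{\rightarrow} \sigma^2_{\text{base}}$, and combining with the asymptotic normality of \Cref{context1} via Slutsky's theorem yields the stated $\sqrt{n}(\theta^{\mathrm{base}}_{n,\alpha}(\pi)-\tau^{\mathrm{new}}_{n,\alpha}(\pi))/\hat{\sigma}_{\text{base}} \overset{d}{\rightarrow} \mathcal{N}(0,1)$.

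The main obstacle is concentrated entirely in the conditional-mean estimators: establishing $\hat\rho_i \overset{p}{\rightarrow} \rho_i$ is the only delicate convergence, because it is a local, nearest-neighbor-type average whose bias and variance must both be controlled as $k$ grows slowly relative to $n$. That difficulty is discharged by \Cref{cheng}; once it is in hand, every remaining term is either a standard empirical-process consistency statement reused from \Cref{SE:var} or a plain law of large numbers, and the only remaining care is to verify that the second moments of the products involved are finite, which again follows from \Cref{second-moment}.
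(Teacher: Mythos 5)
Your proposal is correct and takes essentially the same route as the paper: the paper's proof of this theorem is simply to reuse the term-by-term consistency arguments from \Cref{SE:var} (Theorem~\ref{theorem4} plus Slutsky for the truncated means, the Glivenko--Cantelli/bracketing argument for the truncated second moments, \Cref{cheng} for the conditional means at $q_\alpha$, and the ordinary law of large numbers for $\mathrm{Var}(R(0))$), then assemble via the continuous mapping theorem and Slutsky. Your write-up is in fact more explicit than the paper's, which dispatches the whole argument with ``Using the results of \Cref{SE:var}, it is easy to establish the following.''
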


\section{Asymptotic Normality of Hybrid Estimator}\label{app:Hyb}

In this section, we consider the following weighted estimator: \[\frac{1}{n}\sum_{i=1}^n R_{i,n} I_{F_{i,n}} - \frac{1}{n}\sum_{i=1}^n R^0_{i,n}I_{F^0_{i,n}} + \hat{w}_n \left(\frac{1}{n}\sum_{i=1}^n R_{i,n}(1-I_{F_{i,n}}) - \frac{1}{n}\sum_{i=1}^n R^0_{i,n}(1-I_{F^0_{i,n}})\right)\] where the (data-dependent) weight $\hat{w}_n$ is allowed to depend arbitrarily on all of the data and may take any real value; all that we will assume is that it converges in probability to some deterministic quantity $w^*$. Notice that $\hat{w}_n \equiv 0$ recovers the subgroup estimator while $\hat{w}_n \equiv 1$ recovers the base estimator.

We aim to show that the following display is asymptotically normal:
\begin{equation}\label{weighted}\sqrt{n}\left(\frac{1}{n}\sum_{i=1}^n R_{i,n} I_{F_{i,n}} - \frac{1}{n}\sum_{i=1}^n R^0_{i,n}I_{F^0_{i,n}} - \tau_n\right) + \hat{w}_n \cdot \sqrt{n}\left(\frac{1}{n}\sum_{i=1}^n R_{i,n}(1-I_{F_{i,n}}) - \frac{1}{n}\sum_{i=1}^n R^0_{i,n}(1-I_{F^0_{i,n}})\right)\end{equation}

Henceforth we will choose $\hat{w}_n$ so that it converges to some quantity $w^*$ in probability (i.e., $\hat{w}_n \overset{p}{\rightarrow} w^*$); indeed this property is satisfied both by the base and subgroup estimators. We will derive the optimal choice of $w^*$ and then say how to choose $\hat{w}_n$.

We will make the following mild assumption which ensures the existence of such an optimal $w^*$:

\begin{assumption}[Positive variance of the conditional mean]\label{as:positive-var-c-mean}
    $\var(\mathbb{E}[R_{i,n}(0)|I[W_{i,n} > q_\alpha]]) > 0$.
\end{assumption}

This assumption essentially says that, upon revealing $I[W_{i,n} > q_\alpha]$, there is still ``randomness'' left in $R_{i,n}(0)$.

Under the same assumptions as \Cref{theorem4}, essentially the exact same proof of \Cref{theorem4} allows us to show that, defining  $\mu_t := \mathbb{E}[R_{i,n}(1) I[W_{i,n} \leq q_\alpha]]$, $\check{\mu}_0 := \mathbb{E}[R_{i,n}(0) I[W_{i,n} > q_\alpha]]$, \begin{align*}
    &\frac{1}{\sqrt{n}}\sum_{i=1}^n(R_{i,n}I_{F_{i,n}} - \mathbb{E}[R_{i,n}(1)I_{E_{i,n}}])\\
    &= \frac{1}{\sqrt{n}}\sum_{i=1}^n(R_{i,n}(1) I[W_{i,n} \leq q_\alpha] - \mu_t) + \mathbb{E}[R(1)|W=q_\alpha]F_W'(q_\alpha)\sqrt{n}(W_{(\lceil \alpha n\rceil),n} - q_\alpha) + o_p(1)\\
    &= \frac{1}{\sqrt{n}}\sum_{i=1}^n(R_{i,n}(1) I[W_{i,n} \leq q_\alpha] - \mu_t) -\frac{\mathbb{E}[R(1)|W=q_\alpha]F_W'(q_\alpha)}{\sqrt{n}}\sum_{i=1}^n \frac{I[W_i \leq q_\alpha]-\alpha}{F_W'(q_\alpha)} + o_p(1)\\
    &= \frac{1}{\sqrt{n}}\sum_{i=1}^n(R_{i,n}(1) I[W_{i,n} \leq q_\alpha] - \mu_t) -\frac{\mathbb{E}[R(1)|W=q_\alpha]}{\sqrt{n}}\sum_{i=1}^n (I[W_i \leq q_\alpha]-\alpha) + o_p(1)
\end{align*}

Completely analogously: \begin{align*}
    &\frac{1}{\sqrt{n}}\sum_{i=1}^n(R^0_{i,n}I_{F^0_{i,n}} - \mathbb{E}[R^0_{i,n}(0)I_{E^0_{i,n}}])\\
    &= \frac{1}{\sqrt{n}}\sum_{i=1}^n(R^0_{i,n} I[W^0_{i,n} \leq q_\alpha] - \mathbb{E}[R^0_{i,n} I[W^0_{i,n} \leq q_\alpha]]) -\frac{\mathbb{E}[R(0)|W=q_\alpha]}{\sqrt{n}}\sum_{i=1}^n (I[W^0_i \leq q_\alpha]-\alpha) + o_p(1)
\end{align*}

Similarly, by using a proof strategy nearly identical to \Cref{theorem4}, we obtain that 
\begin{align*}
    &\frac{1}{\sqrt{n}}\sum_{i=1}^n(R_{i,n}(0)(1-I_{F_{i,n}}) - \mathbb{E}[R_{i,n}(0)(1-I_{E_{i,n}})])\\
    &= \frac{1}{\sqrt{n}}\sum_{i=1}^n(R_{i,n}(0) I[W_{i,n} > q_\alpha] - \check{\mu}_0) - \mathbb{E}[R(0)|W=q_\alpha]F_W'(q_\alpha)\sqrt{n}(W_{(\lceil \alpha n\rceil),n} - q_\alpha) + o_p(1)\\
    &= \frac{1}{\sqrt{n}}\sum_{i=1}^n(R_{i,n}(0) I[W_{i,n} > q_\alpha] - \check{\mu}_0) +\frac{\mathbb{E}[R(0)|W=q_\alpha]F_W'(q_\alpha)}{\sqrt{n}}\sum_{i=1}^n \frac{I[W_i \leq q_\alpha]-\alpha}{F_W'(q_\alpha)} + o_p(1)\\
    &= \frac{1}{\sqrt{n}}\sum_{i=1}^n(R_{i,n}(0) I[W_{i,n} > q_\alpha] - \check{\mu}_0) +\frac{\mathbb{E}[R(0)|W=q_\alpha]}{\sqrt{n}}\sum_{i=1}^n (I[W_i \leq q_\alpha]-\alpha) + o_p(1)
\end{align*}

And, completely analogously, \begin{align*}
    &\frac{1}{\sqrt{n}}\sum_{i=1}^n(R^0_{i,n}(0)(1-I_{F^0_{i,n}}) - \mathbb{E}[R^0_{i,n}(0)(1-I_{E^0_{i,n}})])\\
    &= \frac{1}{\sqrt{n}}\sum_{i=1}^n(R^0_{i,n}(0) I[W^0_{i,n} > q_\alpha] - \check{\mu}_0) +\frac{\mathbb{E}[R(0)|W=q_\alpha]}{\sqrt{n}}\sum_{i=1}^n (I[W^0_i \leq q_\alpha]-\alpha) + o_p(1)
\end{align*}

Therefore, using \Cref{th:quant-vs-ranks}, the LHS of display~\eqref{weighted} may be rewritten as 
\begin{align*}
    &\sqrt{n}\left(\frac{1}{n}\sum_{i=1}^n R_{i,n} I_{F_{i,n}} - \frac{1}{n}\sum_{i=1}^n R^0_{i,n}I_{F^0_{i,n}} - \tau_n\right) + \hat{w}_n \cdot \sqrt{n}\left(\frac{1}{n}\sum_{i=1}^n R_{i,n}(1-I_{F_{i,n}}) - \frac{1}{n}\sum_{i=1}^n R^0_{i,n}(1-I_{F^0_{i,n}})\right)\\
    &= \frac{1}{\sqrt{n}}\sum_{i=1}^n(R_{i,n}(1) I[W_{i,n} \leq q_\alpha] - \mu_t) -\frac{\mathbb{E}[R(1)|W=q_\alpha]}{\sqrt{n}}\sum_{i=1}^n (I[W_i \leq q_\alpha]-\alpha)\\
    &-\left(\frac{1}{\sqrt{n}}\sum_{i=1}^n(R^0_{i,n} I[W^0_{i,n} \leq q_\alpha] - \mathbb{E}[R^0_{i,n} I[W^0_{i,n} \leq q_\alpha]]) -\frac{\mathbb{E}[R(0)|W=q_\alpha]}{\sqrt{n}}\sum_{i=1}^n (I[W^0_i \leq q_\alpha]-\alpha)\right)\\
    &+ \hat{w}_n\Big(\frac{1}{\sqrt{n}}\sum_{i=1}^n(R_{i,n}(0) I[W_{i,n} > q_\alpha] - \check{\mu}_0) +\frac{\mathbb{E}[R(0)|W=q_\alpha]}{\sqrt{n}}\sum_{i=1}^n (I[W_i \leq q_\alpha]-\alpha)\\
    &- \left(\frac{1}{\sqrt{n}}\sum_{i=1}^n(R^0_{i,n}(0) I[W^0_{i,n} > q_\alpha] - \check{\mu}_0) +\frac{\mathbb{E}[R(0)|W=q_\alpha]}{\sqrt{n}}\sum_{i=1}^n (I[W^0_i \leq q_\alpha]-\alpha)\right) \Big) + o_p(1)
\end{align*}

It is not hard to see that \begin{align*}
    &\frac{1}{\sqrt{n}}\sum_{i=1}^n(R_{i,n}(0) I[W_{i,n} > q_\alpha] - \check{\mu}_0) +\frac{\mathbb{E}[R(0)|W=q_\alpha]}{\sqrt{n}}\sum_{i=1}^n (I[W_i \leq q_\alpha]-\alpha)\\
    &- \left(\frac{1}{\sqrt{n}}\sum_{i=1}^n(R^0_{i,n}(0) I[W^0_{i,n} > q_\alpha] - \check{\mu}_0) +\frac{\mathbb{E}[R(0)|W=q_\alpha]}{\sqrt{n}}\sum_{i=1}^n (I[W^0_i \leq q_\alpha]-\alpha)\right)
\end{align*} converges in distribution to a Normal distribution (indeed we will show it's joint asymptotic Normality with the first term shortly), and hence Slutsky's theorem easily shows that the previous display is asymptotically equal to the same thing with $\hat{w}_n$ replaced by $w^*$:
\begin{align*}
    &\frac{1}{\sqrt{n}}\sum_{i=1}^n(R_{i,n}(1) I[W_{i,n} \leq q_\alpha] - \mu_t) -\frac{\mathbb{E}[R(1)|W=q_\alpha]}{\sqrt{n}}\sum_{i=1}^n (I[W_i \leq q_\alpha]-\alpha)\\
    &-\left(\frac{1}{\sqrt{n}}\sum_{i=1}^n(R^0_{i,n} I[W^0_{i,n} \leq q_\alpha] - \mathbb{E}[R^0_{i,n} I[W^0_{i,n} \leq q_\alpha]]) -\frac{\mathbb{E}[R(0)|W=q_\alpha]}{\sqrt{n}}\sum_{i=1}^n (I[W^0_i \leq q_\alpha]-\alpha)\right)\\
    &+ w^*\Big(\frac{1}{\sqrt{n}}\sum_{i=1}^n(R_{i,n}(0) I[W_{i,n} > q_\alpha] - \check{\mu}_0) +\frac{\mathbb{E}[R(0)|W=q_\alpha]}{\sqrt{n}}\sum_{i=1}^n (I[W_i \leq q_\alpha]-\alpha)\\
    &- \left(\frac{1}{\sqrt{n}}\sum_{i=1}^n(R^0_{i,n}(0) I[W^0_{i,n} > q_\alpha] - \check{\mu}_0) +\frac{\mathbb{E}[R(0)|W=q_\alpha]}{\sqrt{n}}\sum_{i=1}^n (I[W^0_i \leq q_\alpha]-\alpha)\right) \Big) + o_p(1)
\end{align*}

 Combining terms from treatment group into one term and from control group into another, we rewrite the above as 
 \begin{align*}
    &\Bigg[\frac{1}{\sqrt{n}}\sum_{i=1}^n(R_{i,n}(1) I[W_{i,n} \leq q_\alpha] - \mu_t) -\frac{\mathbb{E}[R(1)-w^* R(0)|W=q_\alpha]}{\sqrt{n}}\sum_{i=1}^n (I[W_i \leq q_\alpha]-\alpha)\\
    &+ w^*\cdot\frac{1}{\sqrt{n}}\sum_{i=1}^n(R_{i,n}(0) I[W_{i,n} > q_\alpha] - \check{\mu}_0)\Bigg]\\
    &-\Bigg[\frac{1}{\sqrt{n}}\sum_{i=1}^n(R^0_{i,n} I[W^0_{i,n} \leq q_\alpha] - \mathbb{E}[R^0_{i,n} I[W^0_{i,n} \leq q_\alpha]]) -\frac{\mathbb{E}[R(0)-w^*R(0)|W=q_\alpha]}{\sqrt{n}}\sum_{i=1}^n (I[W^0_i \leq q_\alpha]-\alpha)\\
    &+ w^*\cdot \frac{1}{\sqrt{n}}\sum_{i=1}^n(R^0_{i,n}(0) I[W^0_{i,n} > q_\alpha] - \check{\mu}_0)\Bigg]
\end{align*}

As the two bracketed terms are independent, it suffices to show their asymptotic Normality separately and then to sum their variances.

As for the first term, define $\sigma^2_t := \var(R_{i,n}(1) I[W_{i,n} \leq q_\alpha]), \check{\sigma^2_0} := \var(R_{i,n}(0) I[W_{i,n} > q_\alpha])$, we obtain that:

\begin{align*}
    &\frac{1}{\sqrt{n}}\sum_{i=1}^n\begin{pmatrix}
        R_{i,n}(1)I[W_{i,n} \leq q_\alpha] - \mu_t\\
        w^*R_{i,n}(0)I[W_{i,n} > q_\alpha] - w^*\check{\mu}_0\\
        -\mathbb{E}[R(1)-w^*R(0)|W=q_\alpha](I[W_i \leq q_\alpha]-\alpha)
    \end{pmatrix} \\&\overset{d}{\rightarrow} \mathcal{N}\left(0, \Sigma\right)
\end{align*} where $\Sigma$ is

\[\begin{pmatrix}
    \sigma^2_t & -w^*\mu_t\check{\mu}_0 & -\mathbb{E}[R(1)-w^*R(0)|W=q_\alpha]\mu_t(1-\alpha)\\
    -w^*\mu_t\check{\mu}_0 & w^*{}^2\check{\sigma^2_0} & \alpha w^*\mathbb{E}[R(1)-w^*R(0)|W=q_\alpha]\check{\mu}_0\\
    -\mathbb{E}[R(1)-w^*R(0)|W=q_\alpha]\mu_t(1-\alpha) & \alpha w^*\mathbb{E}[R(1)-w^*R(0)|W=q_\alpha]\check{\mu}_0 & \mathbb{E}[R(1)-w^*R(0)|W=q_\alpha]^2\alpha(1-\alpha)
\end{pmatrix}\]

As for the second term, we obtain that 

\begin{align*}
    &\frac{1}{\sqrt{n}}\sum_{i=1}^n\begin{pmatrix}
        R^0_{i,n}I[W^0_{i,n} \leq q_\alpha] - \mu_c\\
        w^*R^0_{i,n}(0)I[W^0_{i,n} > q_\alpha] - w^*\check{\mu}_0\\
        -\mathbb{E}[R(0)-w^*R(0)|W=q_\alpha](I[W^0_i \leq q_\alpha]-\alpha)
    \end{pmatrix} \\&\overset{d}{\rightarrow} \mathcal{N}\left(0, \Sigma'\right)
\end{align*} where $\Sigma'$ is

\[\begin{pmatrix}
    \sigma^2_c & -w^*\mu_c\check{\mu}_0 & -\mathbb{E}[R(0)-w^*R(0)|W=q_\alpha]\mu_c(1-\alpha)\\
    -w^*\mu_c\check{\mu}_0 & w^*{}^2\check{\sigma^2_0} & \alpha w^*\mathbb{E}[R(0)-w^*R(0)|W=q_\alpha]\check{\mu}_0\\
    -\mathbb{E}[R(0)-w^*R(0)|W=q_\alpha]\mu_c(1-\alpha) & \alpha w^*\mathbb{E}[R(0)-w^*R(0)|W=q_\alpha]\check{\mu}_0 & \mathbb{E}[R(0)-w^*R(0)|W=q_\alpha]^2\alpha(1-\alpha)
\end{pmatrix}\]

The continuous mapping theorem (along with the independence that we observed earlier) then says that the asymptotic variance of our estimator is equal to the sum of each term in the above matrices which is
\begin{align*}
    &w^*{}^2\Bigg[2\alpha(1-\alpha)\mathbb{E}[R(0)|W=q_\alpha]^2 + 2\check{\sigma^2_0} - 4\alpha \mathbb{E}[R(0)|W=q_\alpha]\check{\mu}_0\Bigg]\\
    &+ w^*\Bigg[-2(\mu_t+\mu_c)\check{\mu}_0 + 2(\mu_t+\mu_c)\mathbb{E}[R(0)|W=q_\alpha](1-\alpha) + 2\alpha\check{\mu}_0\Big(\mathbb{E}[R(1)|W=q_\alpha] \\ &+ \mathbb{E}[R(0)|W=q_\alpha]\Big) - 2\alpha(1-\alpha)\mathbb{E}[R(0)|W=q_\alpha]\left(\mathbb{E}[R(1)|W=q_\alpha]+\mathbb{E}[R(0)|W=q_\alpha]\right)\Bigg]\\
    &+ \Bigg[\sigma^2_t + \sigma^2_c - 2(1-\alpha)\left(\mathbb{E}[R(1)|W=q_\alpha]\mu_t + \mathbb{E}[R(0)|W=q_\alpha]\mu_c\right) + \alpha(1-\alpha)\left(\mathbb{E}[R(1)|W=q_\alpha]^2 + \mathbb{E}[R(0)|W=q_\alpha]^2\right)\Bigg]
\end{align*}

Now, write \begin{align*}
    A &= \Bigg[2\alpha(1-\alpha)\mathbb{E}[R(0)|W=q_\alpha]^2 + 2\check{\sigma^2_0} - 4\alpha \mathbb{E}[R(0)|W=q_\alpha]\check{\mu}_0\Bigg]
\end{align*}
and \begin{align*}
    B &= \Bigg[-2(\mu_t+\mu_c)\check{\mu}_0 + 2(\mu_t+\mu_c)\mathbb{E}[R(0)|W=q_\alpha](1-\alpha) + 2\alpha\check{\mu}_0\Big(\mathbb{E}[R(1)|W=q_\alpha] \\ &+ \mathbb{E}[R(0)|W=q_\alpha]\Big) - 2\alpha(1-\alpha)\mathbb{E}[R(0)|W=q_\alpha]\left(\mathbb{E}[R(1)|W=q_\alpha]+\mathbb{E}[R(0)|W=q_\alpha]\right)\Bigg]
\end{align*}

So long as we can show that $A > 0$, it is quite clear that differentiating wrt $w^*$ and setting equal to zero, we see that the optimal choice of $w^*$ is then $\frac{-B}{2A}$. Furthermore, it is quite clear how to consistently estimate $w^*$ since we have shown how to consistently estimate each term in $A$ and $B$.

To see that $A > 0$, consider a coupling to our problem of random variables $(\tilde{W}_{i,n}, \tilde{R}_{i,n}(0), \tilde{R}_{i,n}(1))$ and $(\tilde{W}^0_{i,n}, \tilde{R}^0_{i,n}(0), \tilde{R}^0_{i,n}(1))$ where we set \[\tilde{W}_{i,n} = W_{i,n}, \tilde{W}^0_{i,n} = W^0_{i,n}, \tilde{R}_{i,n}(0) = R_{i,n}(0), \tilde{R}^0_{i,n}(0) = R_{i,n}^0(0), \tilde{R}^0_{i,n}(1) = R_{i,n}^0(1),\] but $\tilde{R}_{i,n}(1) \equiv 0$ (i.e., so everything is the same except that we set $\tilde{R}_{i,n}(1) \equiv 0$). It is clear that in this data-generating process, the covariance matrix for \[\begin{pmatrix}
        \tilde{R}_{i,n}(1)I[\tilde{W}_{i,n} \leq q_\alpha] \\
        w^*\tilde{R}_{i,n}(0)I[\tilde{W}_{i,n} > q_\alpha] - w^*\check{\mu}_0\\
        -\mathbb{E}[-w^*R(0)|W=q_\alpha](I[\tilde{W}_i \leq q_\alpha]-\alpha)
    \end{pmatrix}\] is $w^*{}^2\tilde{\Sigma}$
where $\tilde{\Sigma}$ is

\[\begin{pmatrix}
    0 & 0 & 0\\
    0& \check{\sigma^2_0} & \alpha \mathbb{E}[-R(0)|W=q_\alpha]\check{\mu}_0\\
    0 & \alpha \mathbb{E}[-R(0)|W=q_\alpha]\check{\mu}_0 & \mathbb{E}[-R(0)|W=q_\alpha]^2\alpha(1-\alpha)
\end{pmatrix}\]

Noticing that $A = 2 (0,1,1) \tilde{\Sigma} (0,1,1)^\top$ it suffices to show that the bottom right submatrix \[\begin{pmatrix}
\check{\sigma^2_0} & \alpha \mathbb{E}[-R(0)|W=q_\alpha]\check{\mu}_0\\
\alpha \mathbb{E}[-R(0)|W=q_\alpha]\check{\mu}_0 & \mathbb{E}[-R(0)|W=q_\alpha]^2\alpha(1-\alpha)
\end{pmatrix}\] is (strictly) positive definite (observe that, because it is a covariance matrix, it is automatically positive semi-definite). In case $\mathbb{E}[R(0)|W=q_\alpha] = 0$ it is already immediate that $A$ is strictly positive, since $\check{\sigma^2}_0$ is. Thus, consider the case when $\mathbb{E}[R(0)|W=q_\alpha] \neq 0$. Then the diagonal terms of the above matrix are non-zero and by computing the determinant and rearranging, we see that if the determinant is zero, then \[\text{Corr}\left(\tilde{R}_{i,n}(0)I[\tilde{W}_{i,n} > q_\alpha], -\mathbb{E}[-R(0)|W=q_\alpha]I[\tilde{W}_i \leq q_\alpha]\right) = \pm 1,\] which occurs if and only if $\tilde{R}_{i,n}(0)I[\tilde{W}_{i,n} > q_\alpha]$ and $-\mathbb{E}[-R(0)|W=q_\alpha]I[\tilde{W}_i \leq q_\alpha]$ are related in an affine manner, which is not the case per \Cref{as:positive-var-c-mean}. Therefore, the matrix is psd with non-zero determinant, hence positive-definite and $A$ is strictly positive as desired.

\paragraph{Variance estimation}
Define \[C = \Bigg[\sigma^2_t + \sigma^2_c - 2(1-\alpha)\left(\mathbb{E}[R(1)|W=q_\alpha]\mu_t + \mathbb{E}[R(0)|W=q_\alpha]\mu_c\right) + \alpha(1-\alpha)\left(\mathbb{E}[R(1)|W=q_\alpha]^2 + \mathbb{E}[R(0)|W=q_\alpha]^2\right)\Bigg].\] Then with the above choice of $w^*$, the asymptotic variance shown in the previous section is equal to \[\frac{B^2 - 4AC}{-4A}.\] Again, we have already shown how to estimate each term already and hence the variance can be consistently esimated.

\begin{remark}[Optimality]
    Notice that for the base estimator we have $\hat{w}_n \equiv w^* = 1$ and for the subgroup estimator, they have $\hat{w}_n \equiv w^* = 0$. This means that the above theory can be used to applied to those estimators and in particular, shows that our choice of $\hat{w}_n$ is asymptotically always at least as good (and will indeed will result in a \emph{strictly} smaller confidence interval (asymptotically) as compared to both of these approaches so long as $w^* \neq 0$ or $1$).
\end{remark}

\subsection{Putting it Together}
Summarizing and translating to the notation of the main body we arrive at the following. 
Recall that, for any consistent estimator $\hat{w}_n$ of some quantity $w$, we define the \emph{hybrid estimator} as  $$\theta^{\mathrm{hyb}}_{n,\alpha,\hat{w}}(\pi):=(1-\hat{w}_n)\cdot \theta^{\mathrm{SG}}_{n,\alpha}(\pi)+\hat{w}_n\cdot \theta^{\mathrm{base}}_{n,\alpha}(\pi).$$

Let (where $\mathbb{E}$ is taken over $(\mathbf{x},R)\sim P$):
\begin{align*}
    A &= \Bigg[2\alpha(1-\alpha)\mathbb{E}[R(0)|\Upsilon(\mathbf{x})=q_\alpha]^2 + 2\check{\sigma^2_0} - 4\alpha \mathbb{E}[R(0)|\Upsilon(\mathbf{x})=q_\alpha]\check{\mu}_0\Bigg]
\end{align*}
and \begin{align*}
    B &= \Bigg[-2(\mu_t+\mu_c)\check{\mu}_0 + 2(\mu_t+\mu_c)\mathbb{E}[R(0)|\Upsilon(\mathbf{x})=q_\alpha](1-\alpha) + 2\alpha\check{\mu}_0\Big(\mathbb{E}[R(1)|\Upsilon(\mathbf{x})=q_\alpha] \\ &+ \mathbb{E}[R(0)|\Upsilon(\mathbf{x})=q_\alpha]\Big) - 2\alpha(1-\alpha)\mathbb{E}[R(0)|\Upsilon(\mathbf{x})=q_\alpha]\left(\mathbb{E}[R(1)|\Upsilon(\mathbf{x})=q_\alpha]+\mathbb{E}[R(0)|\Upsilon(\mathbf{x})=q_\alpha]\right)\Bigg]
\end{align*} and 
\[C = \Bigg[\sigma^2_t + \sigma^2_c - 2(1-\alpha)\left(\mathbb{E}[R(1)|\Upsilon(\mathbf{x})=q_\alpha]\mu_t + \mathbb{E}[R(0)|\Upsilon(\mathbf{x})=q_\alpha]\mu_c\right) + \alpha(1-\alpha)\left(\mathbb{E}[R(1)|\Upsilon(\mathbf{x})=q_\alpha]^2 + \mathbb{E}[R(0)|\Upsilon(\mathbf{x})=q_\alpha]^2\right)\Bigg],\] 
with $\mu_{i}=\mathbb{E}[R(i))I[\Upsilon(\mathbf{x})\leq q_{\alpha}]$, $\check{\mu_{i}}=\mathbb{E}[R(i))I[\Upsilon(\mathbf{x})> q_{\alpha}]$,  $\sigma^2_i=\mathrm{Var}[R(i)I[\Upsilon(\mathbf{x})\leq q_\alpha]]$ and $\check{\sigma^2_i}=\mathrm{Var}[R(i)I[\Upsilon(\mathbf{x})> q_\alpha]]$ for $i\in \{0,1\}$
\begin{theorem}\label{context3} 
   Under \Cref{second-moment} for $Z=R(0)$ and $Z=R(1)$ and \Cref{as:positive-derivative} for $\Upsilon(\mathbf{x})$ as well as \Cref{regularity}, for any sequence $\hat{w}_n \overset{p}{\rightarrow} w$, we get:  
   $$\sqrt{n}\left(\theta^{\mathrm{hyb}}_{n,\alpha,\hat{w}}(\pi)-\tau^{\mathrm{new}}_{n,\alpha}(\pi)\right)
   \overset{d}{\rightarrow} \mathcal{N}(0, \sigma^2_{\mathrm{hyb}(w)})$$ where 
   \begin{equation}
       \sigma^2_{\mathrm{hyb}(w)}   = \frac{1}{\alpha^2}(w^2A+wB+C)
   \end{equation} 
   We have that $w^*:={\frac{-B}{2A}=\argmin_{w\in \mathbb{R}} \sigma^2_{\mathrm{hyb}(w)}}$. 
   If we additionally, assume that:
    \begin{enumerate}
        \item The functions $w \mapsto \mathbb{E}[R(1)|\Upsilon(\mathbf{x})=w]$ and $w \mapsto \mathbb{E}[R(0)|\Upsilon(\mathbf{x})=w]$ are continuous in a closed neighborhood of $q_\alpha$.
        \item We have that $\var[R(1)|\Upsilon(\mathbf{x})=w]$ and $\var[R(0)|\Upsilon(\mathbf{x})=w]$ are bounded for all $w$ in these neighborhoods.
        \item  $\frac{k}{\sqrt{n}\log n} \rightarrow \infty$ with $k/n \rightarrow 0$,
    \end{enumerate} using the consistent estimators for each term of $A$, $B$, $C$ from \Cref{app:Base2}, we can construct a sequence $\hat{w}^*_n$ for which $\hat{w}^*_n \overset{p}{\rightarrow} w^*$ and also we can derive a consistent estimate $\hat{\sigma}^2_{\mathrm{hyb}(w^*)}$ of $\sigma^2_{\mathrm{hyb}(w^*)}$
\end{theorem}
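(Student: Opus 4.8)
The plan is to assemble \Cref{context3} directly from the weighted-estimator analysis of \Cref{app:Hyb}, translating its conclusions into the main-body notation and supplying the two facts deferred there: the positivity of $A$ and the consistency of the plug-in estimators. First I would note that, after multiplying through by the rescaling factor $n/\lceil\alpha n\rceil\to 1/\alpha$ (which accounts for the leading $1/\alpha^2$ in $\sigma^2_{\mathrm{hyb}(w)}$, exactly as in \Cref{context1,context2}), the centered and $\sqrt n$-scaled hybrid estimator $\sqrt n\,(\theta^{\mathrm{hyb}}_{n,\alpha,\hat w}(\pi)-\tau^{\mathrm{new}}_{n,\alpha}(\pi))$ is precisely the quantity in display~\eqref{weighted}, with the generic limit denoted $w^*$ there playing the role of $w$ here. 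The asymptotic-normality claim then follows from the chain of identities already established in \Cref{app:Hyb}.

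The key steps of that chain, which I would simply invoke, are: (i) expand each of the four arm/treatment contributions via the asymptotic linearization underlying \Cref{theorem4}, replacing the empirical quantile by its standard asymptotic-linear representation (van der Vaart, Corollary~21.5) and absorbing the quantile-estimation error into the $\mathbb{E}[R(i)\mid\Upsilon(\mathbf{x})=q_\alpha]$ terms; (ii) use \Cref{th:quant-vs-ranks} to pass from the rank-based to the quantile-based estimand at negligible ($o_p(1)$) cost; (iii) apply Slutsky's theorem to replace the data-dependent weight $\hat w_n$ by its limit $w$, which is legitimate because the random factor multiplying $\hat w_n$ is itself asymptotically normal, hence $O_p(1)$, so that $(\hat w_n-w)\cdot O_p(1)=o_p(1)$; and (iv) group the policy-arm and control-arm terms into two independent blocks, apply the multivariate CLT to each, and combine via the continuous mapping theorem. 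Summing the two block variances reproduces exactly $w^2 A+wB+C$, yielding $\sigma^2_{\mathrm{hyb}(w)}=\tfrac{1}{\alpha^2}(w^2A+wB+C)$.

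For the optimality claim $w^*=-B/(2A)=\argmin_{w} \sigma^2_{\mathrm{hyb}(w)}$, since $\sigma^2_{\mathrm{hyb}(w)}$ is a quadratic in $w$ with leading coefficient $A/\alpha^2$, it suffices to verify $A>0$, whereupon the minimizer is the vertex $-B/(2A)$. I would establish $A>0$ by the coupling argument of \Cref{app:Hyb}: introduce an auxiliary copy of the data with $\tilde R_{i,n}(1)\equiv 0$, write $A=2\,(0,1,1)\tilde\Sigma(0,1,1)^\top$ for the induced covariance matrix $\tilde\Sigma$, and reduce positivity to strict positive-definiteness of the bottom-right $2\times2$ block. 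That block is a covariance matrix, hence positive semidefinite; its determinant vanishes only if $\tilde R_{i,n}(0)I[\tilde W_{i,n}>q_\alpha]$ and $I[\tilde W_{i,n}\le q_\alpha]$ are affinely related, which \Cref{as:positive-var-c-mean} forbids. This is the one place the extra assumption enters, and the step I expect to require the most care in stating cleanly.

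Finally, for the estimation statements I would collect the consistent estimators of every constituent of $A$, $B$, $C$ from \Cref{app:Base2}: the sample moments $\tfrac1n\sum_{i\in\pi} R_i^{p/c}(\cdot)$ and $\tfrac1n\sum_{i\in\pi} R_i^{p/c}(\cdot)^2$ consistently estimate the $\mu_i,\check\mu_i,\sigma^2_i,\check{\sigma^2_i}$ terms, while the local-average estimators governed by \Cref{cheng} consistently estimate the boundary conditional means $\mathbb{E}[R(i)\mid\Upsilon(\mathbf{x})=q_\alpha]$ under the added continuity, bounded-conditional-variance, and $k/(\sqrt n\log n)\rightarrow\infty$, $k/n\rightarrow 0$ hypotheses. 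The continuous mapping theorem then gives $\hat w^*_n\overset{p}{\rightarrow}w^*$ and $\hat\sigma^2_{\mathrm{hyb}(w^*)}\overset{p}{\rightarrow}\sigma^2_{\mathrm{hyb}(w^*)}$, and a final application of Slutsky's theorem delivers the studentized limit. The genuine obstacle is not any single computation but ensuring the Slutsky replacement of $\hat w_n$ and the $A>0$ positivity argument hold simultaneously for an \emph{arbitrary} consistent weight sequence; everything else is bookkeeping already discharged in \Cref{theorem4} and \Cref{app:Hyb}.
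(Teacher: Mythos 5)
Your proposal is correct and follows essentially the same route as the paper's own proof in the appendix on the hybrid estimator: the same four-term linearization via \Cref{theorem4} and \Cref{th:quant-vs-ranks}, the same Slutsky replacement of $\hat w_n$ justified by the $O_p(1)$ factor, the same two-block multivariate CLT yielding the quadratic $w^2A+wB+C$, the same coupling argument reducing $A>0$ to positive-definiteness of the $2\times 2$ submatrix under \Cref{as:positive-var-c-mean}, and the same plug-in estimators from \Cref{app:Base2} and \Cref{cheng} for $A$, $B$, $C$. You also correctly identify that \Cref{as:positive-var-c-mean} is the one hypothesis entering beyond those listed in the theorem statement.
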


\end{document}